\documentclass{article}
\usepackage[margin=1in]{geometry}
\usepackage{mathtools}
\usepackage{amssymb}
\usepackage{amsthm}
\usepackage{enumitem}
\usepackage[round]{natbib}
\usepackage{bm,mathrsfs}
\usepackage{microtype}
\usepackage{xcolor}
\usepackage{graphicx}
\usepackage{float}
\usepackage{booktabs}
\usepackage{soul}
\usepackage[colorlinks=true,linkcolor=blue,citecolor=blue,urlcolor=blue]{hyperref}

\theoremstyle{plain}
\newtheorem{theorem}{Theorem}[section]
\newtheorem{proposition}{Proposition}[section]
\newtheorem{lemma}{Lemma}[section]
\newtheorem{corollary}{Corollary}[section]
\newtheorem{assumption}{Assumption}[section]

\theoremstyle{definition}
\newtheorem{definition}{Definition}[section]

\theoremstyle{remark}
\newtheorem*{remark}{Remark}

\NewDocumentCommand{\eqs}{s m}{
  \IfBooleanTF{#1}{
    \begin{equation*}
    \begin{aligned}
    #2
    \end{aligned}
    \end{equation*}
  }{
    \begin{equation}
    \begin{aligned}
    #2
    \end{aligned}
    \end{equation}
  }
}

\DeclareMathOperator*{\argmin}{arg\,min}

\newcommand{\Cov}{\operatorname{Cov}}

\newcommand{\1}{\mathbf{1}}
\newcommand{\R}{\mathbb{R}}
\newcommand{\E}{\mathbb{E}}

\newcommand{\Fcal}{\mathcal{F}}

\newcommand{\diag}{\mathrm{diag}}

\newcommand{\ind}[1]{\mathbf{1}\{#1\}}

\newcommand{\N}{\mathcal{N}}

\DeclareMathOperator{\dist}{dist}

\begin{document}

\title{A Nonlinear Time-Series Theory for Constant-step SGD with Momentum}
\author{
    Younghun Lee\textsuperscript{1}, Jake Zummo\textsuperscript{1}, and Wei Biao Wu\textsuperscript{1}\\[0.5em]
    \textit{\textsuperscript{1}Department of Statistics, University of Chicago}
}
\date{October 11, 2026}

\maketitle

\begin{abstract}
Constant-step stochastic gradient descent with momentum (SGDM) augments the parameter recursion with a momentum state that carries geometrically decaying memory of past stochastic gradients. We study this coupled recursion as a nonlinear time series and develop a framework for its stationary behavior and distributional limits. The joint parameter-momentum process forgets its initialization geometrically under a finite $p$th moment for $p > 1$, yielding a unique stationary causal solution. Under a second-moment condition, this contraction leads to an invariance principle and quenched Gaussian limit for averaged SGDM, while a small-learning-rate analysis gives a Gaussian limit for the last iterate. In a stochastic-curvature quadratic model, we further study how the rate of initialization forgetting relates to stationary variance and averaged fluctuations.
\end{abstract}

\section{Introduction}
\label{sec:introduction}

Momentum modifies the dynamics of gradient descent by incorporating past descent directions into each update. Classical heavy-ball and Nesterov methods demonstrate how this mechanism can accelerate deterministic optimization \citep{polyak1964some,nesterov1983method}. Momentum is also widely used in stochastic gradient descent (SGD) \citep{bottou2018optimization}, including neural-network training, where suitable initialization and momentum scheduling can substantially improve training \citep{sutskever2013on}. In SGD with momentum (SGDM), this modification also changes how gradient noise propagates: past noise affects the current parameter and also enters its update through the accumulated momentum. Understanding the resulting stability and fluctuations calls for a joint analysis of the parameter and momentum sequences.

At a constant learning rate, gradient noise can persist in the long run, motivating a description in terms of stationary behavior and distributional limits, as developed for SGD \citep{pflug1986stochastic,dieuleveut2020bridging,Chen2022Stationary}. For momentum, \citet{can2019accelerated} establish accelerated convergence to a stationary law for additive-noise quadratics. Fluctuation limits have also been obtained for decreasing-step stochastic heavy ball \citep{gadat2018stochastic} and for averaged SGDM with diverging mini-batch sizes at a fixed learning rate \citep{tang2023acceleration}. For nonlinear SGDM with a fixed learning rate and sampling distribution, the question remains how to connect stability of the coupled dynamics to dependence control and averaging limits around the stationary mean.

We develop a common framework for the stability and limit theory of constant-step SGDM, building on the SGD analysis of \citet{li2026stochastic} and nonlinear time-series methods \citep{chen2016stability,Wu2011Asymptotic}. For strongly convex population objectives with state-dependent gradient noise, geometric moment contraction (GMC) of the joint parameter--momentum state provides the link from stability to dependence control and averaging limits. 
Under the local Hessian condition and sufficient burn-in, we also obtain a Gaussian limit for the last iterate as the constant step tends to zero at fixed $\beta$, following the ordinary-SGD setting of \citet{wei2025gaussian}. A complementary analysis of quadratics with commuting random curvature examines when momentum accelerates forgetting and how this relates to stationary and averaged fluctuations.

\subsection{Related work}
Geometric contraction connects stationary laws with dependence and limit theory for nonlinear time series \citep{diaconis1999iterated,chen2016stability,Wu2011Asymptotic}. \citet{li2026stochastic} apply this approach to constant-step SGD, obtaining dependence bounds and averaging limits. For momentum, \citet[Theorem~11]{can2019accelerated} prove Wasserstein convergence to a stationary law for additive-noise quadratics at the optimal heavy-ball tuning. We extend the nonlinear SGD approach to the joint parameter and momentum state, with contraction conditions allowing state-dependent noise and finite moments of order $p>1$ at each fixed $\beta<1$.

For additive-noise quadratic momentum, \citet{gitman2019understanding} analyze stationary covariance and its dependence on the learning rate and momentum. \citet{Mou2020On} establish an averaged CLT for fixed-step linear stochastic approximation, including momentum, with a multiplicative-noise covariance correction that also appears in our quadratic variance calculation. They also analyze how momentum affects mixing and the covariance correction. \citet{tang2023acceleration} obtain asymptotic normality for averaged SGDM centered at the minimizer, with diverging mini-batch sizes when the learning rate is fixed. Our averaging results keep the sampling distribution fixed and are centered at the stationary mean. \citet{gadat2018stochastic} and \citet{li2023revisiting} establish momentum CLTs with decreasing step sizes. We instead consider constant learning rate and momentum coefficient and study the last iterate as learning rate tends to zero.

\citet{kidambi2018insufficiency} construct least-squares instances on which no heavy-ball tuning improves the order of SGD. \citet{lee2022trajectory} characterize the convergence rate and limiting loss of constant-step momentum in high-dimensional least squares. For quadratic problems, \citet{bollapragada2025fast} and \citet{dang2024noiseadaptive} give bounds that separate accelerated decay of initialization error from the error level due to gradient noise, under suitable mini-batch conditions. For commuting random curvature, we derive lower bounds on averaged variance for prescribed forgetting rates, give an accelerated tuning under small curvature variance, and bound finite-average mean-square error.

\subsection{Contributions}

We develop a nonlinear time-series framework for the stability and limit theory of constant-step SGDM.

\begin{enumerate}
\renewcommand{\theenumi}{(\roman{enumi})}
\renewcommand{\labelenumi}{\theenumi}
    \item For nonlinear SGDM, we establish a unique stationary causal solution and geometric forgetting of initialization. We prove joint-state $L^p$ GMC for $p>1$ under explicit conditions at each fixed $0\le\beta<1$, allowing infinite variance when $1<p<2$. This contraction also yields geometric decay of functional dependence.
    \item We develop distributional limit theory for averaged and last-iterate SGDM under second moments. At fixed \((\gamma,\beta)\), GMC yields averaging rates, an invariance principle, and a quenched CLT around the stationary mean. Under local gradient linearization, we also establish a quenched Gaussian limit for the last iterate centered at the minimizer as $\gamma\to0$ at fixed $\beta$, after sufficient burn-in.
    The limiting covariance is $\Gamma/(1-\beta)$ on the $\sqrt{\gamma}$ scale.
    \item For quadratics with commuting random curvature, accelerated initialization forgetting requires $\nu=O(\kappa^{-1/2})$ under the curvature and noise conditions of Theorem~\ref{thm:rate-variance-cost}(i). It also entails an $\Omega(\nu\sqrt{\kappa})$ relative increase in averaged limiting variance over the fixed-curvature reference in at least one direction. Under the additional conditions in part~(ii), we give an accelerated tuning with mean-square error bounds for finite averages.
\end{enumerate}

\subsection{Notations}
We use $|\cdot|$ for Euclidean and induced operator norms, $\langle v,w\rangle=v^\top w$, and $\|X\|_p=(\mathbb E|X|^p)^{1/p}$. We write $M^\top$ and $\rho(M)$ for transpose and spectral radius respectively, and $a\wedge b=\min(a,b)$. For symmetric matrices, $M\preceq N$ means $N-M$ is positive semidefinite. We use standard $O/o$ notation; $a\lesssim b$ means $a\le Cb$ and $a\asymp b$ (or $a=\Theta(b)$) means two-sided bounds. Constants may vary and are independent of the limiting variable. The subscripts $\mathbb P$ and ${\rm a.s.}$ indicate probability and almost-sure bounds, and $X_n=O_{L^p}(b_n)$ means $\|X_n\|_p=O(b_n)$. We write $\Rightarrow$ for convergence in distribution and $N(0,\Sigma)$ for a centered Gaussian law with covariance matrix $\Sigma$.

\section{Momentum as a nonlinear AR}
\label{sec:setup}

Let $(x_k)_{k\in\mathbb Z}$ be i.i.d. random elements on a measurable space, which we call innovations. We consider the problem of minimizing the risk function $F(\theta) = \E[f(\theta, x_0)]$ for a sample loss $f(\theta, x)$ parameterized by $\theta \in \R^d$. Define $\theta^\star = \argmin_{\theta \in \R^d} F(\theta)$; we later assume various strong-convexity conditions on $F$ (see Assumption \ref{ass:global} below) to ensure that $\theta^\star$ is uniquely defined. Let $G(\theta) = \nabla_\theta F(\theta)$ and $g(\theta, x) = \nabla_\theta f(\theta, x)$ denote the population and stochastic gradients, respectively. Since the parameter space is unconstrained, first-order conditions imply that $G(\theta^\star) = 0$.

For an initialization $(\theta_0, y_0)$ of parameter and momentum, define $\mathcal{F}_k = \sigma(\theta_0, y_0, x_1, \dots, x_k)$ to be the natural filtration for the SGDM iterates. For the two-sided innovation sequence, write $\mathcal{X}_k=(\ldots,x_{k-1},x_k)$ and let $\mathcal{F}_k^\circ=\sigma(x_j:j\le k)$. For integrable $X$, define the martingale projection $\mathcal{P}_kX=\mathbb{E}[X\mid\mathcal{F}_k^\circ]-\mathbb{E}[X\mid\mathcal{F}_{k-1}^\circ]$.

For this estimation problem, we study SGD with momentum with constant momentum factor $\beta\in[0,1)$ and learning rate $\gamma>0$:
\begin{equation}\label{eq:sgdm}
\begin{aligned}
    y_k &= \beta y_{k-1} - \gamma g(\theta_{k-1},x_k), \\
    \theta_k &= \theta_{k-1}+y_k. 
\end{aligned}
\end{equation}
Here $\theta_k$ is the parameter iterate, and $y_k \in \R^d$ is the momentum sequence. We write $\eta=\gamma/(1-\beta)$ for the normalized learning rate. We can interpret $y_k$ as a $\beta$-weighted exponential moving average of $-\eta g(\theta_{k-1}, x_k)$.

Set $\theta_{-1}:=\theta_0-y_0$.
Eliminating $y_k$  in \eqref{eq:sgdm} gives the nonlinear AR(2) recursion for $k\ge1$,
\begin{equation}\label{eq:nonlinear-ar2}
    \theta_k= (1+\beta)\theta_{k-1}-\beta\theta_{k-2} -\gamma g(\theta_{k-1},x_k).
\end{equation}
This representation places SGDM within the nonlinear autoregressive setting of \citet{chen2016stability}. Under the GMC conditions in Section~\ref{sec:gmc}, the recursion admits a unique stationary causal solution. In our setting, the stationary parameter process $\theta^\circ_k$ has the representation
\begin{equation}\label{eq:nonlinear-system}
    \theta_k^\circ=H(\mathcal{X}_k)=H(\ldots,x_{k-1},x_k),
\end{equation}
for a measurable function $H$ determined by the recursion \eqref{eq:nonlinear-ar2}.
\section{Geometric moment contraction}
\label{sec:gmc}

Geometric moment contraction (GMC) quantifies stability of a stochastic recursion under a synchronous coupling \citep{wu2005nonlinear,Wu2011Asymptotic}. Two trajectories use the same innovations but start from different states. GMC requires their $L^p$ distance to decrease geometrically. At a constant learning rate, fresh gradient noise prevents pathwise convergence to $\theta^\star$ in general \citep{li2026stochastic,wei2025gaussian}. Instead, GMC proves the existence of a stationary solution to the recursion and controls how fast the recursion forgets its initialization.

Let $Z_k=(\theta_k,y_k)$ and write $Z_k=\Phi_{x_k}(Z_{k-1})$, where
\begin{equation*}
    \Phi_x(\theta,y)=\begin{pmatrix}\theta+\beta y-\gamma g(\theta,x)\\\beta y-\gamma g(\theta,x)\end{pmatrix}
\end{equation*}
denotes the one-step transition map in accordance with \eqref{eq:sgdm}. Let $Z_k^z$ denote the recursion started at $z$ and driven by the common innovation sequence.

\begin{definition}[State-space GMC]
\label{def:gmc}
Let $\dist$ be a metric on $\mathbb R^{2d}$.
The SGDM recursion is $L^p$ geometrically moment contracting with respect to $\dist$ if there exists $r\in(0,1)$ such that
\begin{equation*}
\|\dist(\Phi_{x_0}(z), \Phi_{x_0}(z^{\prime}))\|_p \leq r \dist(z, z^{\prime}),
\end{equation*}
for all deterministic $z,z'\in\mathbb R^{2d}$.
\end{definition}

Suppose $L^p$ GMC holds for some $p>1$ with contraction factor $r$ in a norm-induced distance, and $D_p=\|\dist(\Phi_{x_0}(z^\star),z^\star)\|_p<\infty$ where $z^\star=(\theta^\star,0)$.
The iterated-random-function theorem of \citet[Theorem~2]{Wu2004Limit} gives a unique stationary causal solution with a finite $p$th moment. At time $0$, it is given by the $L^p$ limit
\[
    Z_0^\circ = \lim_{m\to\infty} \Phi_{x_0}\circ\Phi_{x_{-1}}\circ\cdots\circ \Phi_{x_{1-m}}(z^\star).
\]
This backward construction fixes the observation time and moves the initialization $m$ steps into the remote past. The limit $Z_0^\circ$ is $\mathcal{F}_0^\circ$-measurable, so that its time shifts define a causal stationary process $(Z_k^\circ)$. The contraction also implies uniqueness and geometric forgetting of every deterministic initialization under a synchronous coupling with $(Z_k^\circ)$. This coupling is used in Section~4 to transfer limit results for the stationary process to trajectories started from deterministic states; see Appendix~\ref{app:stationary-construction}.

The following stochastic-gradient conditions allow us to establish GMC.
\begin{assumption}[Global stochastic-gradient conditions]
\label{ass:global}
For the moment order $p$, the following conditions hold.
\begin{enumerate}[label=(A\arabic*), ref=(A\arabic*)]
    \item \label{ass:strong-monotonicity} There exists $\mu>0$ such that, for all $\theta,\theta'\in\R^d$, $\langle G(\theta)-G(\theta'),\theta-\theta'\rangle\ge\mu|\theta-\theta'|^2$.
    \item \label{ass:unbiased-gradient} $\mathbb E[g(\theta,x_0)]=G(\theta)$ for all $\theta$.
    \item \label{ass:lp-lipschitz-gradient} There is $L_p<\infty$ such that for all $\theta,\theta' \in \R^d$, $\|g(\theta,x_0)-g(\theta',x_0)\|_p\le L_p|\theta-\theta'|$
    \item \label{ass:target-gradient-moment} The stochastic gradient at $\theta^\star$ has a finite $p$th moment: $M_p:=\|g(\theta^\star,x_0)\|_p<\infty$.
\end{enumerate}
\end{assumption}

Assumption~\ref{ass:strong-monotonicity} asserts that $F$ is $\mu$-strongly convex. Assumption~\ref{ass:unbiased-gradient} asserts that we can interchange the expectation in $x$ and gradient in $\theta$ in defining $G(\theta)$. Assumption~\ref{ass:lp-lipschitz-gradient} implies that $G$ is $L_p$-Lipschitz by Jensen's inequality. Assumption~\ref{ass:target-gradient-moment} implies that $g(\theta, x_0)$ is $L^p$ integrable at every $\theta$ by the triangle inequality and \ref{ass:lp-lipschitz-gradient}. 

\subsection{\texorpdfstring{$L^p$ GMC for $p>1$}{Lp GMC for p greater than 1}}

Let $q_{\eta,p}$ be the ordinary-SGD contraction coefficient defined by
{
\begin{equation}\label{eq:sgd-contraction-coefficient}
    q_{\eta,p}^p = \begin{cases}
        1-p\mu\eta+2^{2-p}L_p^p\eta^p, &1<p<2,\\
        (1+\eta L_p)^p-p\eta L_p-p\mu\eta, &p\ge2.
    \end{cases}
\end{equation}
}
For $T_{\eta, x_0}(\theta) = \theta - \eta g(\theta, x_0)$ the ordinary SGD transition map, \citet{li2026stochastic} prove that $\|T_{\eta, x_0}(\theta) - T_{\eta, x_0}(\theta')\|_p \leq q_{\eta,p}|\theta - \theta'|$.
For momentum, the transformed position $u_k=\theta_k+\frac{\beta}{1-\beta}y_k$ satisfies
\[
    u_k=u_{k-1}-\eta g(\theta_{k-1},x_k).
\]
This has the form of an SGD update, but the gradient is evaluated at $\theta_{k-1}$ rather than $u_{k-1}$. For two coupled trajectories, the stochastic-gradient Lipschitz condition bounds the error in the ordinary-SGD comparison by their momentum difference. Appendix~\ref{app:gmc-proofs} gives a nonnegative $2\times2$ matrix $M_{\gamma,\beta,p}$ bounding the conditional $L^p$ norms of the transformed position and momentum differences. Write $r=\rho(M_{\gamma,\beta,p})$. When $r<1$, the matrix's left and right Perron eigenvectors provide weights that combine these two bounds into one $L^p$ contraction. Define the transformed position difference $\Delta u=(\theta-\theta')+\frac{\beta}{1-\beta}(y-y')$. The associated distance is the weighted norm
\begin{equation}\label{eq:weighted-norm}
    \dist_{\mathrm{sf},p}(z,z')^p=|\Delta u|^p+w|y-y'|^p,
\end{equation}
where $w>0$ is the weight defined in Appendix~\ref{app:gmc-distance}. This invertible change of coordinates makes $\dist_{\mathrm{sf},p}$ equivalent to the Euclidean norm on $(\theta,y)$. In particular, $|\theta-\theta'|\le c_\theta\dist_{\mathrm{sf},p}(z,z')$ for a finite, parameter-dependent $c_\theta$.

\begin{theorem}[All-$\beta$ $L^p$ geometric moment contraction]
\label{thm:perron-gmc}
\label{thm:all-beta-lp-gmc}
Suppose Assumption~\ref{ass:global} holds for some $p>1$. Fix $0<\beta<1$ and assume $q_{\eta,p}<1$ and
\begin{equation}
    \beta\eta^2L_p^2 <(1-q_{\eta,p})(1-\beta-\beta\eta L_p).
    \label{eq:perron-condition}
\end{equation}
Then $r<1$ and, for all deterministic $z,z'$,
\begin{equation}
    \|\dist_{\mathrm{sf},p}(\Phi_{x_0}(z),\Phi_{x_0}(z'))\|_p
    \le r\dist_{\mathrm{sf},p}(z,z').
    \label{eq:perron-gmc}
\end{equation}
There is a unique invariant distribution, with a finite $p$th moment, and a unique stationary causal solution $Z_k^\circ$. For every deterministic initialization,
\begin{equation}
\begin{split}
    \|\dist_{\mathrm{sf},p}(Z_k,Z_k^\circ)\|_p
    \le r^k\|\dist_{\mathrm{sf},p}(Z_0,Z_0^\circ)\|_p.
\end{split}
\label{eq:perron-forgetting}
\end{equation}
For every fixed $\beta\in(0,1)$, condition~\eqref{eq:perron-condition} holds for all sufficiently small $\gamma>0$.
\end{theorem}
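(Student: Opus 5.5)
We work in the transformed coordinates $(u,y)$ with $u=\theta+\frac{\beta}{1-\beta}y$. For two deterministic states $z=(\theta,y)$ and $z'=(\theta',y')$, write $\Delta u=u-u'$, $\Delta y=y-y'$, $\Delta\theta=\theta-\theta'$. After one step with common innovation $x_0$,
\[
\Delta u^+=\Delta u-\eta\bigl(g(\theta,x_0)-g(\theta',x_0)\bigr),\qquad \Delta y^+=\beta\Delta y-\gamma\bigl(g(\theta,x_0)-g(\theta',x_0)\bigr).
\]
Since $\Delta\theta=\Delta u-\frac{\beta}{1-\beta}\Delta y$, we split the gradient difference as the SGD map $T_{\eta,x_0}$ evaluated at $u$ and $u'$, plus a correction. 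Concretely,
\[
\Delta u^+=\bigl[T_{\eta,x_0}(u)-T_{\eta,x_0}(u')\bigr]+\eta\Bigl[\bigl(g(u,x_0)-g(\theta,x_0)\bigr)-\bigl(g(u',x_0)-g(\theta',x_0)\bigr)\Bigr].
\]
The first bracket has $L^p$ norm at most $q_{\eta,p}|\Delta u|$ by the Li et al.\ bound. For the second bracket, \ref{ass:lp-lipschitz-gradient} gives $\|g(u,x_0)-g(\theta,x_0)\|_p\le L_p|u-\theta|=L_p\frac{\beta}{1-\beta}|y|$, but this pairing is not in difference form. It is better to group the correction as $g(\theta,x_0)-g(\theta',x_0)$ versus $g(u,x_0)-g(u',x_0)$ applied along the segment. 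A cleaner route is to write $T(u)-T(u')$ minus $\eta[g(\theta)-g(\theta')-g(u)+g(u')]$ and bound this by $\eta\|g(\theta)-g(u)\|_p+\eta\|g(\theta')-g(u')\|_p$. That bound is not a difference either, though. So instead we aim for a bound of the shape $\eta L_p\frac{\beta}{1-\beta}|\Delta y|$: expect $\|\Delta u^+\|_p\le q_{\eta,p}|\Delta u|+\beta\eta L_p/(1-\beta)\cdot|\Delta y|$, using the Lipschitz condition applied to the pair at shifted arguments, so that the error is exactly controlled by the momentum difference, as the paper states. For the momentum, $\|\Delta y^+\|_p\le\beta|\Delta y|+\gamma L_p|\Delta\theta|\le\gamma L_p|\Delta u|+(\beta+\beta\eta L_p)|\Delta y|$, using $\gamma\beta/(1-\beta)=\beta\eta$. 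This produces the nonnegative matrix
\[
M_{\gamma,\beta,p}=\begin{pmatrix} q_{\eta,p} & \frac{\beta\eta L_p}{1-\beta}\\ \gamma L_p & \beta(1+\eta L_p)\end{pmatrix}
\]
acting on the vector $(|\Delta u|,|\Delta y|)$ of deterministic magnitudes.

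Next we show $r=\rho(M)<1$. For a nonnegative $2\times2$ matrix with diagonal entries below $1$, $\rho<1$ holds if and only if $(1-M_{11})(1-M_{22})>M_{12}M_{21}$. Here this reads $(1-q_{\eta,p})(1-\beta-\beta\eta L_p)>\frac{\beta\eta L_p}{1-\beta}\,\gamma L_p=\beta\eta^2L_p^2$, which is exactly \eqref{eq:perron-condition}. Positivity of $1-\beta-\beta\eta L_p$ is implied by that inequality together with $q_{\eta,p}<1$.

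To turn the vector bound into a scalar contraction in $\dist_{\mathrm{sf},p}$, we use the left Perron eigenvector. The matrix is positive, so Perron--Frobenius gives a positive left eigenvector $(1,v)$ with $(1,v)M=r(1,v)$. The delicate point is the $p$th powers: $\dist^p=|\Delta u^+|^p+w|\Delta y^+|^p$. We plan to apply Minkowski row-wise and then a weighted H\"older/convexity step, $(a s+b t)^p\le(a+b)^{p-1}(a s^p+b t^p)$, to each row. This yields a $p$-power matrix inequality whose spectral radius we then show is at most $r^p$, with $w$ chosen from its Perron vector. This is the main obstacle: the weights must be chosen so that the $p$-th power recombination does not lose the factor. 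If the direct H\"older route loses the factor, the fallback is to define the distance as $\max(|\Delta u|/a_1,|\Delta y|/a_2)$ with the right eigenvector $(a_1,a_2)$, and then reconcile it with \eqref{eq:weighted-norm} via the appendix weight. We trust that Appendix~\ref{app:gmc-distance} defines $w$ so that \eqref{eq:perron-gmc} holds with factor exactly $r$.

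Given \eqref{eq:perron-gmc}, the remaining claims follow from earlier results. First, $D_p<\infty$ because $\Phi_{x_0}(z^\star)-z^\star=(-\gamma g(\theta^\star,x_0),-\gamma g(\theta^\star,x_0))$ has $L^p$ norm at most a constant times $\gamma M_p$ by \ref{ass:target-gradient-moment}. \citet[Theorem~2]{Wu2004Limit} then gives the unique stationary causal solution and a unique invariant law with finite $p$th moment. For \eqref{eq:perron-forgetting}, we condition on $\mathcal F_{k-1}$ and use independence of $x_k$ from $(Z_{k-1},Z^\circ_{k-1})$, which lets us apply \eqref{eq:perron-gmc} to frozen states. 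Taking expectations of $p$th powers and iterating gives the bound. Finally, as $\gamma\to0$ with $\beta$ fixed, $\eta\to0$, and we expand both sides. For $p\ge2$, $q^p_{\eta,p}=1-p\mu\eta+O(\eta^2)$; for $1<p<2$, the extra term $2^{2-p}L_p^p\eta^p$ is $o(\eta)$. Hence $1-q_{\eta,p}\sim\mu\eta$, and the right side of \eqref{eq:perron-condition} is of order $\mu\eta(1-\beta)$, while the left side is $O(\eta^2)$. So the condition holds for small $\gamma$.
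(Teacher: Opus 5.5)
Your outer steps agree with the paper's and are fine: the Perron-root criterion $\det(I_2-M)>0$, $D_p<\infty$, stationarity via the iterated-random-function theorem, forgetting by conditioning, and the small-$\eta$ expansion. The two steps that carry the theorem, however, are asserted rather than proved, and the routes you sketch for them fail.

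\textbf{First row of $M$.} Comparing with $T_{\eta,x_0}(u)-T_{\eta,x_0}(u')$ leaves $g(u,x_0)-g(\theta,x_0)-g(u',x_0)+g(\theta',x_0)$. Assumption (A3) bounds this only by $\frac{\beta L_p}{1-\beta}(|y|+|y'|)$ or by $L_p(|\Delta u|+|\Delta\theta|)$, never by $|\Delta y|$. You notice this and then simply postulate the bound. The missing idea is to apply the ordinary-SGD contraction to the pair $(\theta'+\Delta u,\theta')$. Put $\hat h=g(\theta'+\Delta u,x_0)-g(\theta',x_0)$; then $\|\Delta u-\eta\hat h\|_p\le q_{\eta,p}|\Delta u|$. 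Since $\theta-(\theta'+\Delta u)=-\frac{\beta}{1-\beta}\Delta y$, a single use of (A3) gives $\|h-\hat h\|_p\le\frac{\beta L_p}{1-\beta}|\Delta y|$ for $h=g(\theta,x_0)-g(\theta',x_0)$. Minkowski then yields the first row.

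\textbf{From the vector bound to \eqref{eq:perron-gmc}.} This is the more serious gap. Write $M=\begin{pmatrix}q&\alpha\\\zeta&s\end{pmatrix}$ with $q=q_{\eta,p}$, $\alpha=\beta\eta L_p/(1-\beta)$, $\zeta=\gamma L_p$, $s=\beta(1+\eta L_p)$, and let $a=(|\Delta u|,|\Delta y|)^\top$.

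Your H\"older step with unweighted coefficients, $(M_{i1}a_1+M_{i2}a_2)^p\le(M_{i1}+M_{i2})^{p-1}(M_{i1}a_1^p+M_{i2}a_2^p)$, produces the $p$-power matrix $\diag(q+\alpha,\zeta+s)^{p-1}M$. Its spectral radius is not controlled by $r^p$. Already at $p=2$ its $(1,1)$ entry is $(q+\alpha)q=1+\bigl(\tfrac{\beta L_2}{1-\beta}-2\mu\bigr)\eta+O(\eta^2)$. This exceeds $1$ for small $\eta$ whenever $\beta L_2>2\mu(1-\beta)$, so no positive weight $w$ gives any contraction.

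The max-norm fallback does not help. It is not the distance $\dist_{\mathrm{sf},p}$ of the statement, and from row-wise $L^p$ bounds it only contracts up to a factor $2^{1/p}r$. Deferring to the appendix is not a proof.

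What is needed is Jensen with Perron-rescaled weights:
\begin{itemize}
\item Take the right and left Perron vectors $b=(\alpha,r-q)^\top$ and $\ell=(\zeta,r-q)^\top$, and set $\pi_{ij}=M_{ij}b_j/(rb_i)$, which is row-stochastic.
\item Write $(Ma)_i=rb_i\sum_j\pi_{ij}a_j/b_j$, so that $(Ma)_i^p\le r^pb_i^p\sum_j\pi_{ij}(a_j/b_j)^p$.
\item Multiply by $\ell_i/b_i^{p-1}$, sum over $i$, and use $\ell^\top M=r\ell^\top$. This gives $\sum_i\frac{\ell_i}{b_i^{p-1}}(Ma)_i^p\le r^p\sum_j\frac{\ell_j}{b_j^{p-1}}a_j^p$.
\end{itemize}
The ratio of these weights is exactly $w=\frac{\beta^{p-1}}{(1-\beta)^p}\bigl(\frac{r-q}{\eta L_p}\bigr)^{2-p}$. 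Combined with $\mathbb E|\Delta u^+|^p\le(Ma)_1^p$ and $\mathbb E|\Delta y^+|^p\le(Ma)_2^p$, this is \eqref{eq:perron-gmc}.

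Equivalently, your H\"older inequality works once you insert the weights $b_j$: $(\sum_jM_{ij}a_j)^p\le(\sum_jM_{ij}b_j)^{p-1}\sum_jM_{ij}b_j^{1-p}a_j^p$. This gives the $p$-power matrix $r^{p-1}\diag(b)^{p-1}M\diag(b)^{1-p}$, which is similar to $r^{p-1}M$ and so has spectral radius exactly $r^p$.
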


The theorem allows state-dependent noise and, for $1<p<2$, infinite variance. The admissible step depends on $\beta$. At $\beta=0$, ordinary SGD gives the same stationary conclusion using a separate distance. For fixed $\beta$, $r=1-\mu\eta+O_{\beta,p}(\eta^{p\wedge2})$. The stationary conclusions follow from the discussion after Definition~\ref{def:gmc}, since \ref{ass:target-gradient-moment} ensures $D_p<\infty$.

\subsection{An explicit second-moment bound}

At $p=2$, the weight in \eqref{eq:weighted-norm} is $w=\beta/(1-\beta)^2$, so write $\dist_\beta=\dist_{\mathrm{sf},2}$. Expanding its square directly gives the following alternative sufficient condition. 

\begin{corollary}[Explicit $L^2$ GMC condition]
\label{thm:all-beta-gmc}
\label{cor:all-beta-l2-gmc}
Suppose Assumption~\ref{ass:global} holds with $p=2$. For any fixed $0<\beta<1$, assume
\begin{equation}
    0<\gamma<\gamma_{\rm MS}(\beta)
    :=\frac{2\mu(1-\beta)^2}{L_2^2(1+\beta)}.
    \label{eq:all-beta-step}
\end{equation}
Then the one-step bound~\eqref{eq:perron-gmc} and forgetting bound~\eqref{eq:perron-forgetting} hold with $\dist_\beta=\dist_{\mathrm{sf},2}$ and some contraction factor $r_{\rm MS}(\gamma,\beta)\in(0,1)$.
\end{corollary}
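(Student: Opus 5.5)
Fix deterministic $z=(\theta,y)$ and $z'=(\theta',y')$, and write $\delta=\theta-\theta'$, $v=y-y'$ and $\Delta u=\delta+\frac{\beta}{1-\beta}v$. Let $\Delta g=g(\theta,x_0)-g(\theta',x_0)$. The plan is to expand $\E\,\dist_\beta(\Phi_{x_0}(z),\Phi_{x_0}(z'))^2$ exactly and bound it by a quadratic form in $(\delta,v)$. We then show this form is strictly dominated by $\dist_\beta(z,z')^2=|\Delta u|^2+\frac{\beta}{(1-\beta)^2}|v|^2$ when $\gamma<\gamma_{\rm MS}(\beta)$.

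\textbf{Step 1: the new differences.} After one step the differences are
\[
\Delta u^+=\Delta u-\eta\,\Delta g,\qquad v^+=\beta v-\gamma\,\Delta g .
\]
Expanding the squares and using \ref{ass:unbiased-gradient}, we have $\E\Delta g=G(\theta)-G(\theta')=:\Delta G$. By \ref{ass:lp-lipschitz-gradient} with $p=2$, $\E|\Delta g|^2\le L_2^2|\delta|^2$. Hence
\[
\E\,\dist_\beta^{+2}\le |\Delta u|^2+\tfrac{\beta^3}{(1-\beta)^2}|v|^2-2\eta\langle\Delta u,\Delta G\rangle-\tfrac{2\beta^2\gamma}{(1-\beta)^2}\langle v,\Delta G\rangle+\Big(\eta^2+\tfrac{\beta\gamma^2}{(1-\beta)^2}\Big)L_2^2|\delta|^2 .
\]
The weight $w=\beta/(1-\beta)^2$ makes the cross terms combine well. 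Since $\gamma=\eta(1-\beta)$, the $\langle v,\Delta G\rangle$ coefficient is
\[
-\frac{2\eta\beta}{1-\beta}-\frac{2\eta\beta^2}{1-\beta}=-\frac{2\eta\beta(1+\beta)}{1-\beta}.
\]
This still leaves an indefinite term $\langle v,\Delta G\rangle$. It is the main obstacle, because strong monotonicity controls only $\langle\delta,\Delta G\rangle$.

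\textbf{Step 2: handling the cross term.} First rewrite $\Delta u=\delta+\frac{\beta}{1-\beta}v$. The total $\Delta G$-linear part then becomes
\[
-2\eta\langle\delta,\Delta G\rangle-\frac{2\eta\beta(2+\beta)}{1-\beta}\langle v,\Delta G\rangle .
\]
If this does not close, I would instead recheck the exact coefficients, since the threshold's factor $(1+\beta)$ suggests a different cancellation. The main term uses \ref{ass:strong-monotonicity}: $-2\eta\langle\delta,\Delta G\rangle\le-2\eta\mu|\delta|^2$. The $v$ cross term I would bound by Young's inequality together with $|\Delta G|\le L_2|\delta|$, namely
\[
2ab\le \epsilon a^2+\epsilon^{-1}b^2 .
\]
Part of the cross term is charged to the slack $\frac{\beta(1-\beta^2)}{(1-\beta)^2}|v|^2$ that the momentum weight loses in one step, and part to the $\mu$-gain on $|\delta|^2$. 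The resulting form is
\[
\E\,\dist_\beta^{+2}\le \dist_\beta^2-Q(\delta,v),
\]
with $Q$ a quadratic form. It remains to show $Q\succ0$ exactly when $\gamma<\gamma_{\rm MS}$. At the threshold, the optimal $\epsilon$ should make the determinant condition reduce to
\[
2\mu(1-\beta)^2>\gamma L_2^2(1+\beta),
\]
which is the same as $\gamma<\gamma_{\rm MS}(\beta)$.

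\textbf{Step 3: conclusion.} Since $\dist_\beta^2$ is a positive definite quadratic form in $(\delta,v)$ (via the invertible map $(\delta,v)\mapsto(\Delta u,v)$), strict positivity of $Q$ gives $Q\ge c\,\dist_\beta^2$ for some $c\in(0,1]$. Taking $r_{\rm MS}=\sqrt{1-c}$ yields the one-step bound \eqref{eq:perron-gmc}; if $c=1$, shrink $c$ slightly to keep $r_{\rm MS}>0$. The forgetting bound \eqref{eq:perron-forgetting} then follows as in Theorem~\ref{thm:all-beta-lp-gmc}. Conditioning on $\mathcal F_{k-1}$, the innovation $x_k$ is independent of the two current states, so the one-step bound applies conditionally. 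Iterating along the synchronous coupling with $Z_k^\circ$ gives the geometric decay, and existence of $Z^\circ$ follows from the discussion after Definition~\ref{def:gmc}.
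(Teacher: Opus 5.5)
Your route is essentially the paper's proof. Both expand $\dist_\beta^2=|\Delta u|^2+\frac{\beta}{(1-\beta)^2}|v|^2$ exactly, apply strong monotonicity to $\langle\delta,\Delta G\rangle$, and bound the remaining cross term by $L_2|\delta||v|$. Both then reduce to positive definiteness of a $2\times 2$ form in $(|\delta|,|v|)$: your optimized Young's inequality is the paper's determinant test on $K_{\gamma,\beta}$. The final comparison with $\dist_\beta^2$ works by norm equivalence as you propose; the paper uses $w^\top P_\beta w\ge \dist_\beta^2$ instead. Steps 1 and 3 are fine.

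\textbf{The problem is in Step 2.} Your Step 1 coefficient $-\frac{2\eta\beta(1+\beta)}{1-\beta}$ already contains the $v$-part of $-2\eta\langle\Delta u,\Delta G\rangle$. Substituting $\Delta u=\delta+\frac{\beta}{1-\beta}v$ a second time double counts it and produces the factor $(2+\beta)$.

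\textbf{Why it matters.} If you carry $(2+\beta)$ forward, the determinant test only gives
\[
\gamma<\frac{2\mu(1+\beta)(1-\beta)^2}{L_2^2(1+5\beta+3\beta^2)},
\]
which is strictly smaller than $\gamma_{\rm MS}(\beta)$ for $\beta>0$. So the claimed threshold would not follow.

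\textbf{The fix.} With the correct coefficient from Step 1, the decrement is bounded below by
\[
\Bigl(\tfrac{2\mu\gamma}{1-\beta}-\tfrac{(1+\beta)L_2^2\gamma^2}{(1-\beta)^2}\Bigr)|\delta|^2-\tfrac{2\beta(1+\beta)L_2\gamma}{(1-\beta)^2}|\delta||v|+\tfrac{\beta(1+\beta)}{1-\beta}|v|^2 .
\]
Its determinant factors as
\[
\frac{\beta(1+\beta)\gamma}{(1-\beta)^4}\bigl(2\mu(1-\beta)^2-(1+\beta)L_2^2\gamma\bigr).
\]
This is positive exactly when $\gamma<\gamma_{\rm MS}(\beta)$. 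Since the $|v|^2$ coefficient is positive, the form is then positive definite. You should write out this computation rather than assert that the optimal $\epsilon$ ``should'' give the threshold, because it is the only substantive step in the corollary.
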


Appendix~\ref{app:stability-comparision} compares the stability regions, and Appendix~\ref{app:stationary-construction} constructs the stationary law.

\section{Averaged SGDM and limit theory}
\label{sec:averaging-fclt}

Fix $(\gamma,\beta)$ and assume $L^p$ GMC with $D_p<\infty$ for some $p>1$. Let $Z_k^\circ=(\theta_k^\circ,y_k^\circ)$ have the unique stationary law. Define its mean and the empirical average of a process started from an arbitrary deterministic state 
\[
\theta_\infty^\star(\gamma, \beta) = \E[\theta_0^\circ],\qquad
\overline\theta_n = \frac{1}{n} \sum_{k=1}^n \theta_k.
\]
The results in this section concern stochastic fluctuations of the average around the stationary mean.
At a fixed learning rate, this mean $\theta^\star_\infty(\gamma, \beta)$ can differ from the minimizer $\theta^\star$, as already occurs for nonlinear SGD \citep{dieuleveut2020bridging}. The averaged CLT therefore describes estimation of the stationary mean.

\subsection{Physical dependence inherited from GMC}

Following the nonlinear autoregressive framework of \citet{chen2016stability}, we quantify temporal dependence through the effect of a local perturbation in the innovation sequence.
Following \eqref{eq:nonlinear-system}, let $x_0'$ be an independent copy of $x_0$, independent of the full innovation sequence. Let $\mathcal{X}_k^{(0)} = (\dots x_{-1}, x_0', x_1, \dots, x_k)$ denote the perturbed innovation sequence, and define the physical dependence coefficient
\begin{equation*}
    \delta_{k,p}=\|\theta_k^\circ-H(\mathcal{X}_k^{(0)})\|_p, \qquad k\ge0.
\end{equation*}
This coefficient measures the effect of the innovation at time \(0\) on the parameter iterate \(k\) steps later.

\begin{proposition}[Functional dependence measure]
\label{prop:physical-dependence}
Under Theorem~\ref{thm:perron-gmc} for $p>1$, or Corollary~\ref{thm:all-beta-gmc} or Proposition~\ref{thm:curvature-rate} for $p=2$, write $\dist$ for the corresponding norm distance and $r<1$ for its contraction factor. Then, with $C_p\le2c_\theta D_p/(1-r)$, where $D_p=\|\dist(\Phi_{x_0}(z^\star),z^\star)\|_p$ and $z^\star=(\theta^\star,0)$,
\begin{equation}
    \delta_{k,p}\le C_p r^k, \qquad k\ge0. \label{eq:exponential-physical-dependence}
\end{equation}
\end{proposition}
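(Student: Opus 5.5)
The plan is a synchronous coupling: the perturbed and unperturbed stationary trajectories differ only through the state at time $0$, and GMC contracts that difference geometrically. Write $Z_k^\circ=\tilde H(\mathcal X_k)$ for the stationary joint state, so $\theta_k^\circ$ is its first block, and set $Z_k^{\circ\prime}=\tilde H(\mathcal X_k^{(0)})$. Both processes agree up to time $-1$, since $Z_{-1}^\circ$ depends only on $(x_j)_{j\le -1}$. At time $0$ they differ only because $Z_0^\circ=\Phi_{x_0}(Z_{-1}^\circ)$ while $Z_0^{\circ\prime}=\Phi_{x_0'}(Z_{-1}^\circ)$. For $k\ge1$ both follow $Z_k=\Phi_{x_k}(Z_{k-1})$ with the same innovations $x_1,\dots,x_k$. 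The backward-limit representation from the discussion after Definition~\ref{def:gmc} justifies this: $\tilde H$ applied to the perturbed sequence is the $L^p$ limit of the same compositions with $x_0$ replaced by $x_0'$.

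\textbf{Step 1 (propagation).} Since $x_k$ is independent of $\mathcal F_{k-1}$-type information, namely $(Z_{k-1}^\circ,Z_{k-1}^{\circ\prime})$, which is a function of $(x_j)_{j\le k-1}$ and $x_0'$, we can condition on the past. The one-step bound \eqref{eq:perron-gmc}, valid for deterministic pairs, then applies conditionally. This gives
\[
\mathbb E\bigl[\dist(Z_k^\circ,Z_k^{\circ\prime})^p\mid Z_{k-1}^\circ,Z_{k-1}^{\circ\prime}\bigr]\le r^p\dist(Z_{k-1}^\circ,Z_{k-1}^{\circ\prime})^p .
\]
Taking expectations and iterating yields $\|\dist(Z_k^\circ,Z_k^{\circ\prime})\|_p\le r^k\|\dist(Z_0^\circ,Z_0^{\circ\prime})\|_p$. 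The same argument covers Corollary~\ref{thm:all-beta-gmc} and the curvature proposition, since only a one-step $L^p$ contraction in a norm distance is used.

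\textbf{Step 2 (initial gap).} By the triangle inequality,
\[
\dist(Z_0^\circ,Z_0^{\circ\prime})\le \dist(\Phi_{x_0}(Z_{-1}^\circ),\Phi_{x_0}(z^\star))+\dist(\Phi_{x_0}(z^\star),z^\star)+\dist(z^\star,\Phi_{x_0'}(z^\star))+\dist(\Phi_{x_0'}(z^\star),\Phi_{x_0'}(Z_{-1}^\circ)).
\]
Conditioning on $Z_{-1}^\circ$, which is independent of both $x_0$ and $x_0'$, the first and last terms have $L^p$ norm at most $r\|\dist(Z_{-1}^\circ,z^\star)\|_p$. The middle two terms each have $L^p$ norm $D_p$. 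Next we need a bound on $\|\dist(Z_0^\circ,z^\star)\|_p$, which by stationarity equals $\|\dist(Z_{-1}^\circ,z^\star)\|_p$. From $Z_0^\circ=\Phi_{x_0}(Z_{-1}^\circ)$ and the same conditioning, this quantity $A$ satisfies $A\le rA+D_p$, so $A\le D_p/(1-r)$. Finiteness of $A$ comes from the finite $p$th moment of the stationary law. Combining,
\[
\|\dist(Z_0^\circ,Z_0^{\circ\prime})\|_p\le 2D_p+2rD_p/(1-r)=2D_p/(1-r).
\]

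\textbf{Step 3 (projection to $\theta$).} Use $|\theta-\theta'|\le c_\theta\dist(z,z')$ to conclude $\delta_{k,p}\le c_\theta r^k\cdot 2D_p/(1-r)$, which is the stated $C_p$.

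The main obstacle is the measure-theoretic justification in Step 1, namely the conditional application of a bound stated for deterministic $z,z'$. I would handle it with the independence of $x_k$ from $(Z_{k-1}^\circ,Z_{k-1}^{\circ\prime})$ and Fubini. I would also need the identification of $H(\mathcal X_k^{(0)})$ with the coupled trajectory. That follows by passing the a.s./$L^p$ backward limit through the continuous maps $\Phi_{x_j}$, $j\ge1$; continuity holds in $L^p$ by GMC itself.
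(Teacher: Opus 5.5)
Your proposal is correct and follows essentially the same route as the paper: couple the perturbed and unperturbed stationary states through the common innovations $x_1,\dots,x_k$, iterate the one-step GMC bound, and project to the $\theta$-block with $c_\theta$. The only difference is cosmetic. For the initial gap, the paper uses $Z_0^{(0)}\stackrel{d}{=}Z_0^\circ$ to bound $\|\dist(Z_0^\circ,Z_0^{(0)})\|_p\le 2\|\dist(Z_0^\circ,z^\star)\|_p$ and then cites the stationary moment bound $D_p/(1-r)$, whereas you step back to $Z_{-1}^\circ$ and rederive that bound via $A\le rA+D_p$, reaching the same constant $2D_p/(1-r)$.
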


Here $c_\theta$ is the finite projection constant satisfying $|\theta-\theta'|\le c_\theta \dist((\theta,y),(\theta',y'))$; its finiteness follows from equivalence of the state-space norm with the Euclidean norm.
Hence the tail sum of the functional dependence coefficients satisfies
$\Theta_{m,p}:=\sum_{k=m}^\infty\delta_{k,p}\le\frac{C_p r^m}{1-r}$.
For \(U_k=\theta_k^\circ-\theta_\infty^\star(\gamma,\beta)\), Wu's projection inequality gives the bound $\|\mathcal P_0U_k\|_p\le \delta_{k,p}$ \citep[Theorem~1(ii)]{wu2005nonlinear}.
This gives summable projective dependence. The resulting partial-sum moment and almost-sure bounds follow from \citet{Wu2007Strong}, while for \(p=2\) we prove the invariance principle using a martingale approximation and the vector martingale FCLT.

\subsection{Averaging rates}

Summable dependence gives bounds on the sampling fluctuations of the average. The resulting rates match those for ordinary constant-step SGD \citep{li2026stochastic}.

\begin{theorem}[Averaged SGDM]
\label{thm:averaged-sgdm}
Assume $L^p$ GMC for some $p>1$ and $D_p<\infty$. For every deterministic initial state $z_0=(\theta_0,y_0)\in\R^{2d}$,
\begin{equation*}
    \|\bar\theta_n-\theta_\infty^\star(\gamma,\beta)\|_p=O\left(n^{1/(p\wedge2)-1}\right).
\end{equation*}
Moreover,
\begin{equation*}
    \bar\theta_n-\theta_\infty^\star(\gamma,\beta)=
    \begin{cases}
        o_{\rm a.s.}(n^{1/p-1}), & 1<p<2,\\
        O_{\rm a.s.}\left(\sqrt{\frac{\log\log n}{n}}\right), & p\ge2.
    \end{cases}
\end{equation*}
\end{theorem}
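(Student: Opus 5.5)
The proof splits $\bar\theta_n-\theta_\infty^\star(\gamma,\beta)$ into a stationary part and an initialization part:
\[
\bar\theta_n-\theta_\infty^\star(\gamma,\beta)=\frac1n\sum_{k=1}^n U_k+\frac1n\sum_{k=1}^n(\theta_k-\theta_k^\circ),\qquad U_k=\theta_k^\circ-\theta_\infty^\star(\gamma,\beta).
\]
Here $(\theta_k)$ and $(\theta_k^\circ)$ are driven by the same innovations, as in the synchronous coupling described after Definition~\ref{def:gmc}. The initialization part is handled by the forgetting bound~\eqref{eq:perron-forgetting} together with the projection constant $c_\theta$, which give $\|\theta_k-\theta_k^\circ\|_p\le c_\theta r^k\|\dist(z_0,Z_0^\circ)\|_p$. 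The last norm is finite because $Z_0^\circ$ has a finite $p$th moment. Summing over $k$ gives
\[
\Big\|\sum_{k\ge1}|\theta_k-\theta_k^\circ|\Big\|_p\le \frac{c_\theta\|\dist(z_0,Z_0^\circ)\|_p}{1-r}<\infty.
\]
So the initialization part is $O_{L^p}(n^{-1})$. The sum $\sum_k|\theta_k-\theta_k^\circ|$ is an a.s.\ finite random variable, so the same part is also $O_{\rm a.s.}(n^{-1})$. This is negligible against every stated rate, since $n^{1/p-1}\gg n^{-1}$ for $p>1$ and $\sqrt{\log\log n/n}\gg n^{-1}$.

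For the stationary part I use the martingale decomposition $U_k=\sum_{j\le k}\mathcal P_jU_k$. This is valid in $L^p$ because $U_k$ is $\mathcal F_k^\circ$-measurable and centered. The tail $\mathbb E[U_k\mid\mathcal F^\circ_{-m}]\to0$ follows from Bernoulli shift ergodicity, or more directly from $\|\mathbb E[U_k\mid\mathcal F^\circ_{k-m}]\|_p\le\Theta_{m,p}\to0$. By stationarity and Wu's projection inequality, $\|\mathcal P_jU_k\|_p\le\delta_{k-j,p}$. Regrouping gives $S_n=\sum_{k=1}^nU_k=\sum_{j\le n}\sum_{k=1\vee j}^n\mathcal P_jU_k$, a sum of martingale differences indexed by $j$.

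The $L^p$ bound follows from Burkholder's inequality. For $p\ge2$ it is used in the form $\|\sum_j D_j\|_p^2\le C_p\sum_j\|D_j\|_p^2$. For $1<p<2$ the von Bahr--Esseen/Burkholder form $\|\sum_j D_j\|_p^p\le C_p\sum_j\|D_j\|_p^p$ is used instead. The needed input is the bound
\[
\Big\|\sum_{k=1\vee j}^n\mathcal P_jU_k\Big\|_p\le\sum_{i\ge0}\delta_{i,p}=\Theta_{0,p}.
\]
This takes care of indices $j\in\{1,\dots,n\}$. For $j\le0$ the bound improves to $\Theta_{1-j,p}\le C r^{1-j}$, which is summable. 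Together these give $\|S_n\|_p\lesssim \Theta_{0,p}\,n^{1/(p\wedge2)}$, and dividing by $n$ yields the $L^p$ rate. Alternatively, one can cite \citet{Wu2007Strong}, which states exactly this moment bound under $\Theta_{0,p}<\infty$.

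For the almost-sure statements, Proposition~\ref{prop:physical-dependence} gives $\Theta_{m,p}=O(r^m)$. This decay is far stronger than the summability-type conditions in \citet{Wu2007Strong}, so I invoke those results directly.
\begin{itemize}
\item For $1<p<2$, Wu's Marcinkiewicz--Zygmund strong law for stationary causal processes with $\sum_k\delta_{k,p}<\infty$ gives $S_n=o_{\rm a.s.}(n^{1/p})$. A self-contained route is also available: apply the martingale approximation $S_n=M_n+R_n$, where $M_n=\sum_{k=1}^nD_k$ and $D_k=\sum_{i\ge0}\mathcal P_kU_{k+i}$. The martingale MZ strong law for stationary $L^p$ differences gives $M_n=o_{\rm a.s.}(n^{1/p})$. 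It remains to show the remainder $R_n$ is $o_{\rm a.s.}(n^{1/p})$. Its $L^p$ norm is bounded uniformly in $n$ by sums of tails $\Theta_{m,p}$, so a dyadic-block maximal argument finishes the proof.
\item For $p\ge2$, the same approximation with $p=2$ reduces the claim to the LIL for stationary ergodic martingales with square-integrable differences (Stout). This gives $M_n=O_{\rm a.s.}(\sqrt{n\log\log n})$. The remainder is again bounded almost surely using geometric decay and Borel--Cantelli along $n=2^\ell$ together with a maximal inequality.
\end{itemize}

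The main obstacle is the remainder control in the almost-sure parts, namely passing from $L^p$ bounds on $R_n$ to a.s.\ bounds on $\max_{m\le n}|R_m|$. I would rely on \citet{Wu2007Strong}'s theorems, whose hypotheses, stated via $\Theta_{m,p}$, are implied by the geometric decay from Proposition~\ref{prop:physical-dependence}. Finally, the initialization coupling transfers both the $L^p$ and a.s.\ statements to every deterministic $z_0$.
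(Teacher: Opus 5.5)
Your proposal is correct and matches the paper's proof. It uses the same split into the stationary sum plus the coupling remainder $\frac1n\sum_k(\theta_k-\theta_k^\circ)$, which GMC makes summable in $L^p$ and almost surely, and handles the stationary part by citing \citet{Wu2007Strong} under the geometric decay $\Theta_{m,p}=O(r^m)$; your optional Burkholder and martingale-approximation routes are also sound, and the remainder you flag as the main obstacle is in fact easy: it is the coboundary $H_0-H_n$ with $H_k=\sum_{j\ge1}\mathbb E(U_{k+j}\mid\mathcal F_k^\circ)$ stationary in $L^p$, so Borel--Cantelli gives $H_n=o_{\rm a.s.}(n^{1/p})$ directly.
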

With fixed $(\gamma,\beta)$, averaging recovers the stationary mean $\theta_\infty^\star(\gamma,\beta)$ even while individual iterates continue to fluctuate. The almost-sure rates describe convergence along a single SGDM trajectory. In particular, a finite moment of order $1<p<2$ suffices, so strong consistency of the average does not require finite variance.

\subsection{Invariance principle and quenched CLT}

An invariance principle describes the joint fluctuations of partial sums over time and yields distributional limits for functionals such as their maximum norm. Functional and conditional limit theory for dependent sequences is developed in \citet{Dedecker2003Conditional,Wu2011Asymptotic}.

Suppose now that $p=2$ and define the long-run covariance
\begin{equation*}
    \Sigma_{\gamma,\beta}^{\rm av}=\sum_{h\in\mathbb{Z}}\Cov(\theta_0^\circ,\theta_h^\circ).
\end{equation*}
The series is absolutely convergent entrywise under \eqref{eq:exponential-physical-dependence}.

\begin{theorem}[Invariance principle]
\label{thm:fclt}
Let $B$ be a standard $d$-dimensional Brownian motion. Assume $L^2$ GMC and $\mathbb{E}|\theta_0^\circ|^2<\infty$. For every deterministic initialization,
\begin{equation}\label{eq:invariance_principle}
    \left(\frac1{\sqrt n}\sum_{k=1}^{\lfloor ns\rfloor}\left(\theta_k-\theta_\infty^\star(\gamma,\beta)\right):0\le s\le1\right)\Rightarrow\left((\Sigma_{\gamma,\beta}^{\rm av})^{1/2}B(s):0\le s\le1\right)
\end{equation}
in $D([0,1],\R^d)$ as $n \rightarrow \infty$.
\end{theorem}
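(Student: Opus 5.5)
We prove the invariance principle first for the stationary process $U_k=\theta_k^\circ-\theta_\infty^\star(\gamma,\beta)$ by a martingale approximation. We then transfer it to an arbitrary deterministic initialization through the synchronous coupling of Theorem~\ref{thm:perron-gmc}, or of Corollary~\ref{thm:all-beta-gmc}.

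\emph{Step 1: martingale decomposition.} The process $U_k$ is stationary, centered and $\mathcal F_k^\circ$-measurable. By Proposition~\ref{prop:physical-dependence} and Wu's projection inequality, $\|\mathcal P_0U_k\|_2\le\delta_{k,2}\le C_2r^k$, so $\sum_{k\ge0}\|\mathcal P_0U_k\|_2<\infty$. Define
\[
D_j=\sum_{k\ge j}\mathcal P_jU_k .
\]
The series converges in $L^2$, and $(D_j)$ is a stationary, square-integrable martingale difference sequence with respect to $(\mathcal F_j^\circ)$. Write $S_m=\sum_{k=1}^mU_k$ and $M_m=\sum_{j=1}^mD_j$. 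Using $U_k=\sum_{j\le k}\mathcal P_jU_k$, the standard argument of \citet{Wu2007Strong} gives $\|S_m-M_m\|_2\le C$ uniformly in $m$. This uses the geometric tail $\Theta_{m,2}\le C_2r^m/(1-r)$ together with the remainder term $\E[S_m\mid\mathcal F_0^\circ]$, which is bounded by $\sum_{k}\sum_{j\le 0}\|\mathcal P_jU_k\|_2\le\sum_{i}i\,\delta_{i,2}<\infty$.

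For the functional statement we need the maximal bound $\max_{m\le n}|S_m-M_m|=o_{\mathbb P}(\sqrt n)$. Write $S_m-M_m=R_m$ with $R_m=\E[S_m\mid\mathcal F_0^\circ]-\sum_{j=1}^m\sum_{k>m}\mathcal P_jU_k$. Each term has $L^2$ norm bounded by a fixed constant, and it is a function of a stationary array with geometric dependence. I expect this to be the main technical point. I would first try a union bound over a dyadic grid combined with the geometric decay. The simplest route is to bound $\max_{m\le n}|R_m|$ by $\sum_{m\le n}|R_m|\ind{|R_m|>\epsilon\sqrt n}$-type arguments, using uniform integrability of $|R_m|^2$, which follows from stationarity of the underlying tail sums. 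An alternative is to cite \citet[Theorem~1]{Wu2007Strong}, which gives the approximation at rate $o_{\rm a.s.}(\sqrt n)$ directly.

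\emph{Step 2: martingale FCLT.} The $D_j$ are stationary and ergodic, being functions of an i.i.d.\ shift. By Birkhoff's theorem,
\[
\frac1n\sum_{j\le n}D_jD_j^\top\to\E[D_0D_0^\top]\quad\text{a.s.}
\]
The conditional Lindeberg condition follows from stationarity and $\E|D_0|^2<\infty$. The vector martingale FCLT therefore gives $n^{-1/2}M_{\lfloor ns\rfloor}\Rightarrow\Sigma^{1/2}B(s)$ with $\Sigma=\E[D_0D_0^\top]$. Next we identify $\Sigma=\Sigma^{\rm av}_{\gamma,\beta}$. By Step 1, $n^{-1}\Cov(S_n)-n^{-1}\Cov(M_n)\to0$. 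Since the autocovariances are absolutely summable, $n^{-1}\Cov(S_n)\to\sum_h\Cov(\theta_0^\circ,\theta_h^\circ)$. Combining this with Step 1, the stationary version of \eqref{eq:invariance_principle} follows in $D([0,1],\R^d)$.

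\emph{Step 3: transfer to deterministic initialization.} Run $Z_k$ from $z_0$ and $Z_k^\circ$ with the same innovations $x_1,x_2,\dots$. By \eqref{eq:perron-forgetting} and the projection constant $c_\theta$,
\[
\|\theta_k-\theta_k^\circ\|_2\le c_\theta r^k\|\dist(Z_0,Z_0^\circ)\|_2 ,
\]
which is finite because $Z_0^\circ$ has a finite second moment. It follows that
\[
\Big\|\sup_{s}\tfrac{1}{\sqrt n}\Big|\sum_{k\le ns}(\theta_k-\theta_k^\circ)\Big|\Big\|_2\le\tfrac{1}{\sqrt n}\sum_k\|\theta_k-\theta_k^\circ\|_2=O(n^{-1/2}).
\]
The two partial-sum processes are therefore uniformly close in probability, and the limit carries over by Slutsky's lemma in the Skorokhod space.
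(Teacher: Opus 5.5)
Your proposal follows the paper's route. Your $D_j=\sum_{k\ge j}\mathcal P_jU_k$ coincides with the paper's Gordin-type difference $U_j+H_j-H_{j-1}$, where $H_j=\sum_{i\ge1}\mathbb E(U_{j+i}\mid\mathcal F_j^\circ)$, and the remaining steps match: ergodic theorem plus vector martingale FCLT, identification of the covariance, and transfer through the GMC coupling.

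The maximal step you flag as the main difficulty is easier than you expect, because your remainder is exactly $R_m=S_m-M_m=H_0-H_m$ with $(H_m)$ stationary in $L^2$. The union bound $\mathbb P(\max_{0\le m\le n}|H_m|>\epsilon\sqrt n/2)\le(n+1)\mathbb P(|H_0|>\epsilon\sqrt n/2)\to0$ then closes it. This is precisely the uniform-integrability argument you sketch and the one the paper uses, so neither the dyadic grid nor the citation to Wu's strong approximation is needed.
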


Projecting the limit \eqref{eq:invariance_principle} to $s=1$ gives the central limit theorem for averaged SGDM.

\begin{corollary}[Quenched CLT for averaged SGDM]
\label{coro:average-clt}
Under the conditions of Theorem~\ref{thm:fclt}, from every deterministic initial parameter iterate and momentum,
\begin{equation*}
    \sqrt{n}\left(\bar\theta_n-\theta_\infty^\star(\gamma,\beta)\right)\Rightarrow N(0,\Sigma_{\gamma,\beta}^{\rm av}).
\end{equation*}
\end{corollary}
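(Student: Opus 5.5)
The corollary follows from Theorem~\ref{thm:fclt} by the continuous mapping theorem. The plan has two steps: evaluate the invariance principle at $s=1$, then check that the resulting functional is the centered average. Write
\[
S_n(s)=\frac{1}{\sqrt n}\sum_{k=1}^{\lfloor ns\rfloor}\bigl(\theta_k-\theta_\infty^\star(\gamma,\beta)\bigr),
\]
so that $S_n(1)=\sqrt n\,(\bar\theta_n-\theta_\infty^\star(\gamma,\beta))$, since $\lfloor n\cdot 1\rfloor=n$.

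\emph{Step 1 (continuity of evaluation).} The evaluation map $\pi_1:D([0,1],\R^d)\to\R^d$, $x\mapsto x(1)$, is continuous in the Skorokhod $J_1$ topology at every path, not only at continuous ones. The reason is that every time change $\lambda$ used in the $J_1$ metric fixes the endpoint, $\lambda(1)=1$. Hence $d_{J_1}(x_n,x)\to0$ implies $|x_n(1)-x(1)|\le\sup_t|x_n(\lambda_n(t))-x(\lambda_n(\lambda_n^{-1}(t)))|\to0$. Even without this, the limit $(\Sigma^{\rm av}_{\gamma,\beta})^{1/2}B$ has continuous paths almost surely, and it suffices that $\pi_1$ be continuous on $C([0,1],\R^d)$.

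\emph{Step 2 (apply the theorem).} Theorem~\ref{thm:fclt} holds for every deterministic initialization $(\theta_0,y_0)$. The continuous mapping theorem then gives
\[
S_n(1)\Rightarrow(\Sigma^{\rm av}_{\gamma,\beta})^{1/2}B(1).
\]
Since $B(1)\sim N(0,I_d)$, the limit is $N(0,\Sigma^{\rm av}_{\gamma,\beta})$. Here $\Sigma^{\rm av}_{\gamma,\beta}$ is symmetric positive semidefinite, being a long-run covariance (the limit of $\Cov(S_n(1))$ for the stationary process), so its symmetric square root is well defined. Because the convergence holds for each fixed deterministic starting state, the CLT is quenched in the stated sense.

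There is no genuine obstacle here. The only point needing care is that $\pi_1$ is $J_1$-continuous, or continuous at the continuous Gaussian limit, and this is standard.
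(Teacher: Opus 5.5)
Your proposal is correct and follows the paper's proof: evaluate the invariance principle of Theorem~\ref{thm:fclt} at $s=1$, where the functional equals $\sqrt n(\bar\theta_n-\theta_\infty^\star(\gamma,\beta))$, and apply the continuous mapping theorem, using $(\Sigma^{\rm av}_{\gamma,\beta})^{1/2}B(1)\sim N(0,\Sigma^{\rm av}_{\gamma,\beta})$. Your extra justification that evaluation at the endpoint is $J_1$-continuous everywhere, because every time change fixes $1$, is a detail the paper leaves implicit.
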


The limit is quenched with respect to initialization because the GMC coupling error is $O_{L^2}(1)=o_{L^2}(\sqrt n)$, uniformly over the partial sums; see \citet[Theorem~3.2]{li2026stochastic} for the analogous quenched CLT for ordinary SGD.

\section{Last-iterate limit theory}
\label{sec:last-iterate}

The preceding section fixes $(\gamma,\beta)$ and sends the averaging horizon to infinity.  We now let $\gamma\to0$ and $n=n(\gamma)\to\infty$ jointly, and study the nonaveraged last iterate $\theta_n$.  Throughout this section, $\beta\in[0,1)$ is fixed, $\theta_0$ is deterministic, and $y_0=0$.  Write $e_k=\theta_k-\theta^\star$ and $\eta=\gamma/(1-\beta)$.

We impose local second-order regularity only for the last-iterate result.

\begin{assumption}[Local Hessian smoothness]
\label{ass:local-hessian}
The map $G$ is differentiable at $\theta^\star$, and $A:=DG(\theta^\star)=\nabla^2 F(\theta^\star)$ is symmetric. Moreover, there exists $\kappa_R>0$ and $L_A<\infty$ such that, whenever $|\theta-\theta^\star|\le\kappa_R$,
\[ |G(\theta)-A(\theta-\theta^\star)|\le L_A|\theta-\theta^\star|^2. \]
\end{assumption}

Under Assumptions~\ref{ass:global} and~\ref{ass:local-hessian}, the eigenvalues of $A$ lie in $[\mu,L_2]$. Noting that $\mathbb{E}[g(\theta^\star,x_k)]=G(\theta^\star)=0$, we define $\varepsilon_k^\star=g(\theta^\star,x_k)$ and $S=\mathbb{E}[\varepsilon_0^\star(\varepsilon_0^\star)^\top]$. Further, let
\[
    \Gamma=\int_0^\infty e^{-As}Se^{-As}\,ds.
\]
The matrix $\Gamma$ is the unique symmetric positive-semidefinite solution of $A\Gamma+\Gamma A=S$.

Lemma~\ref{lem:last-iterate-auxiliary} in Appendix~\ref{app:last-iterate-proofs} shows that, under the burn-in condition below and for fixed $\beta$, $\|\theta_n-\theta^\star\|_2=O(\sqrt\gamma)$. The proof linearizes the mean gradient by replacing $G(\theta)$ with $A(\theta-\theta^\star)$ and approximates the state-dependent stochastic-gradient noise by its value at the minimizer, replacing $g(\theta,x_k)-G(\theta)$ with $\varepsilon_k^\star$. The resulting linear SGDM recursion approximates the original recursion with an error negligible after division by $\sqrt\gamma$, and yields the Gaussian limit.

\begin{theorem}[Quenched last-iterate CLT]
\label{thm:last-iterate-clt}
Suppose Assumptions~\ref{ass:global} and~\ref{ass:local-hessian} hold with $p=2$.  Let $\gamma\to0$ with fixed $\beta\in[0,1)$, $\theta_0 \in \R^d$, and $y_0=0$.  If $n$ satisfies
\begin{equation}
    n\gamma\to\infty,\quad n\sqrt\gamma(1-\mu\gamma/(1-\beta))^{n-1}\to0,
\label{eq:last-iterate-burnin}
\end{equation}
then
\begin{equation}
    \frac{\theta_n-\theta^\star}{\sqrt\gamma}\ \Rightarrow\ N\!\left(0,\frac{\Gamma}{1-\beta}\right).
    \label{eq:last-iterate-clt}
\end{equation}
\end{theorem}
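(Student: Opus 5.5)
We compare the SGDM error $e_k=\theta_k-\theta^\star$ with the linear recursion obtained by replacing the mean gradient by its Hessian linearization and the noise by its value at the minimizer:
\[
\tilde y_k=\beta\tilde y_{k-1}-\gamma(A\tilde e_{k-1}+\varepsilon_k^\star),\qquad \tilde e_k=\tilde e_{k-1}+\tilde y_k,\qquad \tilde e_0=e_0,\ \tilde y_0=0.
\]
The plan has three steps. First, obtain a CLT for $\tilde e_n/\sqrt\gamma$ directly, since the linear recursion is explicit. Second, show $\|e_n-\tilde e_n\|_2=o(\sqrt\gamma)$. Third, conclude by Slutsky.

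\textbf{Linear part.} Diagonalize $A=\sum_j\lambda_j v_jv_j^\top$ with $\lambda_j\in[\mu,L_2]$. Write the linear state as $\tilde Z_k=B\tilde Z_{k-1}-\gamma(\varepsilon_k^\star,\varepsilon_k^\star)$ with the $2d\times 2d$ companion matrix $B$. Then
\[
\tilde Z_n=B^n\tilde Z_0-\gamma\sum_{k=1}^n B^{n-k}(\varepsilon_k^\star,\varepsilon_k^\star).
\]
In each eigendirection, $B$ reduces to a $2\times2$ matrix with eigenvalues $1-\eta\lambda_j+O(\gamma^2)$ and $\beta+O(\gamma)$. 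Hence $|B^n|\lesssim(1-\mu\eta)^{n}$ up to constants depending on $\beta$ and $d$. The deterministic term is therefore $O((1-\mu\eta)^n)=o(\sqrt\gamma)$ by \eqref{eq:last-iterate-burnin}. The stochastic term is a sum of independent mean-zero vectors. Its covariance, divided by $\gamma$, is
\[
\gamma\sum_{m<n}P B^m\mathbf{1}S\mathbf{1}^\top (B^m)^\top P^\top ,
\]
where $P$ projects onto $\theta$. The slow mode contributes $\eta(1-\beta)^{-1}\cdot(1-\beta)$-type factors: the $\theta$-component of $B^m(\varepsilon,\varepsilon)$ is approximately $(1-\beta)^{-1}(I-\eta A)^m\varepsilon$ plus a fast, geometrically decaying $\beta^m$ part. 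The covariance is then
\[
\frac{\gamma}{(1-\beta)^2}\sum_m (I-\eta A)^mS(I-\eta A)^m+o(1)
=\frac{\eta}{1-\beta}\sum_m(\cdots)+o(1)\to \frac{\Gamma}{1-\beta},
\]
since $\eta\sum_m (I-\eta A)^mS(I-\eta A)^m\to\int_0^\infty e^{-As}Se^{-As}ds$ when $n\eta\to\infty$. Asymptotic normality follows from the Lindeberg–Feller CLT for triangular arrays. Lindeberg's condition holds because each summand has norm at most $C\sqrt\gamma|\varepsilon_k^\star|$, the $\varepsilon_k^\star$ are i.i.d. in $L^2$, and $n\to\infty$ while the weights are uniformly $O(\sqrt{\eta})$. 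The cross terms between fast and slow modes contribute $O(\gamma)$ and vanish.

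\textbf{Approximation error.} Let $\delta_k=e_k-\tilde e_k$ with momentum difference $\delta^y_k$. This pair follows the same linear recursion driven by
\[
\xi_k=\bigl[G(\theta_{k-1})-Ae_{k-1}\bigr]+\bigl[(g(\theta_{k-1},x_k)-G(\theta_{k-1}))-\varepsilon_k^\star\bigr]=:a_k+b_k .
\]
The term $b_k$ is a martingale difference with $\|b_k\|_2\le 2L_2\|e_{k-1}\|_2$ by \ref{ass:lp-lipschitz-gradient}. By orthogonality its accumulated contribution is
\[
\gamma\Bigl(\sum_k|B^{n-k}|^2\|e_{k-1}\|_2^2\Bigr)^{1/2}\lesssim \gamma\,\eta^{-1/2}\sup_k\|e_k\|_2 .
\]
This uses Lemma~\ref{lem:last-iterate-auxiliary}, giving $\sup_{k}\|e_k\|_2\lesssim (1-\mu\eta)^k|e_0|+\sqrt\gamma$. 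The initial transient is summed against the geometric weights, which is where the factor $n\sqrt\gamma(1-\mu\eta)^{n-1}$ in \eqref{eq:last-iterate-burnin} appears. The result is $O(\gamma)+o(\sqrt\gamma)=o(\sqrt\gamma)$. For $a_k$, Assumption~\ref{ass:local-hessian} gives $|a_k|\le L_A|e_{k-1}|^2$ on $\{|e_{k-1}|\le\kappa_R\}$. Off this event, $|a_k|\le (L_2+|A|)|e_{k-1}|\le C|e_{k-1}|^2/\kappa_R$. Hence $|a_k|\le C'|e_{k-1}|^2$ everywhere, and
\[
\|\gamma\textstyle\sum B^{n-k}a_k\|_1\lesssim \gamma\sum_k(1-\mu\eta)^{n-k}\,\mathbb{E}|e_{k-1}|^2\lesssim \gamma\eta^{-1}\gamma+\text{transient},
\]
which is $O(\gamma)=o(\sqrt\gamma)$ in $L^1$. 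This suffices for convergence in probability.

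\textbf{Main obstacle.} The delicate step is the $L^2$ bound $\mathbb{E}|e_k|^2\lesssim(1-\mu\eta)^{2k}+\gamma$ uniformly in $k$ (Lemma~\ref{lem:last-iterate-auxiliary}), together with careful tracking of the transient terms. The transient terms are exactly why both burn-in conditions are needed. To obtain the moment bound I would use the $\dist_\beta$ contraction from Corollary~\ref{cor:all-beta-l2-gmc}, coupled with the deterministic point $z^\star$, which gives $r_{\rm MS}=1-\mu\eta+O(\eta^2)$ and $D_2=O(\gamma)$. The other technical point is the uniform bound $|B^m|\lesssim(1-\mu\eta)^m$. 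To handle the non-normal $2\times2$ blocks, I would use the explicit eigenvector change of basis, whose condition number stays bounded for fixed $\beta<1$.
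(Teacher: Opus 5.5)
Taking Lemma~\ref{lem:last-iterate-auxiliary} as given, your argument is the paper's. It uses the same three-term decomposition \eqref{eq:last-iterate-decomposition}. It splits the remainder the same way, into the Taylor part $G(\theta_{k-1})-Ae_{k-1}$ (bounded globally by $C|e_{k-1}|^2$) and the martingale part $\varepsilon_k(\theta_{k-1})-\varepsilon_k^\star$. The burn-in enters through $\gamma\sum_{k=1}^n q_\eta^{n-k}q_\eta^{k-1}\rho_2^2=\gamma nq_\eta^{n-1}\rho_2^2$, and $e_n^{\rm lin}$ is handled by Lindeberg--Feller with Cram\'er--Wold.

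The gap is in how you propose to prove that lemma, which you rightly flag as the crux. Every remainder bound you write needs the additive level $\gamma$ in $\mathbb E|e_k|^2\lesssim q_\eta^k\rho_2^2+\gamma$. If you only know $\sup_k\|e_k\|_2=O(1)$, the martingale part is $O(\gamma\eta^{-1/2})=O(\sqrt\gamma)$, not $o(\sqrt\gamma)$, and the Taylor part is $O(\gamma\eta^{-1})=O(1)$.

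The $\dist_\beta$ contraction of Corollary~\ref{cor:all-beta-l2-gmc}, coupled with $z^\star$, does not produce that level. It yields
\[
\|\dist_\beta(Z_k,z^\star)\|_2\le r_{\rm MS}^k\dist_\beta(z_0,z^\star)+\frac{D_2}{1-r_{\rm MS}}.
\]
Here $D_2=\sqrt{1+\beta}\,\eta M_2$ and $1-r_{\rm MS}=\mu\eta+O(\eta^2)$. So the second term tends to $\sqrt{1+\beta}\,M_2/\mu$, which does not vanish as $\gamma\to0$. The triangle inequality adds the per-step noise linearly, $\sum_j r^j\gamma\asymp1$. The true accumulation is in quadrature, $(\sum_j r^{2j}\gamma^2)^{1/2}\asymp\sqrt\gamma$, because the fresh noise is conditionally centered.

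The missing ingredient is a mean-square drift inequality for a squared Lyapunov function, in which the cross term between the $\mathcal F_{k-1}$-measurable state and the centered noise has zero conditional mean. The paper takes $V_k=|u_k-\theta^\star|^2+|y_k|^2$ with $u_k=\theta_k+\beta y_k/(1-\beta)$. For small $\eta$ it derives $\mathbb EV_k\le q_\eta\mathbb EV_{k-1}+2(1+(1-\beta)^2)M_2^2\eta^2$, which gives the stationary level $O(\eta)$.

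You could also rescue your GMC idea by expanding $\dist_\beta(\Phi_x(z),z^\star)^2$ around $\Phi_x(z^\star)$. The deterministic part of $\Phi_x(z)-\Phi_x(z^\star)$ is orthogonal in expectation to the centered increment $\Phi_x(z^\star)-z^\star$. Only the random part $-\gamma\bigl(g(\theta,x)-g(\theta^\star,x)\bigr)$ contributes, giving $O(\gamma^2L_2M_2|\theta-\theta^\star|)$, which AM--GM absorbs into a fraction of the order-$\eta$ contraction. That, however, is a drift argument, not the $D_2/(1-r)$ bound you describe.
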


A sufficient condition for the burn-in requirements in~\eqref{eq:last-iterate-burnin} is $n\gamma\ge\nu\log(1/\gamma)$ with
$\nu>\frac{1-\beta}{2\mu}$ for every sufficiently small $\gamma$. On the $\sqrt\eta$ scale, $(\theta_n - \theta^\star) / \sqrt\eta \Rightarrow N\!\left(0,\Gamma\right)$ holds. For fixed $\beta$, the leading covariance at the $\sqrt\eta$ scale therefore agrees with that of ordinary SGD.

\section{Forgetting and averaged fluctuations under stochastic curvature}
\label{sec:quadratic-acceleration}

In a random-curvature quadratic model, we relate initialization forgetting to the averaged limiting covariance $\Sigma_{\gamma,\beta}^{\rm av}$ and bound finite-average mean-square error. We make the dependence on the condition number explicit for the accelerated tuning studied below.

\subsection{Contraction and limiting covariance}
\label{sec:stochastic-curvature}

Consider the quadratic model
\begin{equation}
\begin{aligned}
 g(\theta,x_k)&=H_k(\theta-\theta^\star)+\varepsilon_k,\\
 H_k&=Q\operatorname{diag}(H_{1,k},\ldots,H_{d,k})Q^\top,
\end{aligned}\label{eq:commuting-curvature-model}
\end{equation}
where $Q$ is a fixed orthogonal matrix and the pairs $(H_k,\varepsilon_k)$ are i.i.d. Write $\mathbb EH_{i,k}=\lambda_i\in[\mu,L]$, where $0<\mu\le L$, and $\operatorname{Var}(H_{i,k})=v_i<\infty$. Assume $\mathbb E\varepsilon_k=0$ and $\mathbb E|\varepsilon_k|^2<\infty$. Coordinates of $H_k$ may be dependent, and curvature and additive noise need not be independent. These conditions imply Assumption~\ref{ass:global} with $p=2$ and $L_2=(\mathbb E|H_k|^2)^{1/2}$. Here $L$ bounds the eigenvalues of the mean curvature and differs from $L_2$. Appendix~\ref{app:coordinate-batching} gives a coordinate-sampled least-squares example.

To measure forgetting of the initial state, compare two state trajectories $Z_k^z$ and $Z_k^{z'}$ driven by the same observations but started from different deterministic states $z, z'$. Let $q$ be the infimum of factors $r>0$ for which
\begin{equation*}
    \|Z_k^z-Z_k^{z'}\|_2\le C_r r^k|z-z'|
\end{equation*}
holds with a finite constant $C_r$ independent of $k,z,z'$. We call $q$ the forgetting factor and $1-q$ the forgetting rate. Thus $q<1$ means that the initial state is forgotten geometrically, and a smaller $q$ means faster forgetting.

The following proposition connects this forgetting factor to the GMC framework using standard mean-square stability arguments \citep{boyd1994linear,andreyev2026momentum}. Appendix~\ref{app:curvature-rate} computes $q$ from the second-moment recursion and gives scalar criteria for stability and prescribed forgetting factors.

\begin{proposition}[Mean-square contraction and stationarity]
\label{thm:curvature-rate}
In model~\eqref{eq:commuting-curvature-model}, let $0\le\beta<1$ and $\gamma>0$.
If $q<1$, then for every $r\in(q,1)$ there is a distance $\dist_r$, induced by a quadratic norm equivalent to the Euclidean norm, such that
{
\begin{equation*}
    \|\dist_r(\Phi_{x_0}(z),\Phi_{x_0}(z'))\|_2\le r\,\dist_r(z,z')
\end{equation*}
}
for all deterministic $z,z'\in\mathbb R^{2d}$. There is a unique invariant probability law with finite second moment and a causal stationary realization.
\end{proposition}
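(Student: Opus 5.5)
The model is affine in the state, so the difference of two coupled trajectories evolves by a random linear recursion that does not depend on the additive noise. Write $\Delta_k=Z_k^z-Z_k^{z'}=(\delta\theta_k,\delta y_k)$. Then $\Delta_k=B_k\Delta_{k-1}$ with
\[
B_k=\begin{pmatrix} I-\gamma H_k & \beta I\\ -\gamma H_k & \beta I\end{pmatrix},
\]
and the $B_k$ are i.i.d. because the pairs $(H_k,\varepsilon_k)$ are. For $P\succeq0$, define the second-moment operator
\[
\mathcal T(P)=\mathbb E[B_0^\top P B_0],
\]
which is linear and positive on symmetric $2d\times 2d$ matrices. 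Then $\mathbb E[\Delta_k^\top P\Delta_k]=\Delta_0^\top \mathcal T^k(P)\Delta_0$. By the definition of $q$ and the argument of Appendix~\ref{app:curvature-rate}, $q^2=\rho(\mathcal T)$.

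I will argue the identification $q^2=\rho(\mathcal T)$ as follows. Taking $P=I$ gives $\|\Delta_k\|_2^2\le|\mathcal T^k(I)|\,|\Delta_0|^2$. Conversely, positivity lets us test on $\Delta_0$ ranging over a basis: since $\mathcal T^k(I)\succeq0$, its norm is controlled by its trace, which is a sum of diagonal quadratic forms $e_j^\top\mathcal T^k(I)e_j$. Combined with Gelfand's formula this gives $\limsup_k\|\mathcal T^k\|^{1/k}=q^2$. For the construction of the norm, I do not need the exact equality; I only need, for $r\in(q,1)$, that $\sum_k r^{-2k}\mathcal T^k$ converges.

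Fix $r\in(q,1)$ and set
\[
P_r=\sum_{k\ge0} r^{-2k}\mathcal T^k(I).
\]
This series converges, since by the definition of $q$ we have $|\mathcal T^k(I)|\le \sup_{|\Delta_0|=1}\mathbb E|\Delta_k|^2\le C_{r'}^2r'^{2k}$ for some $r'\in(q,r)$. Moreover $I\preceq P_r\preceq CI$, so $\dist_r(z,z')=((z-z')^\top P_r(z-z'))^{1/2}$ is equivalent to the Euclidean distance. Applying $\mathcal T$ termwise, which is legitimate by linearity and positivity, gives
\[
\mathcal T(P_r)=r^2(P_r-I)\preceq r^2P_r.
\]
Because the difference $\Phi_{x_0}(z)-\Phi_{x_0}(z')=B_0(z-z')$ does not involve $\varepsilon_0$, this yields $\mathbb E\,\dist_r(\Phi_{x_0}(z),\Phi_{x_0}(z'))^2=(z-z')^\top\mathcal T(P_r)(z-z')\le r^2\dist_r(z,z')^2$, which is the claimed one-step bound.

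For the stationary conclusions, $\dist_r$ is induced by a norm, so the discussion after Definition~\ref{def:gmc} applies once $D_2<\infty$. This holds because $\Phi_{x_0}(z^\star)-z^\star=(-\gamma\varepsilon_0,-\gamma\varepsilon_0)$ and $\mathbb E|\varepsilon_0|^2<\infty$. Theorem~2 of \citet{Wu2004Limit} then gives the unique invariant law with finite second moment and the causal stationary realization as the backward $L^2$ limit.

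The main obstacle is justifying the convergence of $\sum_k r^{-2k}\mathcal T^k(I)$ directly from the trajectory-based definition of $q$. This requires that the operator norm of $\mathcal T^k$ on symmetric matrices is controlled by $\sup_{|\Delta_0|=1}\mathbb E|\Delta_k|^2$. I would handle it using positivity: for $P\succeq0$ one has $\mathcal T^k(P)\preceq|P|\,\mathcal T^k(I)$, and any symmetric matrix is a difference of two positive semidefinite matrices. The commuting structure of $H_k$ is not needed for this step; it only matters for computing $q$ explicitly.
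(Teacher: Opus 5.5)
Your proof is correct, but it takes a different route from the paper's at the key step.

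\textbf{The paper's route.} The paper first passes to the $Q$ basis and reduces each coordinate to a $2\times2$ lag recursion with a $3\times3$ moment matrix $\mathcal T(\lambda_i,v_i)$. It then proves the two-sided comparison \eqref{eq:curvature-moment-comparison} to identify $q=\sqrt{R_{\max}}$. Only after that, using $R_i<r^2$, does it invoke the stochastic Lyapunov criterion to obtain $P_i$ in \eqref{eq:curvature-adapted-metric}, and it assembles $\dist_r$ coordinatewise in lag variables.

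\textbf{Your route.} You work with the full $2d$-dimensional state difference and your operator $P\mapsto\mathbb E[B_0^\top PB_0]$. You build the Lyapunov matrix explicitly as the Neumann series $P_r=\sum_k r^{-2k}\mathcal T^k(I)$. Its convergence follows in one line from the definition of $q$, because $|\mathcal T^k(I)|=\sup_{|u|=1}\mathbb E|B_k\cdots B_1u|^2$ exactly.

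\textbf{What each buys.} Your argument bypasses both the identification of $q$ and the external Lyapunov criterion. It never uses the commuting structure, so it proves the proposition for general random curvature with $\mathbb E|H_0|^2<\infty$. That integrability is also what makes your operator finite, and you should state it. The paper's route yields the explicit scalar characterization $q^2=\max_i\rho(\mathcal T(\lambda_i,v_i))$. That formula is what Lemma~\ref{lem:curvature-rate-criterion}, Theorem~\ref{thm:rate-variance-cost}, and the explicit $P_r$ of Lemma~\ref{lem:rv-transient} rely on. Your argument gives the contraction but not that formula.

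\textbf{A minor point.} The ``main obstacle'' you flag is already resolved by your own bound. You only ever apply $\mathcal T^k$ to $I$, so neither the positivity decomposition of symmetric matrices nor the equality $q^2=\rho(\mathcal T)$ is needed for this statement.
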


The averaged CLT of Section~\ref{sec:averaging-fclt} applies whenever $q<1$. The next proposition gives its center and the stationary and averaged variances.

\begin{proposition}[Stationary variance and averaged limiting variance]
\label{prop:curvature-variance}
In model~\eqref{eq:commuting-curvature-model}, suppose $q<1$. The stationary mean is $\theta^\star$. For a coordinate of $Q^\top(\theta_k-\theta^\star)$ write $h=\lambda_i$, $v=v_i$, and $\sigma^2=\mathbb E(Q^\top\varepsilon_k)_i^2$. Its stationary variance $V$ and averaged CLT variance $\Sigma_i^{\rm av}$ are
\begin{equation}
 \begin{aligned}
 V&=\frac{\gamma\sigma^2(1+\beta)}
 {2h(1-\beta^2)-\gamma((1-\beta)h^2+(1+\beta)v)},\qquad
 \Sigma^{\rm av}_{i}=\frac{\sigma^2+vV}{h^2}.
 \end{aligned}
 \label{eq:curvature-variances}
\end{equation}
Here $\Sigma^{\rm av}_{i}=(Q^\top\Sigma_{\gamma,\beta}^{\rm av}Q)_{ii}$, and the CLT holds from every deterministic initialization. These formulas require only the stated second moments.
\end{proposition}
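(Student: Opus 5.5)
We first diagonalize the problem. Because $H_k=Q\operatorname{diag}(H_{i,k})Q^\top$ with $Q$ fixed, the change of variables $\tilde\theta_k=Q^\top(\theta_k-\theta^\star)$, $\tilde y_k=Q^\top y_k$, $\xi_k=Q^\top\varepsilon_k$ splits \eqref{eq:sgdm} into $d$ scalar recursions. Each is coupled to the others only through the joint law of $(H_{i,k},\xi_{i,k})_i$:
\[
 \tilde y_k=\beta\tilde y_{k-1}-\gamma H_{i,k}\tilde\theta_{k-1}-\gamma\xi_{i,k},\qquad \tilde\theta_k=\tilde\theta_{k-1}+\tilde y_k .
\]
Write $s_k=(\tilde\theta_k,\tilde y_k)^\top$ for a fixed coordinate $i$. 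Then $s_k=A_k s_{k-1}+b_k$, with
\[
A_k=\begin{pmatrix}1-\gamma H_{i,k}&\beta\\-\gamma H_{i,k}&\beta\end{pmatrix},\qquad b_k=-\gamma\xi_{i,k}(1,1)^\top .
\]
Since $q<1$, Proposition~\ref{thm:curvature-rate} supplies the stationary solution $s_k^\circ$ with finite second moments. Its mean $m$ satisfies $m=\bar A m$, where $\bar A=\mathbb E A_k$ is the deterministic heavy-ball matrix. The matrix $I-\bar A$ is invertible when $h\gamma>0$ (its determinant is $\gamma h$), so $m=0$. Hence the stationary mean is $\theta^\star$, and by Proposition~\ref{thm:curvature-rate} together with Corollary~\ref{coro:average-clt}, the averaged CLT holds from every deterministic start.

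\textbf{Stationary variance.} Let $a=\mathbb E\tilde\theta^2$, $b=\mathbb E\tilde\theta\tilde y$, $c=\mathbb E\tilde y^2$ under stationarity. The innovation $(H_{i,k},\xi_{i,k})$ is independent of $s_{k-1}$, so cross terms between $s_{k-1}$ and $b_k$ vanish after conditioning, since $\mathbb E[\xi\mid\cdot]$ integrates to zero against $s_{k-1}$. The terms $\mathbb E[H\xi]$ are multiplied by $\mathbb E s_{k-1}=0$, so possible dependence between curvature and noise drops out; only $\mathbb E H^2=h^2+v$ and $\sigma^2$ enter. This yields three linear equations for $(a,b,c)$ from $\mathbb E[s_ks_k^\top]=\mathbb E[A_k\Sigma A_k^\top]+\gamma^2\sigma^2\mathbf 1\mathbf 1^\top$. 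A cleaner route uses $\tilde y_k=\tilde\theta_k-\tilde\theta_{k-1}$: stationarity gives $c=2(a-\rho_1)$, with $\rho_1=\mathbb E\tilde\theta_k\tilde\theta_{k-1}$. Multiplying the AR(2) form \eqref{eq:nonlinear-ar2} by $\tilde\theta_{k-1}$ and taking expectations gives $\rho_1=(1+\beta-\gamma h)\rho_1/(1)\ldots$, more precisely $\rho_1=(1+\beta-\gamma h)a-\beta\rho_1$. Squaring the AR(2) form supplies the remaining equation. Solving this linear system is the main computational step. We expect it to produce the stated denominator $2h(1-\beta^2)-\gamma((1-\beta)h^2+(1+\beta)v)$, and its positivity should be equivalent to the scalar mean-square stability criterion behind $q<1$. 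We will check this consistency against Appendix~\ref{app:curvature-rate}.

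\textbf{Averaged variance.} The key trick avoids summing autocovariances. Summing the $\tilde u$-recursion $\tilde u_k=\tilde u_{k-1}-\eta(H_{i,k}\tilde\theta_{k-1}+\xi_{i,k})$ over $k=1,\dots,n$ and dividing by $\sqrt n$ gives
\[
 h\,\frac1{\sqrt n}\sum_{k}\tilde\theta_{k-1}=-\frac1{\sqrt n}\sum_k\big[(H_{i,k}-h)\tilde\theta_{k-1}+\xi_{i,k}\big]+\frac{\tilde u_0-\tilde u_n}{\eta\sqrt n}.
\]
The boundary term is $O_{L^2}(n^{-1/2})$. The summands $D_k=(H_{i,k}-h)\tilde\theta_{k-1}+\xi_{i,k}$ are stationary martingale differences with respect to $\mathcal F_k^\circ$, with
\[
\mathbb E D_k^2=vV+2\,\mathbb E[(H-h)\xi]\,\mathbb E\tilde\theta+\sigma^2=\sigma^2+vV ,
\]
since $\mathbb E\tilde\theta=0$. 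Hence $\Sigma^{\rm av}_i=(\sigma^2+vV)/h^2$. Identifying this limit with $(Q^\top\Sigma^{\rm av}_{\gamma,\beta}Q)_{ii}$ follows from the uniqueness of the CLT limit in Corollary~\ref{coro:average-clt}. The shift between $\sum\tilde\theta_{k-1}$ and $\sum\tilde\theta_k$ is negligible. Only second moments are used throughout.
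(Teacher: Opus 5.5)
Your route is the paper's route. The zero mean via $\det(I-\bar A)=\gamma h$ is equivalent to the paper's $(1-\gamma h)\,\mathbb E e^\circ=\mathbb E e^\circ$. The stationary variance comes from the lag-one and squared AR(2) moment equations. Your telescoped $\tilde u_k$-sum is the paper's rearranged AR(2) identity up to an index shift, and it reduces the average to the martingale $\sum D_k$ with $\mathbb E D_k^2=\sigma^2+vV$. Two steps, however, are left unfinished.

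First, you never solve for $V$; you only ``expect'' the denominator. Let $\alpha=1+\beta-\gamma h$ and $\rho_1=\mathbb E\tilde\theta_k\tilde\theta_{k-1}$. The lag-one equation gives $\rho_1=\alpha V/(1+\beta)$. Squaring the AR(2) form gives $V=(\alpha^2+\gamma^2v+\beta^2)V-2\alpha\beta\rho_1+\gamma^2\sigma^2$. Using $(1+\beta)^2-\alpha^2=\gamma h(2(1+\beta)-\gamma h)$, this becomes
\[
\gamma^2\sigma^2=\Bigl(\frac{1-\beta}{1+\beta}\bigl((1+\beta)^2-\alpha^2\bigr)-\gamma^2v\Bigr)V=\frac{\gamma}{1+\beta}\Bigl(2h(1-\beta^2)-\gamma\bigl((1-\beta)h^2+(1+\beta)v\bigr)\Bigr)V.
\]
To divide, you need the bracket to be positive. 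When $\sigma^2>0$ this is forced by $V\ge0$, but not when $\sigma^2=0$. The paper gets positivity from \eqref{eq:curvature-stability} in Lemma~\ref{lem:curvature-rate-criterion}. A self-contained alternative: for the moment operator in \eqref{eq:curvature-moment-operator}, direct expansion gives $\det(I-\mathcal T(h,v))=\gamma\bigl(2h(1-\beta^2)-\gamma((1-\beta)h^2+(1+\beta)v)\bigr)$. Since $\mathcal T(h,v)$ is real with $\rho(\mathcal T(h,v))\le q^2<1$, we have $\det(I-\mathcal T)=\prod_j(1-\lambda_j)>0$.

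Second, identifying $(\sigma^2+vV)/h^2$ with $(Q^\top\Sigma^{\rm av}_{\gamma,\beta}Q)_{ii}$ ``by uniqueness of the CLT limit'' does not follow from what you have shown. Your $L^2$ decomposition gives only the limit of $n^{-1}\operatorname{Var}(\sum_k\tilde\theta_k^\circ)$. Combined with Corollary~\ref{coro:average-clt}, that yields only an inequality between the two variances unless you add uniform integrability of the squared normalized sums. To use uniqueness of weak limits, you must first invoke the martingale CLT for the stationary ergodic square-integrable differences $D_k$, giving $n^{-1/2}\sum_k D_k\Rightarrow N(0,\sigma^2+vV)$. The paper's route is more direct. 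Absolute summability of the covariances (Proposition~\ref{prop:physical-dependence}) gives $n^{-1}\operatorname{Var}(\sum_{k=1}^n\tilde\theta_k^\circ)=\sum_{|j|<n}(1-|j|/n)\Cov(\tilde\theta_0^\circ,\tilde\theta_j^\circ)\to(Q^\top\Sigma^{\rm av}_{\gamma,\beta}Q)_{ii}$ by the definition of $\Sigma^{\rm av}_{\gamma,\beta}$. Together with your decomposition, this closes the argument.
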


Write $V_i$ and $\sigma_i^2$ for the stationary and additive-noise variances in coordinate $i$. With fixed curvature, the averaged limiting variance is independent of stable tuning. With random curvature, it depends on tuning through $V_i$. The multiplicative-noise correction also appears in linear stochastic approximation \citep[Theorem~2]{Mou2020On}. Our formulas hold throughout $q<1$. Proofs are in Appendices~\ref{app:curvature-rate} and~\ref{app:curvature-variance}.

\subsection{Forgetting rates and limiting variance}
\label{sec:curvature-variance}

Faster forgetting reduces initialization effects but need not reduce averaged fluctuations. For a prescribed forgetting rate, we bound the variance relative to $\sigma_i^2/\lambda_i^2$, its value with curvature fixed at its mean and unchanged additive noise. We also give an accelerated tuning under small curvature variance \citep{polyak1964some,can2019accelerated}.

\begin{theorem}[Forgetting rates and averaged limiting variance]
\label{thm:curvature-acceleration}
\label{thm:rate-variance-cost}
In model~\eqref{eq:commuting-curvature-model}, suppose $\kappa=L/\mu\ge4$ and let $q$ be the forgetting factor defined above.

\textup{(i)} Suppose $q<1$, $\mu,L$ are actual mean eigenvalues, and a coordinate with $\lambda_i=L$ has curvature variance $v_L=\nu L^2>0$. Write $\sigma_L^2$ and $\Sigma_L^{\rm av}$ for its additive-noise and averaged limiting variances. Every stable tuning satisfies
\begin{equation}
 1-q\le\frac4{\nu\kappa+2}.
 \label{eq:curvature-noise-necessity}
\end{equation}
If also $\sigma_L^2>0$, then
\begin{equation}
 \frac{\Sigma_L^{\rm av}}{\sigma_L^2/L^2}
 \ge1+\frac{\nu\kappa}{2}\frac{1-q}{1+q}.
 \label{eq:rate-variance-lower}
\end{equation}
In particular, $q\le1-a/\sqrt\kappa$ with a constant $a>0$ implies
\begin{equation*}
 \frac{\Sigma_L^{\rm av}}{\sigma_L^2/L^2}\ge1+\frac a4\nu\sqrt\kappa.
\end{equation*}

\textup{(ii)} Conversely, suppose $v_i\le\nu\lambda_i^2$ for every coordinate and assume $\nu\le \kappa^{-1/2}/16$. At
\begin{equation}
 \gamma_{\rm SGDM}=L^{-1},\qquad \beta_{\rm SGDM}=(1-\kappa^{-1/2})^2,
 \label{eq:curvature-retuning}
\end{equation}
we have $1-\kappa^{-1/2}\le q<\bar q:=1-\kappa^{-1/2}/2$ and, whenever $\sigma_i^2>0$,
\begin{equation}
 1\le\frac{\Sigma_i^{\rm av}}{\sigma_i^2/\lambda_i^2}\le1+\nu\sqrt\kappa.
 \label{eq:rate-variance-upper}
\end{equation}
\end{theorem}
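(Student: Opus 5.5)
The plan is to reduce everything to scalar recursions, one per eigen-coordinate of $Q$. In coordinate $i$, $e_k=(Q^\top(\theta_k-\theta^\star))_i$ satisfies
\[
e_k=(1+\beta-\gamma H_{i,k})e_{k-1}-\beta e_{k-2}-\gamma(Q^\top\varepsilon_k)_i .
\]
By Appendix~\ref{app:curvature-rate}, $q$ is the maximum over coordinates of the square root of the spectral radius of the $3\times3$ second-moment map. That map acts on the homogeneous difference process through $(\mathbb E e_k^2,\mathbb E e_ke_{k-1},\mathbb E e_{k-1}^2)$.

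Two elementary lower bounds on $q$ drive part~(i).
\begin{itemize}
\item By Jensen, $\|e_k\|_2\ge|\mathbb E e_k|$. The mean of the difference process follows the deterministic heavy-ball recursion with characteristic polynomial $p_h(z)=z^2-(1+\beta-\gamma h)z+\beta$. Hence $q$ is at least the spectral radius of $p_h$ for every coordinate, in particular for $h=\mu$.
\item Since the roots of $p_h$ multiply to $\beta$, this also gives $q^2\ge\beta$.
\end{itemize}

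The key algebraic step turns these bounds into
\[
\gamma\mu=p_\mu(1)\ge\frac{(1-\beta)(1-q)}{1+q}.
\]
\begin{itemize}
\item If the roots $z_1,z_2$ of $p_\mu$ are real, they lie in $(0,q]$ and $z_1z_2=\beta$. Then $(1-z_1)(1-z_2)\ge(1-q)(1-\beta/q)$. Comparing with $(1-\beta)(1-q)/(1+q)$ reduces to $q^2\ge\beta$.
\item If the roots are complex, $p_\mu(1)=|1-z|^2\ge(1-\sqrt\beta)^2$. Because $q\ge\sqrt\beta$, we have $(1-\beta)(1-q)/(1+q)\le(1-\sqrt\beta)^2$, which closes this case.
\end{itemize}
Consequently $\gamma L/(1-\beta)\ge\kappa(1-q)/(1+q)$.

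Next I use Proposition~\ref{prop:curvature-variance} at the coordinate with $h=L$ and $v=\nu L^2$. Stability forces the denominator of $V$ to be positive:
\[
\gamma L\bigl((1-\beta)+(1+\beta)\nu\bigr)<2(1-\beta^2).
\]
Combined with the previous display, this gives
\[
\kappa\frac{1-q}{1+q}\bigl((1-\beta)+(1+\beta)\nu\bigr)<2(1+\beta)(1-\beta).
\]
I then use $1-\beta\le 2(1-q)$, which follows from $\beta\le q^2$, and $1+\beta\le2$, and solve for $1-q$. This should give \eqref{eq:curvature-noise-necessity}. The bookkeeping of constants here, needed to reach exactly $4/(\nu\kappa+2)$ rather than something like $4/(\nu\kappa)$, is where I expect most of the fiddling; I would first try keeping the $(1-\beta)$ term rather than discarding it.

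For \eqref{eq:rate-variance-lower}, I bound the denominator of $V$ above by $2L(1-\beta^2)$. This gives $V\ge\gamma\sigma_L^2/(2L(1-\beta))$, and hence
\[
\frac{\Sigma_L^{\rm av}}{\sigma_L^2/L^2}=1+\frac{\nu L^2V}{\sigma_L^2}\ge1+\frac{\nu\gamma L}{2(1-\beta)}\ge1+\frac{\nu\kappa}{2}\frac{1-q}{1+q}.
\]
The corollary then follows from $1+q\le2$ together with $1-q\ge a/\sqrt\kappa$.

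For part~(ii), I plug $\gamma=L^{-1}$ and $\beta=(1-\kappa^{-1/2})^2$ into the second-moment map.
\begin{itemize}
\item At $\nu=0$, every $p_{\lambda_i}$ has complex or double roots of modulus $\sqrt\beta=1-\kappa^{-1/2}$. This gives the lower bound $q\ge1-\kappa^{-1/2}$ directly.
\item For the upper bound $q<\bar q$, I would apply the scalar criterion of Appendix~\ref{app:curvature-rate} for a prescribed factor. That criterion amounts to checking positivity of the determinant of $\bar q^2I$ minus the second-moment map. I would treat the $v_i$-term as a perturbation of size at most $\nu\lambda_i^2\gamma^2\le\nu$, and use $\nu\le\kappa^{-1/2}/16$ to keep it below the spectral gap $\bar q^2-\beta\gtrsim\kappa^{-1/2}$.
\end{itemize}
This is the main obstacle, because the gap between $\bar q$ and $\sqrt\beta$ is only of order $\kappa^{-1/2}$, and near-double roots make the resolvent large. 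So the perturbation estimate must be done on the explicit $3\times3$ determinant rather than through a generic norm bound.

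Finally, for \eqref{eq:rate-variance-upper}, the lower bound $1\le$ is immediate since $vV\ge0$. For the upper bound, I use $V$ from \eqref{eq:curvature-variances} with $\gamma=1/L$. The denominator is at least $2\lambda_i(1-\beta^2)-L^{-1}(1-\beta)\lambda_i^2-2\nu\lambda_i^2/L$. Since $\lambda_i\le L$, $1-\beta\ge\kappa^{-1/2}$ and $\nu$ is small, this is at least of order $\lambda_i(1-\beta)$. That yields $v_iV_i/\sigma_i^2\le\nu\lambda_i^2\gamma/(\lambda_i(1-\beta))\cdot C$. I would track $C\le\kappa^{1/2}\lambda_i^{-1}\cdot\lambda_i$ so that the total is at most $\nu\sqrt\kappa$.
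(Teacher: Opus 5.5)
\textbf{Part (i).} Your bound $\gamma L/(1-\beta)\ge\kappa(1-q)/(1+q)$ is correct. The paper gets $\gamma\mu\ge(1-q)(1-\beta/q)$ in one step from $p_\mu(q)\ge0$; your real/complex split evaluated at $z=1$ also works. Your derivation of \eqref{eq:rate-variance-lower} is the paper's.

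The finish of \eqref{eq:curvature-noise-necessity}, however, is wrong as written. Substituting $\gamma L\ge\kappa(1-\beta)\frac{1-q}{1+q}$ into $\gamma L\bigl((1-\beta)+(1+\beta)\nu\bigr)<2(1-\beta^2)$ gives $\kappa\frac{1-q}{1+q}\bigl((1-\beta)+(1+\beta)\nu\bigr)<2(1+\beta)$. Your display carries a spurious factor $1-\beta$ on the right. Also, $1-\beta\le2(1-q)$ does not follow from $\beta\le q^2$: that hypothesis gives $1-\beta\ge1-q^2$. Your inequality indeed fails at $\beta=0$ with small $\gamma$, where $q\ge1-\gamma\mu\to1$. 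No bound on $1-\beta$ is needed. Discard the nonnegative $(1-\beta)$ term and cancel $1+\beta$ to get $\nu\kappa(1-q)<2(1+q)=4-2(1-q)$, i.e.\ $(1-q)(\nu\kappa+2)<4$.

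\textbf{Part (ii).} The substantive claim $q<\bar q$ is not proved, and the comparison you propose would not close it. Set $r=\bar q$, $s=\kappa^{-1/2}$, $t=\lambda_i/L\in[s^2,1]$, and $D(t)=(r^2+\beta)^2/r^2-(1+\beta-t)^2$. By Lemma~\ref{lem:curvature-rate-criterion}, one needs $\gamma^2v_i<\frac{r^2-\beta}{r^2+\beta}D(t)$ for every $i$. This margin is not of the order of the gap $\bar q^2-\beta\asymp s$. At $t=s^2$ the mean roots are double, $D(s^2)=(r^2-\beta)^2/r^2$, so the margin is of order $s^3$. That is far below your uniform bound $\gamma^2v_i\le\nu\le s/16$. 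Working on the explicit $3\times3$ determinant does not rescue a perturbation bound that is uniform in $i$.

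The missing idea is to keep the $t$-dependence on both sides. On the noise side, $\gamma^2v_i\le\nu t^2\le st/16$. On the margin side, $D(s^2)\ge25s^2/64$ and $D(t)-D(s^2)=(t-s^2)\bigl(2(1+\beta)-t-s^2\bigr)\ge t-s^2$, so $D(t)\ge25t/64$. Together with $\frac{r^2-\beta}{r^2+\beta}\ge5s/16$, this gives a margin of at least $\frac{125}{1024}st>\frac{st}{16}$, uniformly in $t$.

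Your lower bound $q\ge\sqrt\beta$ is fine, since the Jensen comparison with the mean recursion does not involve $v_i$. Your route to \eqref{eq:rate-variance-upper} is also fine. To get the exact constant, write the ratio minus one as $\frac{(v_i/\lambda_i^2)t}{(1-\beta)(2-t/(1+\beta))-(v_i/\lambda_i^2)t}$. Then bound the denominator below by $s(2-s)-s/16\ge s$, which gives at most $\nu t/s\le\nu\sqrt\kappa$.
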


Under part~\textup{(i)}, accelerated forgetting is impossible when $\nu\sqrt\kappa$ grows without bound. When both parts apply and $\sigma_L^2>0$, the relative variance increase over the fixed-curvature reference is of order $\nu\sqrt\kappa$ in the $L$ direction at the tuning in part~\textup{(ii)}. Thus accelerated forgetting can coexist with a nonvanishing variance increase. When $\nu\sqrt\kappa$ tends to zero, this tuning also makes the variance ratio tend to one in every coordinate with positive additive-noise variance. Appendix~\ref{app:rate-variance-proof} proves the theorem.

In the coordinate-sampled least-squares example of Appendix~\ref{app:coordinate-batching}, averaging $B$ iid samples per iteration gives $\nu=(d-1)/B$. Thus $B\ge16(d-1)\sqrt\kappa$ ensures the curvature-variance condition in part~\textup{(ii)}. Since each iteration uses $B$ sample gradients, this acceleration in iteration count alone does not establish a reduction in total gradient evaluations.

\subsection{Error of the averaged SGDM estimator}
\label{sec:quadratic-finite-averaging}

We bound finite-average mean-square error for stationary and deterministic initializations using a coupling bound with a constant independent of $\kappa$.

For integers $b\ge0$ and $n\ge1$, let
\begin{equation*}
 \bar\theta_{b,n}=\frac1n\sum_{k=b+1}^{b+n}\theta_k,
 \qquad \mathcal S=\operatorname{tr}\Sigma_{\gamma,\beta}^{\rm av}=\sum_i\Sigma_i^{\rm av}.
\end{equation*}
Here $b$ is the burn-in length. For deterministic $\theta_0,y_0$, define the weighted mean-square initial difference from a stationary trajectory driven by the same observations, including its stationary variances, by
{
\begin{equation*}
 \mathcal D_0=|\theta_0-\theta^\star|^2+\sum_i\frac{L}{\lambda_i}\bigl((Q^\top y_0)_i\bigr)^2+\left(1+\frac2{1+\beta}\right)\sum_iV_i.
\end{equation*}
}

\begin{theorem}[Mean-square error of the averaged SGDM estimator]
\label{thm:rv-finite-horizon}
Under the assumptions and tuning in part~\textup{(ii)} of Theorem~\ref{thm:curvature-acceleration}, there is a universal constant $C$ such that a stationary initialization satisfies
\begin{equation}
 \left|\mathbb E|\bar\theta_{b,n}^\circ-\theta^\star|^2-\frac{\mathcal S}{n}\right|
 \le\frac{C\sqrt\kappa\,\mathcal S}{n^2}.
 \label{eq:rv-stationary-remainder}
\end{equation}
For every deterministic initial state,
\begin{equation}
 \|\bar\theta_{b,n}-\theta^\star\|_2\le\left(\frac{\mathcal S}{n}\left(1+\frac{C\sqrt\kappa}{n}\right)\right)^{1/2}+\frac{C\sqrt{\kappa\mathcal D_0}}{n}\exp\left(-\frac{b+1}{2\sqrt\kappa}\right).
 \label{eq:rv-finite-horizon-main}
\end{equation}
If $y_0=0$, then $\mathcal D_0\le|\theta_0-\theta^\star|^2+3\sqrt\kappa\,\mathcal S$.
\end{theorem}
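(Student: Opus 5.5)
The plan is to work coordinatewise in the eigenbasis $Q$. In the quadratic model the rotated error coordinates $e_{i,k}=(Q^\top(\theta_k-\theta^\star))_i$ satisfy scalar random-coefficient AR(2) recursions
\[
e_{i,k}=(1+\beta-\gamma H_{i,k})e_{i,k-1}-\beta e_{i,k-2}-\gamma(Q^\top\varepsilon_k)_i .
\]
The coordinates interact only through cross-dependence of the innovations, and only the diagonal second moments enter $\mathbb E|\cdot|^2$. So it suffices to bound, for each $i$, the mean-square error of the scalar average and then sum over $i$.

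For the stationary bound \eqref{eq:rv-stationary-remainder}, write
\[
\mathbb E|\bar\theta^\circ_{b,n}-\theta^\star|^2=\frac1n\sum_{|h|<n}\Bigl(1-\frac{|h|}{n}\Bigr)\operatorname{tr}\Cov(\theta_0^\circ,\theta_h^\circ).
\]
Subtracting $\mathcal S/n$ leaves
\[
-\frac1n\sum_{|h|\ge n}c_h-\frac1{n^2}\sum_{|h|<n}|h|c_h,\qquad c_h=\operatorname{tr}\Cov(\theta_0^\circ,\theta_h^\circ).
\]
Both are bounded by $n^{-2}\sum_h|h||c_h|$. For $h\ge1$ the stationary autocovariance obeys $c^{(i)}_h=\mathbb E[e_{i,0}\,\mathbb E(e_{i,h}\mid\mathcal F_0^\circ)]$, and the conditional mean follows the deterministic heavy-ball recursion with mean curvature $\lambda_i$. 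So $c^{(i)}_h$ is a linear combination of the powers of the $2\times2$ companion matrix $T_i$ applied to $(V_i,\Cov(e_{i,0},e_{i,-1}))$.

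At the tuning \eqref{eq:curvature-retuning}, $T_i$ has spectral radius $\sqrt\beta=1-\kappa^{-1/2}$. Since the eigenvalues may be complex or nearly repeated, the key step is an explicit bound $|T_i^h|\le C(1+h\sqrt{\gamma\lambda_i})\,(1-\kappa^{-1/2})^h$, or an equivalent bound. It should yield
\[
\sum_h|h||c^{(i)}_h|\le C\sqrt\kappa\,\Sigma_i^{\rm av}
\]
with a universal $C$. I would derive this by comparing against the identity $\Sigma_i^{\rm av}=\sum_h c^{(i)}_h=(\sigma_i^2+v_iV_i)/\lambda_i^2$ from Proposition~\ref{prop:curvature-variance}, together with $V_i\lesssim\gamma\sigma_i^2/\lambda_i(1-\beta)$. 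The aim is to show that the absolute weighted sum exceeds the signed sum by at most a factor of order the mixing time $\sqrt\kappa$. I expect this to be the main obstacle: oscillating, non-normal transients can make $\sum|c_h|$ much larger than $\sum c_h$. I would first try an explicit diagonalization in the two regimes $\gamma\lambda_i$ below and above $(1-\sqrt\beta)^2$.

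For the deterministic bound \eqref{eq:rv-finite-horizon-main}, couple $\theta_k$ with the stationary trajectory driven by the same $(H_k,\varepsilon_k)$ and apply Minkowski's inequality. The stationary part gives the first term by \eqref{eq:rv-stationary-remainder}, after enlarging $C$. The difference $\Delta_k=\theta_k-\theta_k^\circ$ follows the noise-free random recursion. Using the mean-square contraction of Theorem~\ref{thm:curvature-acceleration}(ii) with $q<\bar q$, and a quadratic Lyapunov norm whose equivalence constants are universal, one gets
\[
\|\Delta_k\|_2^2\le C\bar q^{\,2k}\,\mathcal D_0 .
\]
To see why this holds, note that $\mathbb E|\Delta_0|^2$ and the momentum term are controlled by $\mathcal D_0$. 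The weight $L/\lambda_i$ accounts for the norm equivalence on the momentum coordinate, and the $(1+2/(1+\beta))V_i$ term bounds the stationary position and momentum variances; the stationary $y$-variance is $2V_i(1-\beta)/(1+\beta)$-type. Summing gives
\[
\frac1n\sum_{k>b}\|\Delta_k\|_2\le\frac{C\sqrt{\mathcal D_0}}{n}\,\frac{\bar q^{\,b+1}}{1-\bar q},
\]
and $1-\bar q=\kappa^{-1/2}/2$ together with $\bar q^{\,b+1}\le e^{-(b+1)/(2\sqrt\kappa)}$ yields the second term. Finally, for $y_0=0$, use $V_i\le\lambda_i^2\Sigma_i^{\rm av}/v_i$, or more directly the explicit bound $V_i\le\gamma\sigma_i^2/(\lambda_i(1-\beta))\cdot C\le\sqrt\kappa\,\sigma_i^2/\lambda_i^2$ at the tuning, and $\sigma_i^2/\lambda_i^2\le\Sigma_i^{\rm av}$. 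This gives $(1+2/(1+\beta))\sum_iV_i\le3\sqrt\kappa\,\mathcal S$.
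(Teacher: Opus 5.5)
\paragraph{Gap in the stationary bound.}
Bounding the remainder $-\frac1n\sum_{|h|\ge n}c_h-\frac1{n^2}\sum_{|h|<n}|h|c_h$ by $n^{-2}\sum_h|h||c_h|$ discards the signs. The inequality $\sum_h|h||c^{(i)}_h|\le C\sqrt\kappa\,\Sigma_i^{\rm av}$ that you then need is false in the stiff directions.

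Take $\lambda_i=L$ and $v_i=0$ at the tuning \eqref{eq:curvature-retuning}, with $s=\kappa^{-1/2}$. Then $a=1+\beta-\gamma\lambda_i=\beta$. The companion roots are complex with modulus $1-s$ and argument $\phi$ satisfying $\cos\phi=(1-s)/2$. The initial values $C_0=V_i$ and $C_1=\beta V_i/(1+\beta)$ give $c^{(i)}_h=V_i(1-s)^h\bigl(\cos h\phi+O(s)\sin h\phi\bigr)$ for $h\ge0$. By \eqref{eq:curvature-variances}, $V_i=\sigma_i^2(1+\beta)/\bigl(L^2(1-\beta)(1+2\beta)\bigr)\asymp\sqrt\kappa\,\Sigma_i^{\rm av}$, where $\Sigma_i^{\rm av}=\sigma_i^2/L^2$. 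Hence $\sum_h|h||c^{(i)}_h|\asymp V_i/s^2\asymp\kappa^{3/2}\Sigma_i^{\rm av}$, a factor $\kappa$ too large. If only this coordinate carries additive noise, your route gives at best $\kappa^{3/2}\mathcal S/n^2$.

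No sharper bound on $|T_i^h|$ can fix this. The envelope $V_i(1-s)^h$ is actually attained, and the loss comes from discarding cancellation in an oscillating sequence, not from non-normal transient growth. The absolute bound is harmless only near the slow direction $\lambda_i=\mu$, where the root is double and real and $c^{(i)}_h>0$.

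\paragraph{The missing idea.}
Keep the signed remainder. Per coordinate it equals $-2(J_0-J_n)/n^2$ with $J_n=\sum_{\ell\ge1}\ell C_{n+\ell}$. Evaluate it through the exact recursion $C_{j+1}=aC_j-\beta C_{j-1}$, $C_1=aV/(1+\beta)$.

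Summing this recursion plainly gives $\Sigma^{\rm av}=V(1-\beta)\bigl(2/t-1/(1+\beta)\bigr)$, hence $V\le t\Sigma^{\rm av}/s$ with $t=\lambda_i/L$. Summing it against $j$ gives $tJ_0=\bigl((1-\beta)\Sigma^{\rm av}-(1+\beta)V\bigr)/2$, hence $|J_0|\le\Sigma^{\rm av}/s$. So the signed weighted sum is of order $V/t$, not $V/s^2$.

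Moreover $J_{n+1}=aJ_n-\beta J_{n-1}$. Thus $(J_n)$ is a noise-free heavy-ball trajectory started from $(J_1,J_1-J_0)$, and a $\kappa$-uniform coupling bound gives $|J_n|\lesssim\Sigma^{\rm av}/s$ for all $n$.

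\paragraph{The coupling bound in your second part.}
That same $\kappa$-uniform coupling bound is the crux of your second part, where you only assert it. The Lyapunov norm behind Proposition~\ref{thm:curvature-rate} gives no control of constants in $\kappa$, so you must build one explicitly. For example, take $P_r$ with entries $1$, $-a\beta/(r^2+\beta)$, $\beta^2/r^2$ and $r=\bar q$. Then $r^2P_r-\mathbb E(B_{i,1}^\top P_rB_{i,1})$ is a nonnegative multiple of $\operatorname{diag}(1,0)$ by the rate computation in Theorem~\ref{thm:curvature-acceleration}(ii).

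Its equivalence constants are not universal. The lower bound is only of order $t\,x^2$ in the parameter difference $x$, and this is exactly where the weight $L/\lambda_i$ in $\mathcal D_0$ comes from. Relatedly, $\mathbb E(y^\circ_{i,0})^2=2t_iV_i/(1+\beta)$, not a quantity of order $(1-\beta)V_i$. Only with the factor $t_i$ does that weight produce the term $2V_i/(1+\beta)$ in $\mathcal D_0$.

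Your argument for the case $y_0=0$, via $V_i\le\sqrt\kappa\,\sigma_i^2/\lambda_i^2\le\sqrt\kappa\,\Sigma_i^{\rm av}$, is fine.
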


For $\mathcal S>0$, the stationary mean-square error is asymptotic to $\mathcal S/n$ when $n/\sqrt\kappa\to\infty$. The second bound controls initialization and burn-in. Appendix~\ref{app:rv-finite-horizon} gives explicit constants and sufficient averaging and burn-in lengths for a prescribed relative accuracy.

\section{Numerical experiments}
\label{sec:numerical}

Figure~\ref{fig:numerical-report} summarizes stationary behavior, averaging limits, quadratic burn-in, and the rate--variance tradeoff; Appendix~\ref{app:numerical} gives supplementary diagnostics.

\subsection[Stationarity and averaging limits]{Stationarity and averaging limits}
\label{sec:numerical-stationarity-averaging}

We use the scalar nonlinear model
$g(\theta,(R,\varepsilon))=(1+R)\theta+\frac14\sin\theta+\varepsilon$,
where $\varepsilon\sim N(0,1)$. The pairs $(R_k,\varepsilon_k)$ are iid, with independent components. Then $G(\theta)=\theta+\sin\theta/4$ and $\theta^\star=0$; throughout, $y_0=0$. Here $T_\alpha$ denotes a Student-$t$ random variable with $\alpha$ degrees of freedom.

Panel (a) takes $R=0.15T_{1.5}$ and $(\beta,\gamma)=(0.5,0.01)$, so the curvature has a finite moment of order $1.3$ but infinite variance. Across $\theta_0=1,2,4$, the marginal quantiles of $|\theta_k-\theta^\star|$ stabilize at similar levels, consistent with initialization forgetting toward a common stationary regime in Theorem~\ref{thm:perron-gmc}. Figure~\ref{fig:numerical-appendix}(a) and Table~\ref{tab:gmc} give the fractional-moment coupling diagnostic and verification of the sufficient GMC conditions.

Panel (b) uses $R=0.5T_3$ and $(\beta,\gamma)=(0.5,0.05)$, in the finite-second-moment regime. For $0\le s\le1$, define
$\mathbb S_n(s)=\sum_{k=1}^{\lfloor ns\rfloor}(\theta_k-\theta_\infty^\star(\gamma,\beta))/\sqrt{n\hat\Sigma^{\rm av}}$,
using a common reference variance estimate $\widehat\Sigma^{\rm av}$. With $M_n=\sup_{0\le s\le1}|\mathbb S_n(s)|$, we compare its law with that of $\sup_{0\le s\le1}|B(s)|$ for standard Brownian motion $B$. The empirical Kolmogorov distances decrease to small values for all three initial values $\theta_0=-3,1,3$, illustrating Theorem~\ref{thm:fclt}. At the largest $n$, the distances are below the empirical-CDF sampling bound, so their ordering is unresolved; Appendix~\ref{app:numerical-invariance} gives the sampling bounds and reference-variance calculation.

\begin{figure}[H]
    \centering
    \includegraphics[width=\textwidth]{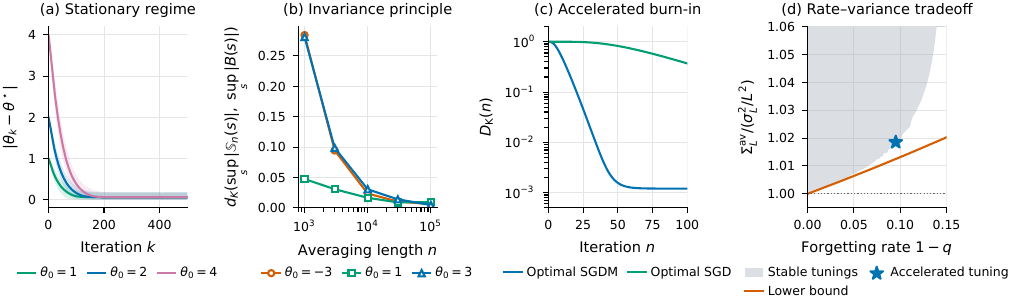}
    \caption{(a) Stationary behavior: marginal median (lines) and 5--95\% quantiles (bands) of $|\theta_k-\theta^\star|$ under nonlinear heavy-tailed SGDM. (b) Kolmogorov distance between the normalized partial-sum maximum and the Brownian maximum. (c) Kolmogorov distance under optimal quadratic tuning. (d) Averaged limiting variance versus forgetting rate under stochastic curvature, with the theoretical lower bound and explicit tuning. The gray region is drawn by interpolating the numerical lower envelope.}
    \label{fig:numerical-report}
\end{figure}

\subsection[Quadratic burn-in and rate--variance tradeoff]{Quadratic burn-in and rate--variance tradeoff}
\label{sec:numerical-contraction}

Panel (c) considers the additive-noise quadratic with $A=\diag(1,100)$, comparing SGDM and SGD at their optimal tunings for quadratic forgetting. For each method, $D_{\rm K}(n)$ measures the first-coordinate Kolmogorov distance to its small-step Gaussian reference using its own normalized learning rate (Appendix~\ref{app:numerical-quadratic}). SGDM removes the slow-coordinate initialization transient faster, consistent with the quadratic forgetting rates. The small-step limit of Theorem~\ref{thm:last-iterate-clt} is illustrated separately in Figure~\ref{fig:numerical-appendix}(d) and Table~\ref{tab:student_t}, using decreasing $\gamma$ at fixed $\beta=0.5$ with burn-in satisfying~\eqref{eq:last-iterate-burnin}.

Panel (d) examines forgetting and averaged fluctuations in model~\eqref{eq:commuting-curvature-model}, with $(\mu,L)=(0.01,1)$ and relative curvature variance $\nu=0.005$. We plot $\Sigma_L^{\rm av}/(\sigma_L^2/L^2)$ against $1-q$ over stable tunings. The lower bound~\eqref{eq:rate-variance-lower} increases with the required forgetting rate; $\sigma_L^2/L^2$ is the fixed-curvature reference with the same additive noise, not the SGD variance under model~\eqref{eq:commuting-curvature-model}. The star marks the explicit accelerated tuning~\eqref{eq:curvature-retuning}. The gray lower edge interpolates a numerical grid envelope and does not certify attainability. Appendix~\ref{app:numerical-curvature} checks the variance formulas by Monte Carlo.

\section{Conclusion}
We developed a nonlinear time-series analysis of constant-step SGDM based on geometric moment contraction. Under second moments, this contraction yields a central limit theorem for averaged SGDM. At a fixed step size, averaging estimates the stationary mean, which can differ from the minimizer. In quadratics with commuting random curvature, faster forgetting need not reduce averaged fluctuations. Our bounds specify when the variance increase from accelerated tuning becomes negligible relative to the variance with fixed curvature.

Our last-iterate Gaussian limit keeps the momentum coefficient fixed as the step size decreases. Future work could allow the coefficient to approach one at the same time, and extend the comparison between forgetting rates and averaged variance to nonquadratic objectives.

\bibliographystyle{plainnat}
\bibliography{references.bib}

\newpage
\appendix
\section*{Appendix}
\section{Proofs for geometric moment contraction}
\label{app:gmc-proofs}

\subsection{Proof of Theorem~\ref{thm:perron-gmc}}
\label{app:gmc-distance}

\begin{proof}

Let
{
\[
    \Delta\theta_k=\theta_k-\theta_k',\quad \Delta y_k=y_k-y_k',\quad
    \Delta u_k=\Delta\theta_k+\frac{\beta}{1-\beta}\Delta y_k,\quad
    h_k=g(\theta_{k-1},x_k)-g(\theta_{k-1}',x_k).
\]
}
By~\eqref{eq:sgdm},
{
\[
    \begin{pmatrix}\Delta u_k\\\Delta y_k\end{pmatrix}
    =\begin{pmatrix}I_d&0\\0&\beta I_d\end{pmatrix}
      \begin{pmatrix}\Delta u_{k-1}\\\Delta y_{k-1}\end{pmatrix}
      -\begin{pmatrix}\eta I_d\\\gamma I_d\end{pmatrix}h_k.
\]
}
The ordinary-SGD coefficient $q_{\eta,p}$ is defined in~\eqref{eq:sgd-contraction-coefficient}.

Let $\Delta u=\Delta u_{k-1}$, $v=\Delta y_{k-1}$, and
{
\[
    \hat h=g(\theta'_{k-1}+\Delta u,x_k)-g(\theta'_{k-1},x_k).
\]
}
By the ordinary-SGD bound,
{
\[
    \big(\mathbb E[|\Delta u-\eta\hat h|^p\mid\mathcal F_{k-1}]\big)^{1/p}
    \le q_{\eta,p}|\Delta u|.
\]
}
Since $\Delta\theta_{k-1}=\Delta u-\beta v/(1-\beta)$, Assumption~\ref{ass:lp-lipschitz-gradient} gives
{
\[
    \big(\mathbb E[|h_k-\hat h|^p\mid\mathcal F_{k-1}]\big)^{1/p}
    \le\frac{\beta L_p}{1-\beta}|v|,
    \qquad
    \big(\mathbb E[|h_k|^p\mid\mathcal F_{k-1}]\big)^{1/p}
    \le L_p\left(|\Delta u|+\frac{\beta}{1-\beta}|v|\right).
\]
}
Conditional on $\Fcal_{k-1}$, Minkowski therefore gives
{
\begin{equation}\
\begin{pmatrix}
(\mathbb E[|\Delta u_k|^p\mid\mathcal F_{k-1}])^{1/p}\\
(\mathbb E[|\Delta y_k|^p\mid\mathcal F_{k-1}])^{1/p}
\end{pmatrix}
\le \underbrace{\begin{pmatrix}q_{\eta,p}&\dfrac{\beta\eta L_p}{1-\beta}\\[2mm](1-\beta)\eta L_p&\beta(1+\eta L_p)\end{pmatrix}}_{M_{\gamma,\beta,p}}\binom{|\Delta u|}{|v|}.
\label{eq:slow-fast-component-bounds}
\end{equation}
}
\label{eq:perron-comparison}
Write $q=q_{\eta,p}$, $\alpha=\frac{\beta\eta L_p}{1-\beta}$, $\zeta=(1-\beta)\eta L_p$, and $ s=\beta(1+\eta L_p)$, such that $M=\begin{pmatrix}q&\alpha\\\zeta&s\end{pmatrix}$.
Its Perron root is
\[
    r=\frac{q+s+\sqrt{(q-s)^2+4\alpha\zeta}}{2}.
\]
Condition~\eqref{eq:perron-condition} assumes $q<1$ and $0<\alpha\zeta<(1-q)(1-s)$, which imply $s<1$ and $\det(I_2-M)=(1-q)(1-s)-\alpha\zeta>0$, hence $q<r<1$.

For the Perron eigenvectors we choose $b=(\alpha,r-q)^\top$ and $\ell=(\zeta,r-q)^\top$, such that
{
\[
    Mb=rb,\qquad \ell^\top M=r\ell^\top,\qquad
    \pi_{ij}=\frac{M_{ij}b_j}{rb_i}\ge0,\qquad \sum_j\pi_{ij}=1.
\]
}
Here $i,j\in\{1,2\}$, with coordinate $1$ corresponding to $\Delta u$ and coordinate $2$ to $\Delta y$.
For $a\in[0,\infty)^2$, Jensen's inequality gives
\begin{align*}
    \sum_i\frac{\ell_i}{b_i^{p-1}}(M a)_i^p
    &=r^p\sum_i\ell_i b_i\left(\sum_j\pi_{ij}\frac{a_j}{b_j}\right)^p\\
    &\le r^{p-1}\sum_{i,j}\ell_iM_{ij}\frac{a_j^p}{b_j^{p-1}}
    =r^p\sum_j\frac{\ell_j}{b_j^{p-1}}a_j^p.
\end{align*}
Normalizing the weights gives
\[
    \frac{\ell_2/b_2^{p-1}}{\ell_1/b_1^{p-1}}=
    \frac{\alpha^{p-1}(r-q)^{2-p}}{\zeta}
    =\frac{\beta^{p-1}}{(1-\beta)^p}
      \left(\frac{r-q}{\eta L_p}\right)^{2-p}=w.
\]

As such, we define
\begin{equation}
    \dist_{\mathrm{sf},p}(z,z')^p=\left|\theta-\theta'+\frac{\beta}{1-\beta}(y-y')\right|^p+w|y-y'|^p.
    \label{eq:slow-fast-metric}
\end{equation}

With $a=(|\Delta u|,|v|)^\top$,~\eqref{eq:slow-fast-component-bounds} yields
{
\[
    \mathbb E[|\Delta u_k|^p+w|\Delta y_k|^p\mid\mathcal F_{k-1}]
    \le r^p(|\Delta u|^p+w|v|^p).
\]
}
This proves the one-step contraction. Conditioning and iteration also extend it to any random initial pair independent of the future innovations.

For fixed $\beta$, condition~\eqref{eq:perron-condition} holds for sufficiently small $\eta>0$, since
\begin{align*}
    q_{\eta,p}&=1-\mu\eta+O(\eta^{p\wedge2})<1,\qquad
    \det(I_2-M)=\mu(1-\beta)\eta+O(\eta^{p\wedge2})>0.
\end{align*}

To obtain the leading rate of $r$, the eigenvalue equation $\det(rI_2-M)=(r-q)(r-s)-\alpha\zeta = 0$ gives
$r= q + \frac{\alpha\zeta}{r-s}$.
For fixed $\beta$, the Perron root formula yields $r-s=\frac{q-s+\sqrt{(q-s)^2+4\alpha\zeta}}{2} \to 1-\beta>0$ as $\eta \to 0$. Thus
\[
    r=q+\frac{\beta\eta^2L_p^2}{r-s}=q+O_\beta(\eta^2)=1-\mu\eta+O_{\beta,p}(\eta^{p\wedge2}).
\]

Existence, uniqueness, and forgetting follow from the construction below.
\end{proof}

\subsection{Construction and uniqueness of the stationary process}
\label{app:stationary-construction}

We give the construction under either set of sufficient contraction conditions in Section~\ref{sec:gmc}. Write $\dist$ for the corresponding distance, which is induced by a norm equivalent to the Euclidean norm, and write $r<1$ for its one-step contraction factor. Let $z^\star=(\theta^\star,0)$ and
{
\[
    D_p=\|\dist(\Phi_{x_0}(z^\star),z^\star)\|_p.
\]
}
For the distance~\eqref{eq:slow-fast-metric},
\[
    D_p=(\eta^p+w\gamma^p)^{1/p}M_p<\infty.
\]
This also applies to $\dist=\dist_\beta$ with $p=2$.

Using the two-sided innovations, define $B_0=z^\star$ and
\[
    B_m=\Phi_{x_0}\circ\Phi_{x_{-1}}\circ\cdots\circ\Phi_{x_{1-m}}(z^\star),\qquad m\ge1.
\]
The extra remote innovation in $B_{m+1}$ is independent of the $m$ common subsequent maps. Iterating the conditional contraction gives
{
\[
    \|\dist(B_{m+1},B_m)\|_p\le r^mD_p.
\]
}
The series is summable. The sequence is Cauchy in $L^p$ in this complete equivalent norm, and its increments are absolutely summable almost surely by Tonelli's theorem. Its limit $Z_0^\circ$ is measurable with respect to $\sigma(x_j:j\le0)$. Shifting the construction defines a stationary causal process $Z_k^\circ$. The one-step contraction allows us to pass to the $L^p$ limit in the recursion because $x_{k+1}$ is independent of the approximating states. Hence $Z_{k+1}^\circ=\Phi_{x_{k+1}}(Z_k^\circ)$, and
{
\begin{equation}
    \|\dist(Z_0^\circ,z^\star)\|_p
    \le\frac{D_p}{1-r}.
    \label{eq:perron-stationary-moment}
\end{equation}
}
Coupling a deterministic initialization with $Z_0^\circ$ using the same future innovations proves~\eqref{eq:perron-forgetting}. If another stationary causal solution has a finite $p$th moment, its distance from $B_m$ is at most $r^m$ times its stationary $L^p$ distance from $z^\star$. Letting $m\to\infty$ proves equality with $Z_0^\circ$.

The invariant distribution is unique even among invariant distributions without an assumed moment. To see this, start independent states $Z_0\sim\nu$ and $Z_0^\circ\sim\pi$, where $\nu$ is any invariant distribution and $\pi$ is the law just constructed. Both are independent of future innovations. For any function $f$ bounded by one and Lipschitz with constant one in $\dist$, conditional contraction gives
{
\[
    |\nu f-\pi f|
    \le\mathbb E\big[\min(2,r^k \dist(Z_0,Z_0^\circ))\big]\to0
\]
}
by dominated convergence. Bounded Lipschitz functions determine probability laws in this metric, so $\nu=\pi$. In particular, every invariant law has the finite moment in~\eqref{eq:perron-stationary-moment}.

For completeness, when $\beta=0$ and $q=q_{\gamma,p}<1$, choose
{
\[
    c_0=\frac{1-q}{2\gamma L_p},\qquad
    \dist_0(z,z')=|\theta-\theta'|+c_0|y-y'|.
\]
}
Then
{
\[
    \|\dist_0(\Phi_{x_0}(z),\Phi_{x_0}(z'))\|_p
    \le(q+c_0\gamma L_p)|\theta-\theta'|
    \le\frac{1+q}{2}\dist_0(z,z').
\]
}
The displacement at $z^\star$ has moment $\gamma(1+c_0)M_p$. The same construction therefore covers the augmented state at zero momentum without using a degenerate weight.

\subsection{\texorpdfstring{Proof of Corollary~\ref{thm:all-beta-gmc}}{Proof of the second-moment corollary}}

\begin{proof}
Let $L=L_2$ and let $\mathcal F_{k-1}$ include both coupled initial states and the shared observations through time $k-1$. Condition on $\mathcal{F}_{k-1}$ and let
\[
    \bar h_k=G(\theta_{k-1})-G(\theta_{k-1}').
\]
Then $\Delta\theta_{k-1}$ and $\Delta y_{k-1}$ are fixed, and Assumption~\ref{ass:global} gives
\begin{equation}
    \langle \Delta\theta_{k-1},\bar h_k\rangle\ge\mu|\Delta\theta_{k-1}|^2,
    \qquad
    \mathbb{E}(|h_k|^2\mid\mathcal{F}_{k-1})\le L^2|\Delta\theta_{k-1}|^2,
    \qquad
    |\bar h_k|\le L|\Delta\theta_{k-1}|.
    \label{eq:appendix-sector-bounds}
\end{equation}

Let
\[
    c_\beta=\frac{\beta}{1-\beta}, \qquad a_\beta=\frac{\beta}{(1-\beta)^2}, \qquad P_\beta=\begin{pmatrix}
        1&c_\beta\\
        c_\beta&c_\beta^2+a_\beta
    \end{pmatrix}\succ0.
\]
For $a,b\in\mathbb R^d$, define the block norm by
{
\[
    |(a,b)|_{P_\beta}^2:=\begin{pmatrix}a\\b\end{pmatrix}^{\top}(P_\beta\otimes I_d)\begin{pmatrix}a\\b\end{pmatrix}=|a+c_\beta b|^2+a_\beta|b|^2.
\]
}
Thus $\dist_\beta$ is the distance induced by this block norm:
\[ \dist_\beta((\theta,y),(\theta',y'))^2=|(\theta-\theta',y-y')|_{P_\beta}^2. \]
Expanding the Lyapunov function and taking the conditional expectation gives
{
\begin{align*}
\mathbb{E}\Big[|\Delta u_k|^2+a_\beta|\Delta y_k|^2\mid\mathcal{F}_{k-1}\Big]
&= |\Delta u_{k-1}|^2+a_\beta|\Delta y_{k-1}|^2
    -2\eta\langle \Delta u_{k-1},\bar h_k\rangle
    -2a_\beta\beta\gamma\langle \Delta y_{k-1},\bar h_k\rangle\\
&\quad +(\eta^2+a_\beta\gamma^2) \mathbb{E}(|h_k|^2\mid\mathcal{F}_{k-1})
    -a_\beta(1-\beta^2)|\Delta y_{k-1}|^2\\
&\le |\Delta u_{k-1}|^2+a_\beta|\Delta y_{k-1}|^2
    -\left(\frac{2\mu\gamma}{1-\beta}
        -\frac{1+\beta}{(1-\beta)^2}L^2\gamma^2\right)|\Delta\theta_{k-1}|^2\\
&\quad + 2\frac{\beta(1+\beta)}{(1-\beta)^2}L\gamma|\Delta\theta_{k-1}||\Delta y_{k-1}|
-\frac{\beta(1+\beta)}{1-\beta}|\Delta y_{k-1}|^2,
\end{align*}
}
using bounds from \eqref{eq:appendix-sector-bounds}.

Define
\begin{equation*}
    K_{\gamma,\beta}=\begin{pmatrix}
        \dfrac{2\mu\gamma(1-\beta)-(1+\beta)L^2\gamma^2}{(1-\beta)^2} & -\dfrac{\beta(1+\beta)L\gamma}{(1-\beta)^2}\\[3mm]
        -\dfrac{\beta(1+\beta)L\gamma}{(1-\beta)^2} & \dfrac{\beta(1+\beta)}{1-\beta}
    \end{pmatrix}.
\end{equation*}
Then the preceding bound can be written as
{
\begin{equation}
\begin{aligned}
    \mathbb{E}\left[|(\Delta\theta_k,\Delta y_k)|_{P_\beta}^2\mid\mathcal{F}_{k-1}\right] &\le |(\Delta\theta_{k-1},\Delta y_{k-1})|_{P_\beta}^2
    -\begin{pmatrix}
        |\Delta\theta_{k-1}|\\
        |\Delta y_{k-1}|
    \end{pmatrix}^{\top}K_{\gamma,\beta}\begin{pmatrix}
        |\Delta\theta_{k-1}|\\
        |\Delta y_{k-1}|
    \end{pmatrix}.
\end{aligned}
\label{eq:all-beta-decrement}
\end{equation}
}
Direct algebra gives
\[ \det K_{\gamma,\beta}=\frac{\beta(1+\beta)\gamma}{(1-\beta)^4}\left(2\mu(1-\beta)^2-(1+\beta)L^2\gamma\right). \]
Under the $\gamma$ range from \eqref{eq:all-beta-step}, both the leading principal minor and the determinant are positive, so $K_{\gamma,\beta}\succ0$.
We now bound the decrement in~\eqref{eq:all-beta-decrement} from below.
Let $w_{k-1}=\big(|\Delta\theta_{k-1}|,|\Delta y_{k-1}|\big)^\top$.
Then
{
\begin{align*}
    w_{k-1}^\top P_\beta w_{k-1}
    &=|\Delta\theta_{k-1}|^2+2c_\beta|\Delta\theta_{k-1}||\Delta y_{k-1}| +(c_\beta^2+a_\beta)|\Delta y_{k-1}|^2\\
    &\ge|\Delta\theta_{k-1}+c_\beta\Delta y_{k-1}|^2+a_\beta|\Delta y_{k-1}|^2
    = |(\Delta\theta_{k-1},\Delta y_{k-1})|_{P_\beta}^2
\end{align*}
}

As such, for
\begin{equation*}
    \kappa_{\gamma,\beta} =\min\left(\frac12, \lambda_{\min}\left(P_\beta^{-1/2}K_{\gamma,\beta}P_\beta^{-1/2}\right)\right), \qquad r_{\rm MS}(\gamma,\beta)=(1-\kappa_{\gamma,\beta})^{1/2},
\end{equation*}
we have
{
\[
    w_{k-1}^\top K_{\gamma,\beta}w_{k-1}
    \ge\kappa_{\gamma,\beta}w_{k-1}^\top P_\beta w_{k-1}
    \ge\kappa_{\gamma,\beta}|(\Delta\theta_{k-1},\Delta y_{k-1})|_{P_\beta}^2.
\]
}
Substitution into \eqref{eq:all-beta-decrement} gives
\begin{equation}
    \mathbb{E}\left[
        |(\Delta\theta_k,\Delta y_k)|_{P_\beta}^2
        \mid \mathcal{F}_{k-1}
    \right]
    \le
    (1-\kappa_{\gamma,\beta})
    |(\Delta\theta_{k-1},\Delta y_{k-1})|_{P_\beta}^2.
\end{equation}
For deterministic previous states, taking square roots gives the one-step bound~\eqref{eq:all-beta-gmc}. Conditioning and iteration give the forgetting bound.

The construction in Appendix~\ref{app:stationary-construction}, with $\dist=\dist_\beta$, $p=2$, and $r=r_{\rm MS}$, gives the unique invariant law and stationary causal solution.
\end{proof}

\subsection{Comparison of sufficient conditions}
\label{app:stability-comparision}
The second-moment calculation proves
\begin{equation}
\begin{split}
    &\|\dist_\beta(\Phi_{x_0}(z),\Phi_{x_0}(z'))\|_2 \le r_{\rm MS}(\gamma,\beta)\dist_\beta(z,z')
\end{split}
\label{eq:all-beta-gmc}
\end{equation}
The two conditions are sufficient for the same contraction property. The theorem covers every $p>1$, while the corollary gives a simple explicit bound when a second moment is available. For example, if $\mu=L_2=1$, $\beta=1/2$, and $\gamma=0.3$, the corollary applies since $\gamma_{\rm MS}=1/3$. Yet $\eta=0.6$ and $q_{\eta,2}=0.4$, so the second Perron inequality would require $0.18<0.12$. The bound~\eqref{eq:all-beta-step} is sufficient for nonlinear stochastic gradients. Appendix~\ref{app:sharp-gmc-proofs} gives the additive quadratic boundary.

\section{Proofs for averaging and the invariance principle}
\label{app:fclt-proofs}

\subsection{Proof of Proposition~\ref{prop:physical-dependence}}

\begin{proof}

Let $\theta_k^{(0)}=H(\mathcal X_k^{(0)})$, and let $y_k^{(0)}$ denote the corresponding momentum under the same innovation replacement. Define $Z_k^{(0)}=(\theta_k^{(0)},y_k^{(0)})$. Write $\dist$ for the norm distance in Theorem~\ref{thm:perron-gmc}, Corollary~\ref{thm:all-beta-gmc}, or Proposition~\ref{thm:curvature-rate}, and choose $c_\theta$ such that $|\theta-\theta'|\le c_\theta \dist(z,z')$. The two processes use the same observations from time $1$ onward, so the coordinate bound and GMC give
{
\begin{align*}
    \delta_{k,p}
    &=\|\theta_k^\circ-\theta_k^{(0)}\|_p\\
    &\le c_\theta\|\dist(Z_k^\circ,Z_k^{(0)})\|_p\\
    &\le c_\theta r^k\|\dist(Z_0^\circ,Z_0^{(0)})\|_p=:C_pr^k.
\end{align*}
}
Since $Z_0^{(0)}$ and $Z_0^\circ$ have the same stationary law, the triangle inequality and the stationary moment bound yield
{
\[
    C_p\le2c_\theta\|\dist(Z_0^\circ,z^\star)\|_p\le\frac{2c_\theta D_p}{1-r}<\infty,
\]
}
where $z^\star=(\theta^\star,0)$ and $D_p=\|\dist(\Phi_{x_0}(z^\star),z^\star)\|_p$; see Appendix~\ref{app:stationary-construction} or~\eqref{eq:curvature-displacement}.

Summing the geometric tail gives $\Theta_{m,p}=O(r^m)$.
\end{proof}

\subsection{Proof of Theorem~\ref{thm:averaged-sgdm}}

\begin{proof}
Write $U_k=\theta_k^\circ-\theta_\infty^\star(\gamma,\beta)$.
By Proposition~\ref{prop:physical-dependence}, the functional dependence coefficients are summable:
\eqs*{
    \sum_{j=0}^\infty \delta_{j, p} \le \frac{C_p}{1 - r}.
}
Recall $\mathcal{P}_jX = \mathbb{E}[X\mid\mathcal F_j^\circ]-\mathbb{E}[X\mid\mathcal F_{j-1}^\circ]$.
Then $\mathcal P_0 U_k = \E [U_k - U_k^{(0)} \mid \mathcal F_0^\circ]$ satisfies
\eqs*{
    \sum_{k = 0}^\infty \|\mathcal P_0 U_k\|_p
    \le \sum_{k = 0}^\infty \|U_k - U_k^{(0)}\|_p
    = \sum_{k = 0}^\infty \delta_{k,p} < \infty.
}

Write $U_k=(U_{k,1},\ldots,U_{k,d})^\top$. Applying \citet[Theorem~1(iii)]{Wu2007Strong} to $U_{k,j}$ for $j=1,\ldots,d$ gives
\begin{align*}
    \left\|\max_{m\le n}\left|\sum_{k=1}^mU_k\right|\right\|_p
    &\le\sum_{j=1}^d\left\|\max_{m\le n}\left|\sum_{k=1}^mU_{k,j}\right|\right\|_p
    \lesssim n^{1/(p\wedge2)}.
\end{align*}

The bound $\sum_{j\ge m}\|\mathcal P_0U_j\|_p\le\Theta_{m,p}=O(r^m)$ satisfies the conditions of \citet[Corollary~2(iii)]{Wu2007Strong}, so the Marcinkiewicz--Zygmund strong law gives, for $1<p<2$,
\eqs*{
    \sum_{k=1}^nU_k=o_{\rm a.s.}(n^{1/p}).
}
For $p>2$, the same decay verifies condition (15) of \citet[Theorem~2(i)]{Wu2007Strong}. For $p=2$, his Theorem~2(ii) uses condition (9). These results give the law-of-the-iterated-logarithm upper bound:
\eqs*{
    \sum_{k=1}^nU_k=O_{\rm a.s.}(\sqrt{n\log\log n})
}
Division by $n$ proves the assertions for the stationary process.

Lastly, to consider $\bar\theta_n$, write
\[
    \bar\theta_n-\theta_\infty^\star=\frac1n\sum_{k=1}^n(\theta_k^\circ-\theta_\infty^\star)+\frac1n\sum_{k=1}^n(\theta_k-\theta_k^\circ).
\]
The first term has the rates proved above. For the second term, the coordinate bound $|\theta-\theta'|\le c_\theta \dist(z,z')$ and GMC give
{
\[
    \sum_{k\ge1}\|\theta_k-\theta_k^\circ\|_p
    \le \frac{c_\theta r}{1-r}\|\dist(Z_0,Z_0^\circ)\|_p<\infty.
\]
}
Since $p>1$, the same series is summable in $L^1$, and Tonelli's theorem gives $\sum_{k\ge1}|\theta_k-\theta_k^\circ|<\infty$ almost surely.
Thus the second term is $O_{L^p}(n^{-1})$ and $O_{\rm a.s.}(n^{-1})$, smaller than all the stated rates. The same conclusions therefore hold for $\bar\theta_n$ from every deterministic initial state.

\end{proof}

\subsection{Proof of Theorem~\ref{thm:fclt}}

\begin{proof}

Let $U_k=\theta_k^\circ-\theta_\infty^\star(\gamma,\beta)$. We first write the partial sum as a martingale sum plus an error. Then we use the vector martingale FCLT. By Proposition~\ref{prop:physical-dependence} and the projection bound,
\[
    \|\mathbb E(U_j\mid\mathcal F_0^\circ)\|_2\le\sum_{\ell\ge j}\|\mathcal P_0U_\ell\|_2\le\sum_{\ell\ge j}\delta_{\ell,2}=O(r^j).
\]
Thus $H_k:=\sum_{j\ge1}\mathbb E(U_{k+j}\mid\mathcal F_k^\circ)$ converges in $L^2$. Define $D_k:=U_k+H_k-H_{k-1}$. Then $D_k\in L^2$ since $U_k,H_k,H_{k-1}\in L^2$.
The sequence $(D_k)$ is stationary and ergodic because it is a function of the iid innovations.
Since $\mathbb E(H_k\mid\mathcal F_{k-1}^\circ)=H_{k-1}-\mathbb E(U_k\mid\mathcal F_{k-1}^\circ)$,
we have
\[
    \mathbb E(D_k\mid\mathcal F_{k-1}^\circ)=0,\qquad
    \sum_{k=1}^mU_k=\sum_{k=1}^mD_k+H_0-H_m.
\]

We next check the conditions of the martingale FCLT. For every $\epsilon>0$, ergodicity and $D_0\in L^2$ give
{
\begin{align*}
    \frac1n\sum_{k=1}^{\lfloor ns\rfloor}D_kD_k^\top&\to s\mathbb E(D_0D_0^\top)\quad\text{almost surely},\qquad 0\le s\le1,\\
    \frac1n\mathbb E\max_{1\le k\le n}|D_k|^2&\le\epsilon^2+\mathbb E\big[|D_0|^2\mathbf1_{\{|D_0|>\epsilon\sqrt n\}}\big]\to\epsilon^2.
\end{align*}
}
Let $\epsilon\downarrow0$ in the second bound. Then the expected maximum jump of $n^{-1/2}\sum_{k\le\lfloor ns\rfloor}D_k$ goes to zero. These two bounds verify the conditions of the vector martingale FCLT \citep[Theorem~2.1(i)]{Whitt2007MartingaleFCLT}; see also \citet{Hall1980Martingale}.
Hence,
{
\[
    \left(\frac1{\sqrt n}\sum_{k=1}^{\lfloor ns\rfloor}D_k:0\le s\le1\right)
    \Rightarrow\left(\bigl(\mathbb E(D_0D_0^\top)\bigr)^{1/2}B(s):0\le s\le1\right)
    \quad\text{in }D([0,1],\mathbb R^d).
\]
}

For the partial sums of $U_k$ and $D_k$ to have the same process limit, we need $\max_{0\le m\le n}|H_0-H_m|/\sqrt n\to_{\mathbb P}0$. The triangle inequality gives
\[
    \max_{0\le m\le n}|H_0-H_m|\le |H_0|+\max_{0\le m\le n}|H_m|\le2\max_{0\le m\le n}|H_m|.
\]
The last maximum includes $m=0$. By stationarity and the union bound, for every $\epsilon>0$,
\begin{align*}
    \mathbb P\!\left(\frac{\max_{0\le m\le n}|H_0-H_m|}{\sqrt n}>\epsilon\right)
    &\le(n+1)\mathbb P\!\left(|H_0|>\frac{\epsilon\sqrt n}{2}\right)\\
    &\le\frac{4(n+1)}{\epsilon^2n}\mathbb E\big[|H_0|^2\mathbf1_{\{|H_0|>\epsilon\sqrt n/2\}}\big]\to0.
\end{align*}
The last limit follows from $H_0\in L^2$.

We now show that $\mathbb E(D_0D_0^\top)=\Sigma_{\gamma,\beta}^{\mathrm{av}}$. Since projections at different times are orthogonal, for $h\ge0$,
\[
    |\operatorname{Cov}(U_0,U_h)|\le\sum_{j\ge0}\|\mathcal P_0U_j\|_2\|\mathcal P_0U_{j+h}\|_2=O(r^h).
\]
Thus the covariance series is absolutely summable. The error satisfies $\|H_0-H_n\|_2\le2\|H_0\|_2$. Using the partial-sum formula above and the orthogonality of the martingale differences, we get
{
\[
    \mathbb E(D_0D_0^\top)=\lim_{n\to\infty}\frac1n\mathbb E\left[\left(\sum_{k=1}^nU_k\right)\left(\sum_{k=1}^nU_k\right)^\top\right]=\sum_{h\in\mathbb Z}\operatorname{Cov}(U_0,U_h)=\Sigma_{\gamma,\beta}^{\mathrm{av}}.
\]
}
By the martingale FCLT, the error bound, and Slutsky's theorem,

\eqs*{
    \left( \frac{1}{\sqrt n} \sum_{k=1}^{\lfloor ns\rfloor}U_k : 0 \le s\le1 \right)
    \Rightarrow \left( \bigl(\Sigma_{\gamma,\beta}^{\mathrm{av}}\bigr)^{1/2}B(s) :0\le s\le1 \right)
}
in $D([0,1],\mathbb R^d)$.

Lastly, let $\theta_k$ start from a deterministic state. Use the same future observations for $\theta_k$ and $\theta_k^\circ$. For $p=2$, the triangle inequality and GMC from Theorem~\ref{thm:perron-gmc}, Corollary~\ref{thm:all-beta-gmc}, or Proposition~\ref{thm:curvature-rate} give
{
\eqs*{
    \left\|\max_{1\le \ell\le n} \left|\sum_{k=1}^\ell(\theta_k-\theta_k^\circ)\right|\right\|_2
    &\le \sum_{k=1}^n\|\theta_k-\theta_k^\circ\|_2 \\
    &\le C\sum_{k=1}^n \|\dist(Z_k, Z_k^\circ) \|_2 \\
    &\le \frac{Cr}{1-r}\|\dist(Z_0,Z_0^\circ)\|_2
}
}
The second inequality uses $|\theta_k-\theta_k^{\circ}|\le C\dist(Z_k,Z_k^{\circ})$. Dividing by $\sqrt n$ gives
{
\eqs*{
    \left\|\max _{0 \leq s \leq 1}\left|\frac{1}{\sqrt{n}} \sum_{k=1}^{\lfloor n s\rfloor}\left(\theta_k-\theta_k^{\circ}\right)\right|\right\|_2 \lesssim \frac{1}{\sqrt{n}} \rightarrow 0,
}
}
Thus the two partial sum processes differ by $o_{\mathbb P}(1)$ in $D([0,1],\mathbb R^d)$.

Applying Slutsky gives the result:
\eqs*{
    \left(\frac{1}{\sqrt{n}} \sum_{k=1}^{\lfloor n s\rfloor}\left(\theta_k-\theta_{\infty}^{\star}\right): 0 \leq s \leq 1\right)
    \Rightarrow
    \left(\left(\Sigma_{\gamma, \beta}^{\mathrm{av}}\right)^{1 / 2} B(s): 0 \leq s \leq 1\right).
}
\end{proof}

\subsection{Proof of Corollary~\ref{coro:average-clt}}

\begin{proof}
From Theorem~\ref{thm:fclt}, \eqref{eq:invariance_principle}, consider marginal distribution at $s=1$.
Then by continuous mapping theorem,
\eqs*{
    \sqrt n (\bar \theta_n - \theta_\infty^\star)
    =  \frac{1}{\sqrt{n}} \sum_{k=1}^n\left(\theta_k-\theta_{\infty}^{\star}\right) \Rightarrow \N\left(0, \Sigma_{\gamma, \beta}^{\mathrm{av}} \right).
}

\end{proof}

\section{Proof of the last-iterate limit}
\label{app:last-iterate-proofs}

\subsection{Localization and linearization}

Write $e_k=\theta_k-\theta^\star$, $\eta=\gamma/(1-\beta)$, and $q_\eta=1-\mu\eta$. The following lemma collects only the localization and impulse-response estimates used below.

\begin{lemma}[Auxiliary bounds]
\label{lem:last-iterate-auxiliary}
Suppose Assumptions~\ref{ass:global} and~\ref{ass:local-hessian} hold with $p=2$, $\beta\in[0,1)$ is fixed, $\theta_0$ is deterministic, and $y_0=0$. For all sufficiently small $\gamma$, there is $C_\beta<\infty$, independent of $(k,\gamma)$, such that
\begin{equation}
    \|e_k\|_2^2\le C_\beta\left(q_\eta^k\rho_2^2+\gamma\right), \qquad \rho_2=|\theta_0-\theta^\star|. \label{eq:last-iterate-localization}
\end{equation}
Let
\[
    B_\gamma=
    \begin{pmatrix}
      I_d-\gamma A&\beta I_d\\
      -\gamma A&\beta I_d
    \end{pmatrix},
    \qquad
    C_\gamma=
    \begin{pmatrix}
      -\gamma I_d\\
      -\gamma I_d
    \end{pmatrix},
    \qquad
    H_{j,\gamma}=(I_d,0)B_\gamma^jC_\gamma.
\]
Then
\begin{equation}
    |H_{j,\gamma}|\le C_\beta\gamma q_\eta^{j+1}, \qquad
    \sum_{j\ge0}|H_{j,\gamma}|\le C_\beta, \qquad
    \sum_{j\ge0}|H_{j,\gamma}|^2\le C_\beta\gamma. \label{eq:last-iterate-impulse-bounds}
\end{equation}
For every fixed $s>0$,
\begin{equation}
    \gamma^{-1}H_{\lfloor s/\gamma\rfloor,\gamma}
    \to-\frac{1}{1-\beta}e^{-As/(1-\beta)}. \label{eq:last-iterate-impulse-limit}
\end{equation}
\end{lemma}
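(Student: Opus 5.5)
The plan is to prove the impulse-response estimates \eqref{eq:last-iterate-impulse-bounds}--\eqref{eq:last-iterate-impulse-limit} first, by explicit spectral computation, and then use them to prove the localization bound \eqref{eq:last-iterate-localization}.

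\textbf{Impulse responses.} Since $A$ is symmetric, diagonalize $A=\sum_a aP_a$ with eigenvalues $a\in[\mu,L_2]$. Then $B_\gamma$ block-diagonalizes into the $2\times2$ matrices
\[
B_a=\begin{pmatrix}1-\gamma a&\beta\\-\gamma a&\beta\end{pmatrix}.
\]
The characteristic polynomial of $B_a$ is $z^2-(1+\beta-\gamma a)z+\beta$, the AR(2) polynomial of \eqref{eq:nonlinear-ar2} linearized at $\theta^\star$.

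For small $\gamma$ the discriminant $(1+\beta-\gamma a)^2-4\beta$ is close to $(1-\beta)^2>0$. Hence there are two real roots $\lambda_1(a)>\lambda_2(a)$, with $\lambda_1-\lambda_2\to1-\beta$ and $\lambda_2\to\beta$, uniformly in $a\in[\mu,L_2]$.

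Writing $\lambda_1=1-\delta$, the root equation becomes $\delta(1-\beta-\delta)=\gamma a(1-\delta)$. This gives $\delta=\eta a\,(1-\delta)/(1-\delta/(1-\beta))\ge\eta a\ge\mu\eta$, so $\lambda_1\le q_\eta$. Once $\gamma$ is small enough that $\beta+o(1)\le q_\eta$, also $|\lambda_2|\le q_\eta$.

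The first coordinate of $B_a^jC_\gamma$ is the AR(2) impulse response. Solving the recursion gives the closed form
\[
H_{j,\gamma}P_a=-\gamma\,\frac{\lambda_1^{j+1}-\lambda_2^{j+1}}{\lambda_1-\lambda_2}\,P_a
\]
(or a comparable expression; the exact numerator should be checked against $H_{0,\gamma}=-\gamma I_d$). Because $\lambda_1-\lambda_2$ is bounded below, this yields $|H_{j,\gamma}|\le C_\beta\gamma q_\eta^{j+1}$. Summing the geometric series then gives $\sum_j|H_{j,\gamma}|\le C_\beta\gamma/(1-q_\eta)=C_\beta(1-\beta)/\mu$ and $\sum_j|H_{j,\gamma}|^2\le C_\beta\gamma^2/(1-q_\eta^2)\le C_\beta\gamma$.

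For the limit \eqref{eq:last-iterate-impulse-limit}, take $j=\lfloor s/\gamma\rfloor$. Since $\delta=\eta a+O(\gamma^2)$, we have $\lambda_1^{j+1}\to e^{-as/(1-\beta)}$. Also $\lambda_2^{j+1}\to0$ and $\lambda_1-\lambda_2\to1-\beta$. Reassembling over the spectral projections gives $-\frac1{1-\beta}e^{-As/(1-\beta)}$.

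\textbf{Localization.} Write the recursion as a linear SGDM system driven by errors,
\[
(e_k,y_k)=B_\gamma(e_{k-1},y_{k-1})+C_\gamma\xi_k,\qquad \xi_k=g(\theta_{k-1},x_k)-Ae_{k-1}.
\]
Decompose $\xi_k=\varepsilon_k^\star+m_k+\zeta_k$, where:
\begin{itemize}
\item $m_k=G(\theta_{k-1})-Ae_{k-1}$ is the nonlinear mean remainder;
\item $\zeta_k=[g(\theta_{k-1},x_k)-G(\theta_{k-1})]-\varepsilon_k^\star$ is the state-dependent noise correction.
\end{itemize}
Both $\varepsilon_k^\star$ and $\zeta_k$ are martingale differences. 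By \ref{ass:lp-lipschitz-gradient}, $\|\zeta_k\|_2\le 2L_2\|e_{k-1}\|_2$.

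\emph{Crude bound.} Before using this decomposition, I will obtain a preliminary bound from Corollary~\ref{thm:all-beta-gmc}, comparing $Z_k$ with $z^\star$. Split $\Phi_{x_k}(Z_{k-1})-z^\star=[\Phi_{x_k}(Z_{k-1})-\Phi_{x_k}(z^\star)]+[\Phi_{x_k}(z^\star)-z^\star]$ and apply $(a+b)^2\le(1+c)a^2+(1+c^{-1})b^2$ with $c\asymp\eta$. Since $1-r_{\rm MS}^2\gtrsim\mu\eta$ for small $\gamma$, this gives $\|e_k\|_2^2\lesssim \bar r^{\,k}\rho_2^2+\gamma$ with some $\bar r=1-c'\eta$.

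\emph{Sharpening the rate.} To get the exact rate $q_\eta^k$, unroll the linear representation:
\[
e_k=(I,0)B_\gamma^k(e_0,0)+\sum_{j<k}H_{k-1-j,\gamma}\bigl(\varepsilon^\star_{j+1}+\zeta_{j+1}+m_{j+1}\bigr).
\]
Each term is then controlled using the impulse bounds \eqref{eq:last-iterate-impulse-bounds}:
\begin{itemize}
\item the deterministic term is $O(q_\eta^k\rho_2)$;
\item the martingale terms contribute, by orthogonality, $\sum_j|H_{k-1-j}|^2(M_2^2+4L_2^2\|e_j\|_2^2)$, which is $O(\gamma)$ plus $\gamma\sum_j q_\eta^{2(k-j)}\|e_j\|_2^2$;
\item the remainder satisfies $|m_j|\le L_A|e_{j-1}|^2$ on $\{|e_{j-1}|\le\kappa_R\}$ and $|m_j|\le 2L_2|e_{j-1}|$ off it.
\end{itemize}
Feeding the crude bound into the right-hand side and closing with a discrete Grönwall argument with kernel $q_\eta^{k-j}$ should give \eqref{eq:last-iterate-localization}.

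\textbf{Main obstacle.} The hardest step is controlling $m_k$ in $L^2$. This requires fourth-moment-type control $\|e\|_4^2$ near $\theta^\star$, or a truncation that handles the excursion event $\{|e_{j-1}|>\kappa_R\}$, under only second moments. I would first try a bound $\|m_j\|_2\le C\,\|e_{j-1}\|_2\cdot\sup\|\,|e_{j-1}|\wedge\kappa_R\|_\infty$-type. Failing that, I would accept a slightly worse exponent $1-\mu\eta(1-C\eta)$ and absorb the loss into $C_\beta$ over the relevant range $k\gamma\lesssim\log(1/\gamma)$ used in \eqref{eq:last-iterate-burnin}.
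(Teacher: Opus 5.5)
Your impulse-response half is correct and is essentially the paper's argument: explicit roots with $\lambda_1\le1-\eta a\le q_\eta$, the closed form $-\gamma(\lambda_1^{j+1}-\lambda_2^{j+1})/(\lambda_1-\lambda_2)$ with the root gap bounded below, and the limit along $j=\lfloor s/\gamma\rfloor$. The localization bound \eqref{eq:last-iterate-localization}, however, is not established by your plan.

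\textbf{The crude step.} It does not give the $O(\gamma)$ level you claim. In $\dist_\beta$ one has $D_2^2=(1+\beta)\eta^2M_2^2$, and $(1+c)r_{\rm MS}^2<1$ forces $c=O(\eta)$. Young's splitting therefore adds $(1+c^{-1})D_2^2\asymp\eta$ per step against a contraction gap of order $\eta$. The recursion then only yields $\sup_k\|e_k\|_2^2=O(1+\rho_2^2)$, the same $O(1)$ as the generic bound $\|\dist(Z_0^\circ,z^\star)\|_2\le D_2/(1-r)$. Young's inequality discards the fact that the fresh noise $g(\theta^\star,x_k)$ has conditional mean zero, and that is exactly what makes the level $O(\gamma)$.

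\textbf{The sharpening step.} As you suspect, it cannot close under second moments. The martingale terms are harmless, but the remainder $m_k=G(\theta_{k-1})-Ae_{k-1}$ is not:
\begin{itemize}
\item the quadratic bound $|m_k|\le L_A|e_{k-1}|^2$ needs control of $\|e_{k-1}\|_4^2$, which is unavailable;
\item the iterates are unbounded, so an $L^\infty$ truncation only returns the constant $\kappa_R$;
\item the global bound $|m_k|\le2L_2|e_{k-1}|$ gives a Gr\"onwall kernel of total mass $2L_2\sum_j|H_{j,\gamma}|\ge2L_2/\lambda_{\min}(A)\ge2$, which does not contract. Here you use that, in the eigendirection with eigenvalue $a$, the impulse coefficients have one sign and sum to $-1/a$.
\end{itemize}
Nothing in your proposal produces the exponent $1-\mu\eta(1-C\eta)$. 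Restricting to $k\gamma\lesssim\log(1/\gamma)$ would also not give a bound uniform in $k$, which is what the lemma states.

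\textbf{The missing idea.} Localization needs no linearization at all. The paper applies a quadratic Lyapunov function directly to the nonlinear recursion: $V_k=|u_k-\theta^\star|^2+|y_k|^2$ with $u_k=\theta_k+\beta y_k/(1-\beta)$, so that $u_k=u_{k-1}-\eta g(\theta_{k-1},x_k)$. Expanding $\mathbb E[V_k\mid\mathcal F_{k-1}]$ exactly:
\begin{itemize}
\item the first-order terms involve only $G(\theta_{k-1})$, handled by strong monotonicity for $\langle e,G\rangle$ and Lipschitz continuity for $\langle y,G\rangle$;
\item the damping $-(1-\beta^2)|y|^2$ absorbs the $|e||y|$ cross term;
\item the noise enters only through $D\eta^2\,\mathbb E|g|^2\le2D\eta^2(L_2^2|e|^2+M_2^2)$.
\end{itemize}
For small $\eta$ this gives $\mathbb EV_k\le q_\eta\mathbb EV_{k-1}+2DM_2^2\eta^2$. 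Iterating yields \eqref{eq:last-iterate-localization}, with a constant that does not even depend on $\rho_2$.

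\textbf{Repairing your route.} Your crude route can be fixed in the same spirit for $\beta>0$. Expand $\mathbb E\,\dist_\beta(Z_k,z^\star)^2$ exactly rather than by Young's inequality. The cross term with $\Phi_{x_k}(z^\star)-z^\star$ is then only $O(\eta^2L_2M_2|e_{k-1}|)$. Since $1-r_{\rm MS}^2=2\mu\eta+O_\beta(\gamma^2)$, the rate $q_\eta$ and the $O(\gamma)$ level follow directly, with no sharpening step. The case $\beta=0$ needs the ordinary-SGD version, because $P_0$ is degenerate.
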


\begin{proof}
Put $u_k=\theta_k+\beta y_k/(1-\beta)$. The SGDM recursion gives
\[
    u_k=u_{k-1}-\eta g(\theta_{k-1},x_k).
\]
Use a constant weight on the momentum coordinate. Put $b=\beta/(1-\beta)$, $\delta=1-\beta$, and
\[
    V_k=|u_k-\theta^\star|^2+|y_k|^2,\qquad
    D=1+\delta^2,\qquad K=(b+\beta\delta)L_2+\mu b.
\]
Write $e=e_{k-1}$ and $y=y_{k-1}$. Since $u_{k-1}-\theta^\star=e+by$, the conditional expansion and Assumption~\ref{ass:global} give
\begin{align*}
    \mathbb E[V_k\mid\mathcal F_{k-1}]-V_{k-1}
    &=-2\eta\langle e,G(\theta_{k-1})\rangle
      -2\eta(b+\beta\delta)\langle y,G(\theta_{k-1})\rangle
      -(1-\beta^2)|y|^2\\
    &\quad +D\eta^2\mathbb E[|g(\theta_{k-1},x_k)|^2\mid\mathcal F_{k-1}]\\
    &\le-2\mu\eta|e|^2+2\eta(b+\beta\delta)L_2|e||y|
      -(1-\beta^2)|y|^2+2D\eta^2(L_2^2|e|^2+M_2^2).
\end{align*}
Adding $\mu\eta V_{k-1}$ and using
$2\eta K|e||y|\le(\mu\eta/2)|e|^2+(2\eta K^2/\mu)|y|^2$ yields
\begin{align*}
    \mathbb E[V_k\mid\mathcal F_{k-1}]-(1-\mu\eta)V_{k-1}
    &\le\left(-\frac{\mu\eta}{2}+2DL_2^2\eta^2\right)|e|^2\\
    &\quad+\left[-(1-\beta^2)+\eta\left(\mu(b^2+1)+\frac{2K^2}{\mu}\right)\right]|y|^2
      +2DM_2^2\eta^2.
\end{align*}
The coefficients of $|e|^2$ and $|y|^2$ are nonpositive whenever
\[
    0<\eta\le\min\left(
       \frac1{2\mu},\frac{\mu}{4DL_2^2},
       \frac{1-\beta^2}{\mu(b^2+1)+2K^2/\mu}
    \right).
\]
Thus $\mathbb E V_k\le q_\eta\mathbb E V_{k-1}+2DM_2^2\eta^2$. Since $V_0=\rho_2^2$ and
$|e_k|^2\le2\max(1,b^2)V_k$, iteration proves~\eqref{eq:last-iterate-localization}. All constants depend only on fixed parameters, and no additional moment condition is used.

Diagonalize $A=Q\diag(\lambda_i)Q^\top$. For $\lambda\in[\mu,L_2]$, set $a_\lambda=1+\beta-\gamma\lambda$ and let
\[
    r_s=\frac{a_\lambda+\sqrt{a_\lambda^2-4\beta}}2,
    \qquad r_f=\frac{a_\lambda-\sqrt{a_\lambda^2-4\beta}}2.
\]
If $\gamma L_2\le(1-\beta)^2/[4(1+\beta)]$, the roots are nonnegative and
$r_s-r_f\ge(1-\beta)/2$. Put $t=\eta\lambda$. At $x=1-t$ the characteristic polynomial satisfies
\[
    x^2-a_\lambda x+\beta=\beta t^2\ge0,
    \qquad x-\frac{a_\lambda}{2}
      =\frac{1-\beta-(1+\beta)t}{2}>0.
\]
Consequently $r_s\le1-\eta\lambda\le q_\eta$. The exact scalar impulse response is
\[
    h_{j,\gamma,\lambda}
    =-\gamma\frac{r_s^{j+1}-r_f^{j+1}}{r_s-r_f},
    \qquad
    |h_{j,\gamma,\lambda}|
    \le\frac{2\gamma}{1-\beta}q_\eta^{j+1}.
\]
Summing the bound and its square proves~\eqref{eq:last-iterate-impulse-bounds}. Finally,
$r_s=1-\gamma\lambda/(1-\beta)+O_\beta(\gamma^2)$, $r_f\to\beta$, and $r_s-r_f\to1-\beta$. At $j=\lfloor s/\gamma\rfloor$ with fixed $s>0$, the slow-root power tends to $e^{-\lambda s/(1-\beta)}$ and the fast-root power vanishes. The scalar formula and the fixed diagonalization prove~\eqref{eq:last-iterate-impulse-limit}.
\end{proof}

Let
\[
    \varepsilon_k(\theta)=g(\theta,x_k)-G(\theta),
    \qquad
    \varepsilon_k^\star=\varepsilon_k(\theta^\star)=g(\theta^\star,x_k).
\]
The local Taylor bound in Assumption~\ref{ass:local-hessian} and the global Lipschitz bound on $G$ give
\[
    |G(\theta)-A(\theta-\theta^\star)|\le L_G|\theta-\theta^\star|^2.
\]
Define
\[
    R_k=g(\theta_{k-1},x_k)-Ae_{k-1}-\varepsilon_k^\star,
    \qquad
    e_n^{\rm lin}=\sum_{k=1}^nH_{n-k,\gamma}\varepsilon_k^\star.
\]
Iteration of the recursion gives
{
\begin{equation}
    e_n=(I_d,0)B_\gamma^n(e_0,0)^\top+e_n^{\rm lin}+\sum_{k=1}^nH_{n-k,\gamma}R_k. \label{eq:last-iterate-decomposition}
\end{equation}
}

\begin{lemma}[Linearization error]
\label{lem:linear-approximation}
Under the conditions of Theorem~\ref{thm:last-iterate-clt}, if $n=n(\gamma)$ satisfies \eqref{eq:last-iterate-burnin}, then
\[
    \frac{\|e_n-e_n^{\rm lin}\|_1}{\sqrt\gamma}\to0.
\]
\end{lemma}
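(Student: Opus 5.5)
Starting from the decomposition \eqref{eq:last-iterate-decomposition}, I would bound the two terms other than $e_n^{\rm lin}$ separately:
\[
    e_n-e_n^{\rm lin}=(I_d,0)B_\gamma^n(e_0,0)^\top+\sum_{k=1}^nH_{n-k,\gamma}R_k .
\]

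\emph{Deterministic transient.} Diagonalizing $A$ as in the proof of Lemma~\ref{lem:last-iterate-auxiliary}, the $(I_d,0)$ block of $B_\gamma^n$ acting on $(e_0,0)$ is, coordinatewise, $(r_s^{n+1}-r_f^{n+1}-\beta(r_s^n-r_f^n))/(r_s-r_f)$. It is bounded by $C_\beta q_\eta^n\rho_2$, since $r_s\le q_\eta$ and $r_s-r_f\ge(1-\beta)/2$. The second burn-in condition in \eqref{eq:last-iterate-burnin} gives $n\sqrt\gamma\,q_\eta^{n-1}\to0$, so $q_\eta^n/\sqrt\gamma\to0$ a fortiori, and this term is $o(\sqrt\gamma)$.

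\emph{Split of the remainder.} Write $R_k=R_k^{(1)}+R_k^{(2)}$, where
\[
    R_k^{(1)}=G(\theta_{k-1})-Ae_{k-1},\qquad R_k^{(2)}=\varepsilon_k(\theta_{k-1})-\varepsilon_k^\star.
\]

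\emph{Martingale part $R_k^{(2)}$.} The terms $R_k^{(2)}$ are martingale differences with respect to $\mathcal F_k$, with $\|R_k^{(2)}\|_2\le 2L_2\|e_{k-1}\|_2$ by \ref{ass:lp-lipschitz-gradient} and Jensen. Orthogonality together with \eqref{eq:last-iterate-localization} and \eqref{eq:last-iterate-impulse-bounds} gives
\[
    \Big\|\sum_k H_{n-k,\gamma}R_k^{(2)}\Big\|_2^2\le 4L_2^2C\sum_{k=1}^n\gamma^2q_\eta^{2(n-k+1)}\big(q_\eta^{k-1}\rho_2^2+\gamma\big)\lesssim \gamma^2 n q_\eta^{n}\rho_2^2+\gamma^2 .
\]
The first piece is $o(\gamma)$ by the burn-in condition, since $\gamma n q_\eta^n\le n\sqrt\gamma q_\eta^{n-1}\cdot\sqrt\gamma$. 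The second piece is $\gamma^2=o(\gamma)$. So this term is $o_{L^2}(\sqrt\gamma)$, hence $o_{L^1}(\sqrt\gamma)$.

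\emph{Drift part $R_k^{(1)}$.} I use the quadratic bound $|R_k^{(1)}|\le L_G|e_{k-1}|^2$, which holds globally by combining Assumption~\ref{ass:local-hessian} with the Lipschitz continuity of $G$. This gives $\|R_k^{(1)}\|_1\le L_G\|e_{k-1}\|_2^2\le C(q_\eta^{k-1}\rho_2^2+\gamma)$, and therefore
\[
    \Big\|\sum_kH_{n-k,\gamma}R_k^{(1)}\Big\|_1\le C\sum_k\gamma q_\eta^{n-k+1}\big(q_\eta^{k-1}\rho_2^2+\gamma\big)\lesssim \gamma nq_\eta^n\rho_2^2+\gamma .
\]
Dividing by $\sqrt\gamma$, the first piece is at most $n\sqrt\gamma q_\eta^{n}\to0$ and the second is $\sqrt\gamma\to0$. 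This is the reason the lemma is stated in $L^1$: the quadratic remainder is controlled only through second moments.

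\emph{Main obstacle.} The delicate step is the $O(\gamma)$ size of the drift remainder, which is driven by the second-moment bound \eqref{eq:last-iterate-localization}. One must check that the transient part $\rho_2^2q_\eta^{k-1}$, after convolution with the impulse response, yields exactly the factor $nq_\eta^n$ that the burn-in condition controls. The condition $n\gamma\to\infty$ is not needed for this lemma; it is needed only later, to make $e_n^{\rm lin}/\sqrt\gamma$ Gaussian with covariance $\Gamma/(1-\beta)$.
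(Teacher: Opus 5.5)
Your decomposition, the split $R_k=R_k^{(1)}+R_k^{(2)}$, and the drift and martingale estimates match the paper's proof. However, the treatment of the homogeneous term rests on a false implication, and your closing claim that $n\gamma\to\infty$ is not needed for this lemma is wrong.

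The second burn-in condition does not imply $q_\eta^n/\sqrt\gamma\to0$ ``a fortiori''. Writing
\[
    \frac{q_\eta^n}{\sqrt\gamma}=\frac{q_\eta}{n\gamma}\, n\sqrt\gamma\, q_\eta^{n-1}
\]
shows that the comparison loses a factor $(n\gamma)^{-1}$. That factor is harmless only if $n\gamma$ stays bounded away from zero. For a concrete failure, take $n=1$ and $\theta_0\ne\theta^\star$. The second condition reduces to $\sqrt\gamma\to0$. Yet $e_1-e_1^{\rm lin}=e_0-\gamma\bigl(g(\theta_0,x_1)-\varepsilon_1^\star\bigr)$ has $L^1$ norm at least $(1-\gamma L_2)|e_0|$, which is not $o(\sqrt\gamma)$. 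The paper uses the first burn-in condition exactly at this point, bounding $q_\eta^n\rho_2/\sqrt\gamma$ by a constant times $(n\gamma)^{-1}n\sqrt\gamma\,q_\eta^{n-1}\rho_2$.

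Since $n\gamma\to\infty$ is among the hypotheses, the repair is one line, and the rest of your argument then goes through. One minor difference from the paper is worth recording. In the martingale term you bound $\sum_{k\le n}q_\eta^{n-k}$ by $n$, getting $\gamma^2nq_\eta^n\rho_2^2$, which the second burn-in condition controls. The paper instead sums the geometric series to get $\gamma q_\eta^n\rho_2^2$, i.e.\ a term $q_\eta^{n/2}\rho_2$ after division by $\sqrt\gamma$, and that again relies on $n\gamma\to\infty$. Your estimate is valid there and needs only the second condition. It does not change the fact that the homogeneous term genuinely requires the first burn-in condition.
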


\begin{proof}
Split $R_k=R_k^{(1)}+R_k^{(2)}$, where
\[
    R_k^{(1)}=G(\theta_{k-1})-Ae_{k-1},
    \qquad
    R_k^{(2)}=\varepsilon_k(\theta_{k-1})-\varepsilon_k(\theta^\star).
\]
The homogeneous term in \eqref{eq:last-iterate-decomposition} is bounded by $C_\beta q_\eta^n\rho_2$. By \eqref{eq:last-iterate-localization} and \eqref{eq:last-iterate-impulse-bounds},
\[
    \left\|\sum_{k=1}^nH_{n-k,\gamma}R_k^{(1)}\right\|_1
    \le C_\beta\left(n\gamma q_\eta^n\rho_2^2+\gamma\right).
\]
The sequence $(R_k^{(2)},\mathcal F_k)$ is a martingale difference, and Assumption~\ref{ass:global} gives
\[
    \mathbb{E}\left(|R_k^{(2)}|^2\mid\mathcal F_{k-1}\right)\le L_2^2|e_{k-1}|^2.
\]
Martingale orthogonality, \eqref{eq:last-iterate-localization}, and \eqref{eq:last-iterate-impulse-bounds} yield
\[
    \left\|\sum_{k=1}^nH_{n-k,\gamma}R_k^{(2)}\right\|_2
    \le C_\beta\left(\sqrt\gamma q_\eta^{n/2}\rho_2+\gamma\right).
\]
Consequently,
\[
    \frac{\|e_n-e_n^{\rm lin}\|_1}{\sqrt\gamma}
    \le C_\beta\left(
      \frac{q_\eta^n}{\sqrt\gamma}\rho_2
      +n\sqrt\gamma q_\eta^n\rho_2^2
      +q_\eta^{n/2}\rho_2
      +\sqrt\gamma
    \right).
\]
The first term is bounded by $(n\gamma)^{-1}n\sqrt\gamma q_\eta^{n-1}\rho_2$, up to a fixed constant. Hence all four terms converge to zero under \eqref{eq:last-iterate-burnin}.
\end{proof}

\subsection{Proof of the last-iterate CLT}

\begin{proof}[Proof of Theorem~\ref{thm:last-iterate-clt}]
Let $S=\mathbb{E}[\varepsilon_0^\star(\varepsilon_0^\star)^\top]$. From \eqref{eq:last-iterate-impulse-limit}, the exponential envelope in \eqref{eq:last-iterate-impulse-bounds}, and $n\gamma\to\infty$, dominated convergence for the Riemann sums gives
{
\begin{align*}
    \frac1\gamma\Cov(e_n^{\rm lin})
    &=\frac1\gamma\sum_{j=0}^{n-1}H_{j,\gamma}S H_{j,\gamma}^\top
    \to
      \frac{1}{(1-\beta)^2}\int_0^\infty
      e^{-As/(1-\beta)}Se^{-As/(1-\beta)}\,ds
      =\frac{\Gamma}{1-\beta}.
\end{align*}
}

For a fixed nonzero $v\in\R^d$, consider the independent centered triangular array
{
\[
    X_{n,k}=\frac1{\sqrt\gamma}v^\top H_{n-k,\gamma}\varepsilon_k^\star,
    \qquad 1\le k\le n.
\]
}
The covariance convergence above gives
{
\[
    \sum_{k=1}^n\mathbb{E}[X_{n,k}^2]\to\frac{v^\top\Gamma v}{1-\beta}.
\]
}
Moreover, \eqref{eq:last-iterate-impulse-bounds} implies
\[
    \max_{0\le j<n}\frac{|H_{j,\gamma}|}{\sqrt\gamma}\le C_\beta\sqrt\gamma,
    \qquad
    \frac1\gamma\sum_{j=0}^{n-1}|H_{j,\gamma}|^2\le C_\beta.
\]
Thus, for every $\delta>0$, uniform integrability of $|\varepsilon_0^\star|^2$ gives
\eqs*{
    \sum_{k=1}^n\mathbb{E}\left[X_{n,k}^2\ind{|X_{n,k}|>\delta}\right]\le\frac{|v|^2}{\gamma}\sum_{j=0}^{n-1}|H_{j,\gamma}|^2\,
    \mathbb{E}\left[|\varepsilon_0^\star|^2\ind{|\varepsilon_0^\star|>\delta/(C_\beta|v|\sqrt\gamma)}\right]
    \to0.
}
The Lindeberg--Feller theorem and the Cram\'er--Wold device give
\[
    \frac{e_n^{\rm lin}}{\sqrt\gamma}\Rightarrow N\!\left(0,\frac{\Gamma}{1-\beta}\right).
\]
Lemma~\ref{lem:linear-approximation} and Slutsky's theorem now prove \eqref{eq:last-iterate-clt}.
\end{proof}

\section{Additive quadratic benchmarks}
\label{app:sharp-gmc-proofs}

\subsection{Classical additive quadratic benchmark}
\label{app:additive-benchmark}
The GMC results in Section~\ref{sec:gmc} give sufficient conditions for nonlinear stochastic gradients. For an additive-noise quadratic model, synchronous coupling cancels the innovations exactly. We can then compute the GMC rate and determine how the rate depends on the learning rate and momentum.

The spectral calculation is the classical heavy-ball calculation \citep{polyak1964some,goujaud2023nonacceleration}. Here it identifies the exponential rate at which the stochastic recursion forgets its initialization. This identification uses a fixed Hessian and additive noise. Random curvature does not cancel under the same coupling.

Let $A$ be symmetric with $\mu I_d\preceq A\preceq L I_d$, and suppose
{
\begin{equation*}
    g(\theta,x_k)=A(\theta-\theta^\star)+\varepsilon_k,
\end{equation*}
}
where $(\varepsilon_k)$ is i.i.d. with mean zero. Writing $e_k=\theta_k-\theta^\star$ and eliminating $y_k$ from \eqref{eq:sgdm} gives
{
\begin{equation*}
    e_k=\left((1+\beta)I_d-\gamma A\right)e_{k-1}-\beta e_{k-2}-\gamma\varepsilon_k.
\end{equation*}
}
For two synchronously coupled trajectories, let $\Delta e_k=e_k-e_k'$. In an eigendirection of $A$ with eigenvalue $\lambda$, their lag difference follows the matrix
\begin{equation}
    T_\lambda(\gamma,\beta)=\begin{pmatrix}1+\beta-\gamma\lambda&-\beta\\1&0\end{pmatrix}. \label{eq:quadratic-block}
\end{equation}
Its characteristic polynomial is $z^2-a_\lambda z+\beta$, where $a_\lambda=1+\beta-\gamma\lambda$. Thus its spectral radius is $\psi_\beta(a_\lambda)$, with
\begin{equation}
    \psi_\beta(a)=\begin{cases}\sqrt\beta, & |a|\le2\sqrt\beta,\\[2mm]\dfrac{|a|+\sqrt{a^2-4\beta}}{2}, & |a|>2\sqrt\beta.\end{cases} \label{eq:psi-beta}
\end{equation}
Define the worst-case asymptotic contraction factor by $\rho_{\rm quad}(\gamma,\beta):=\max_{\lambda\in[\mu,L]}\rho(T_\lambda(\gamma,\beta))$.

\begin{theorem}[Exact all-$\beta$ quadratic GMC rate]
\label{thm:exact-quadratic-rate}
For every $\beta\in[0,1)$ and $\gamma>0$,
\begin{equation}
    \rho_{\rm quad}(\gamma,\beta)=\max_{\lambda\in\{\mu,L\}}\psi_\beta(1+\beta-\gamma\lambda). \label{eq:exact-quadratic-rate}
\end{equation}
Moreover, $\rho_{\rm quad} (\gamma, \beta) < 1$ if and only if
\begin{equation}
    0<\gamma<\frac{2(1+\beta)}{L}. \label{eq:exact-quadratic-stability}
\end{equation}
Under this condition, for every $r\in(\rho_{\rm quad}(\gamma,\beta),1)$,
there exists a metric $\dist_r$ such that, for all deterministic $z,z'\in\R^{2d}$,
\eqs{
    \|\dist_r(\Phi_{x_k}(z),\Phi_{x_k}(z'))\|_p \le r \dist_r(z,z').
}
\end{theorem}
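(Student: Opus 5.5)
The plan has three parts. First reduce the maximum over $\lambda\in[\mu,L]$ to the endpoints using monotonicity of $\psi_\beta$. Then read off the stability interval from the scalar inequality $\psi_\beta(a)<1$. Finally, build the metric $\dist_r$ from the fact that, for additive noise, synchronous coupling makes the difference dynamics deterministic and linear.

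\emph{Step 1: endpoint reduction.} The roots of $z^2-az+\beta$ are complex of modulus $\sqrt\beta$ when $|a|\le2\sqrt\beta$. Otherwise they are real, and the larger one in modulus is $(|a|+\sqrt{a^2-4\beta})/2$. This justifies \eqref{eq:psi-beta}. The function $\psi_\beta$ is even, constant on $[0,2\sqrt\beta]$, continuous at $|a|=2\sqrt\beta$, and strictly increasing in $|a|$ beyond that point. Hence $\psi_\beta$ is a nondecreasing function of $|a|$. As $\lambda$ ranges over $[\mu,L]$, $a_\lambda=1+\beta-\gamma\lambda$ ranges over the interval $[1+\beta-\gamma L,\,1+\beta-\gamma\mu]$. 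On an interval, $|a|$ is maximized at an endpoint, so the maximum of $\psi_\beta(a_\lambda)$ is attained at $\lambda\in\{\mu,L\}$. This gives \eqref{eq:exact-quadratic-rate}.

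\emph{Step 2: stability.} Since $\sqrt\beta<1$, I only need the case $|a|>2\sqrt\beta$. There, $\psi_\beta(a)<1$ is equivalent to $\sqrt{a^2-4\beta}<2-|a|$, which forces $|a|<2$. Squaring then gives $a^2-4\beta<4-4|a|+a^2$, that is, $|a|<1+\beta$. So, in all cases, $\psi_\beta(a)<1$ if and only if $|a|<1+\beta$. At $\lambda=L$, the condition $|1+\beta-\gamma L|<1+\beta$ with $\gamma>0$ is exactly \eqref{eq:exact-quadratic-stability}. At $\lambda=\mu$, we have $1+\beta-\gamma\mu<1+\beta$ trivially. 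The lower bound $1+\beta-\gamma\mu>-(1+\beta)$ follows from $\gamma\mu\le\gamma L<2(1+\beta)$. Combined with Step 1, this proves the equivalence.

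\emph{Step 3: the metric.} For two coupled trajectories the noise $\varepsilon_k$ cancels, so $\Phi_{x}(z)-\Phi_{x}(z')=J(z-z')$ for a fixed matrix
\[
J=\begin{pmatrix}I_d-\gamma A&\beta I_d\\-\gamma A&\beta I_d\end{pmatrix}
\]
that does not depend on $x$. The linear change of variables $(\theta,y)\mapsto(\theta,\theta-y)$ takes the state to the lag pair $(\theta_k,\theta_{k-1})$. In these coordinates $J$ becomes the companion matrix $\begin{pmatrix}(1+\beta)I-\gamma A&-\beta I\\ I&0\end{pmatrix}$. Diagonalizing $A=Q\,\mathrm{diag}(\lambda_i)Q^\top$ and permuting coordinates turns this into the block-diagonal matrix $\bigoplus_i T_{\lambda_i}(\gamma,\beta)$. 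Hence $\rho(J)=\max_i\rho(T_{\lambda_i})\le\rho_{\rm quad}<r$.

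Fix $r'\in(\rho(J),r)$. By Gelfand's formula, $|J^j|\le C r'^j$ for all $j\ge0$. Define
\[
|v|_r=\sum_{j\ge0}r^{-j}|J^jv| .
\]
The series converges, and $|v|\le|v|_r\le C(1-r'/r)^{-1}|v|$, so $|\cdot|_r$ is a norm equivalent to the Euclidean norm. Shifting the sum gives $|Jv|_r=r\sum_{j\ge1}r^{-j}|J^jv|\le r|v|_r$. Set $\dist_r(z,z')=|z-z'|_r$. Because $\dist_r(\Phi_{x_k}(z),\Phi_{x_k}(z'))=|J(z-z')|_r$ is deterministic, its $L^p$ norm equals itself for every $p$, and the contraction holds.

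\emph{Main obstacle.} The delicate points are the Jordan blocks that appear when $|a_\lambda|=2\sqrt\beta$, where $T_\lambda$ is not diagonalizable, and the case $\rho(J)$ close to $r$. Both are handled by using Gelfand's bound together with the series norm, rather than an eigenbasis. This is also why the contraction is stated only for $r$ strictly larger than $\rho_{\rm quad}$.
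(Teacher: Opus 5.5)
Your proof is correct and follows the paper's argument. You reduce to the endpoints via monotonicity of $\psi_\beta$ in $|a|$, observe that synchronous coupling cancels the additive noise, and build an adapted norm from the spectral-radius formula. The differences are cosmetic. You derive $\psi_\beta(a)<1\iff|a|<1+\beta$ directly from the explicit formula rather than citing the Jury conditions; the complex-root case of the ``only if'' direction silently uses $2\sqrt\beta<1+\beta$ for $\beta<1$. You also use the series norm $\sum_{j\ge0}r^{-j}|J^jv|$ in state coordinates instead of the paper's $\sup_{j\ge0}r^{-j}|T_A^j\xi|$ in lag coordinates.
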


The metric $\dist_r$ is adapted to the fixed matrix $A$. Exactness refers to the worst asymptotic spectral rate over the quadratic class. The proof in Appendix~\ref{app:exact-quadratic-proof} gives the matching lower bound for contraction in an equivalent norm. At a repeated root the geometric decay term may have a polynomial prefactor, which is why the one-step bound allows any $r>\rho_{\rm quad}$.

\begin{remark}
For $\beta>0$, the SGDM stability interval in~\eqref{eq:exact-quadratic-stability} extends beyond the SGD interval $0<\gamma<2/L$.
This compares methods at the same unnormalized step $\gamma$: SGD requires $\gamma<2/L$. Figure~\ref{fig:numerical-report}(c) instead compares SGDM and SGD at their respective optimal tunings for quadratic forgetting.
\end{remark}

\subsection{Optimization over the learning rate and momentum}

We now optimize the quadratic GMC rate over $(\gamma,\beta)$. Let $\kappa=L/\mu$. Optimally tuned gradient descent has contraction rate
\begin{equation}
    \begin{aligned}
    \rho_{\rm SGD}^\star
    &:=\min_{\gamma>0}\max_{\lambda\in[\mu,L]}|1-\gamma\lambda|
    =\frac{L-\mu}{L+\mu}.
    \end{aligned}
    \label{eq:optimalGD}
\end{equation}
With momentum, we additionally optimize over $\beta$:
\[
    \rho_{\rm SGDM}^\star:=\min_{0\le\beta<1}\min_{\gamma>0}\rho_{\rm quad}(\gamma,\beta).
\]
The following tuning agrees with \citet[Corollary 2.3]{goujaud2023nonacceleration}.

\begin{corollary}[Optimal rate at a fixed momentum]
\label{cor:fixed-beta-optimum}
For each fixed $\beta\in[0,1)$, a learning rate that minimizes $\rho_{\rm quad}(\gamma,\beta)$ is $\gamma_{\rm bal}=2(1+\beta)/(L+\mu)$. Write $\tau_\beta=(1+\beta)\rho_{\rm SGD}^\star$ and define
\[
    \beta_\star
    :=\left(\frac{\sqrt\kappa-1}{\sqrt\kappa+1}\right)^2.
\]
The corresponding contraction rate is
\begin{equation}
    \rho_\star(\beta)=\begin{cases}
        \dfrac{\tau_\beta+\sqrt{\tau_\beta^2-4\beta}}{2},
            &0\le\beta<\beta_\star,\\[3mm]
        \sqrt\beta, &\beta_\star\le\beta<1.
    \end{cases}
    \label{eq:fixed-beta-rate}
\end{equation}
Additionally optimizing over $\beta$, the joint optimum is
\begin{equation}
    \gamma_\star=\frac{4}{(\sqrt L+\sqrt\mu)^2}, \qquad
    \eta_\star=\frac1{\sqrt{\mu L}},
    \label{eq:joint-optimum}
\end{equation}
and
\begin{equation}
    \rho_{\rm SGDM}^\star=\rho_\star(\beta_\star)=\frac{\sqrt\kappa-1}{\sqrt\kappa+1}=1-\Theta(\kappa^{-1/2}).
    \label{eq:accelerated-rate}
\end{equation}
\end{corollary}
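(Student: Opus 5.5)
The plan is to reduce everything to the scalar map $a\mapsto\psi_\beta(a)$ from \eqref{eq:psi-beta}, using Theorem~\ref{thm:exact-quadratic-rate} to write $\rho_{\rm quad}(\gamma,\beta)=\max\{\psi_\beta(a_\mu),\psi_\beta(a_L)\}$ with $a_\mu=1+\beta-\gamma\mu$ and $a_L=1+\beta-\gamma L$. Two facts about $\psi_\beta$ come first. It is even in $a$. It equals $\sqrt\beta$ on $|a|\le2\sqrt\beta$ and is strictly increasing in $|a|$ beyond that. Since $a_\mu-a_L=\gamma(L-\mu)$ is fixed once $\gamma$ is fixed, and $\psi_\beta$ depends only on $|a|$, the maximum is governed by $\max(|a_\mu|,|a_L|)$.

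For fixed $\beta$, I would minimize $\max(|a_\mu|,|a_L|)$ over $\gamma$. The map $\gamma\mapsto a_\mu$ decreases from $1+\beta$, and $a_L$ decreases faster. So $|a_\mu|$ decreases and $|a_L|$ eventually increases, and the minimax of these two affine functions is attained where $a_\mu=-a_L$. This gives $\gamma(L+\mu)=2(1+\beta)$, i.e.\ $\gamma=\gamma_{\rm bal}$. There the common value is
\[
|a|=(1+\beta)\frac{L-\mu}{L+\mu}=\tau_\beta .
\]
By monotonicity of $\psi_\beta$ in $|a|$, $\gamma_{\rm bal}$ also minimizes $\rho_{\rm quad}(\gamma,\beta)$. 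One caveat: when $\tau_\beta\le2\sqrt\beta$ the minimizer is not unique, but the statement only claims that $\gamma_{\rm bal}$ is a minimizer.

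The case split in \eqref{eq:fixed-beta-rate} follows from whether $\tau_\beta>2\sqrt\beta$. Write $c=(\sqrt\kappa-1)/(\sqrt\kappa+1)$, so that $\rho^\star_{\rm SGD}=(\kappa-1)/(\kappa+1)=2c/(1+c^2)$. The inequality $(1+\beta)\rho^\star_{\rm SGD}\le2\sqrt\beta$ is equivalent to $c(1+\beta)\le\sqrt\beta(1+c^2)$. This factors as $(\sqrt\beta-c)(1-c\sqrt\beta)\ge0$, i.e.\ $\sqrt\beta\ge c$, i.e.\ $\beta\ge\beta_\star$ (using $c\sqrt\beta<1$). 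This algebraic identification is the main step to check carefully.

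For the joint optimum, I would note that $\rho_\star(\beta)=\sqrt\beta$ is increasing on $[\beta_\star,1)$. On $[0,\beta_\star)$, the lower bound $\psi_\beta\ge\sqrt\beta$ is not enough, so I need monotonicity there too. Set $f(\beta)=\tau_\beta+\sqrt{\tau_\beta^2-4\beta}$. Then
\[
f'(\beta)=\rho^\star_{\rm SGD}+\frac{\tau_\beta\rho^\star_{\rm SGD}-2}{\sqrt{\tau_\beta^2-4\beta}} .
\]
The sign of $f'$ is the sign of $\rho^\star_{\rm SGD}\sqrt{\tau_\beta^2-4\beta}-(2-\tau_\beta\rho^\star_{\rm SGD})$. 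Because $\tau_\beta\rho^\star_{\rm SGD}<2$, both sides are positive, and squaring shows the sign is that of $\beta(\rho^{\star2}_{\rm SGD})-1\cdot\ldots$, more precisely of $-4\beta\rho^{\star2}_{\rm SGD}-4+4\tau_\beta\rho^\star_{\rm SGD}$. This simplifies to $4[(1+\beta)\rho^{\star2}_{\rm SGD}-\beta\rho^{\star2}_{\rm SGD}-1]=4(\rho^{\star2}_{\rm SGD}-1)<0$. So $\rho_\star$ is decreasing on $[0,\beta_\star)$ and increasing afterward. By continuity at $\beta_\star$, where $\tau=2\sqrt{\beta_\star}$, the minimum is $\rho_\star(\beta_\star)=\sqrt{\beta_\star}=c$, which is \eqref{eq:accelerated-rate}. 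The expansion $c=1-2/(\sqrt\kappa+1)$ gives $1-\Theta(\kappa^{-1/2})$.

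Finally, I substitute $\beta_\star$ into $\gamma_{\rm bal}$. Using $1+\beta_\star=2(L+\mu)/(\sqrt L+\sqrt\mu)^2$ yields $\gamma_\star=4/(\sqrt L+\sqrt\mu)^2$. Using $1-\beta_\star=4\sqrt{\mu L}/(\sqrt L+\sqrt\mu)^2$ yields $\eta_\star=1/\sqrt{\mu L}$, which is \eqref{eq:joint-optimum}.
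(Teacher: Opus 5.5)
Your proposal is correct and follows essentially the same route as the paper: reduce to $\max(|a_\mu|,|a_L|)$ and balance at $\gamma_{\rm bal}$, then locate the branch point through the quadratic in $\sqrt\beta$ (your factorization $(\sqrt\beta-c)(1-c\sqrt\beta)$ is the paper's root computation in factored form), show the first branch decreases in $\beta$, and substitute $\beta_\star$. The differences are cosmetic. The paper gets the lower bound on $\max(|a_\mu|,|a_L|)$ from the identity $(L-\mu)(1+\beta)=La_\mu-\mu a_L$ rather than your monotonicity sketch. It also differentiates $\rho^2-\tau_\beta\rho+\beta=0$ implicitly, which gives $\rho'=(\rho^\star_{\rm SGD}\rho-1)/(2\rho-\tau_\beta)<0$ directly and avoids your squaring step and the garbled intermediate line.
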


From \eqref{eq:optimalGD}, optimally tuned gradient descent has rate $\rho_{\rm SGD}^\star=1-\Theta(\kappa^{-1})$ with $\gamma_{\rm GD}^\star=2/(L+\mu)$. Thus \eqref{eq:accelerated-rate} implies that the initialization forgetting scale improves from order $\kappa$ to order $\sqrt\kappa$. More precisely, $\rho_\star(\beta)<\rho_{\rm SGD}^\star$ for $0<\beta<(\rho_{\rm SGD}^\star)^2$, equality holds at $\beta=0$ and $\beta=(\rho_{\rm SGD}^\star)^2$, and momentum is slower for $\beta>(\rho_{\rm SGD}^\star)^2$.

For any $\beta<1$ fixed independently of $\kappa$,
\[
    1-\rho_\star(\beta)=\frac{2(1+\beta)}{1-\beta}\kappa^{-1}+O_\beta(\kappa^{-2}).
\]
Thus a fixed positive momentum improves the leading constant, while the order remains $\kappa^{-1}$. The optimal $\kappa^{-1/2}$ order uses condition-number-dependent momentum with $1-\beta_\star=\Theta(\kappa^{-1/2})$.

\subsection{Proof of Theorem~\ref{thm:exact-quadratic-rate}}
\label{app:exact-quadratic-proof}

\begin{proof}
Under synchronous coupling, the additive noise cancels, so the lag difference $\xi_k=(\Delta e_k,\Delta e_{k-1})$ satisfies
\[
    \xi_k=T_A\xi_{k-1},\qquad T_A=\begin{pmatrix}(1+\beta)I_d-\gamma A&-\beta I_d\\I_d&0\end{pmatrix}.
\]
Diagonalizing the symmetric matrix $A$ and reordering the lag coordinates makes $T_A$ orthogonally similar to the direct sum of the blocks $T_\lambda$, $\lambda\in\sigma(A)$. Consequently,
\[
    \rho(T_A)=\max_{\lambda\in\sigma(A)}\rho(T_\lambda)\le\rho_{\rm quad}(\gamma,\beta).
\]
From \eqref{eq:psi-beta}, the spectral radius of $T_\lambda$ is $\psi_\beta(a_\lambda)$.  The function $\psi_\beta(a)$ is nondecreasing in $|a|$, while $|1+\beta-\gamma\lambda|$ is convex in $\lambda$.  Its maximum on $[\mu,L]$ is therefore attained at $\mu$ or $L$, which proves \eqref{eq:exact-quadratic-rate}.

We next characterize stability.
The Jury conditions for the characteristic equation $z^2-a_\lambda z+\beta$ to have both roots in the open unit disk are
\eqs*{
      1-\beta>0, \qquad \gamma\lambda>0, \qquad 2(1+\beta)-\gamma\lambda>0,
}
which gives the range of $\gamma$ in \eqref{eq:exact-quadratic-stability}.

For GMC, fix $\rho_{\rm quad}(\gamma,\beta)<r<1$ and define
\[
    |\xi|_r:=\sup_{j\ge0}r^{-j}|T_A^j\xi|.
\]
Since $\rho(T_A)<r$, the spectral-radius formula ensures $|\xi|\le|\xi|_r\le C_r|\xi|$ for some $C_r<\infty$. Shifting the supremum gives $|T_A\xi|_r\le r|\xi|_r$. The invertible map $R(\theta,y)=(\theta,\theta-y)$ converts SGDM state differences to lag differences because $y_k=e_k-e_{k-1}$. Thus $\dist_r(z,z'):=|R(z-z')|_r$ is an equivalent state-space metric, and synchronous coupling yields
\[
    \|\dist_r(\Phi_{x_k}(z),\Phi_{x_k}(z'))\|_p=|T_A R(z-z')|_r\le r \dist_r(z,z').
\]
The equality holds because the coupled difference is deterministic. This proves one-step $L^p$ GMC for every $\rho_{\rm quad}(\gamma,\beta)<r<1$.

Conversely, suppose a norm on the lag state satisfies $|T_A\xi|_*\le r|\xi|_*$ for every $\xi$. Iteration and equivalence of finite-dimensional norms give $|T_A^k|\le Cr^k$. The spectral-radius formula implies $\rho(T_A)\le r$. Combined with the construction above, this shows that the infimum of norm contraction factors for a fixed $A$ is $\rho(T_A)$. An endpoint maximizing~\eqref{eq:exact-quadratic-rate} can be an eigenvalue of an admissible $A$, so the supremum of these infima over the quadratic class is $\rho_{\rm quad}$. A repeated root may prevent attainment of the infimum. This lower bound applies to contraction in equivalent norms.

\end{proof}

\subsection{Proof of Corollary~\ref{cor:fixed-beta-optimum}}

\begin{proof}
We first minimize over $\gamma$ with $\beta$ fixed, and then minimize the resulting rate over $\beta$. Recall $\rho_{\rm SGD}^\star=(L-\mu)/(L+\mu)$ from \eqref{eq:optimalGD}, and set
\[
    a_\mu=1+\beta-\gamma\mu,\qquad a_L=1+\beta-\gamma L,\qquad M_\beta(\gamma)=\max(|a_\mu|,|a_L|).
\]
By \eqref{eq:exact-quadratic-rate}, $\rho_{\rm quad}(\gamma,\beta)=\psi_\beta(M_\beta(\gamma))$. Since $\psi_\beta$ is nondecreasing on $[0,\infty)$, it suffices to minimize $M_\beta$. The identity
\[
    (L-\mu)(1+\beta)=La_\mu-\mu a_L\le L|a_\mu|+\mu|a_L|\le(L+\mu)M_\beta(\gamma)
\]
gives the lower bound $M_\beta(\gamma)\ge(1+\beta)\rho_{\rm SGD}^\star$. Equality is attained by balancing the endpoint arguments:
\[
    a_\mu=-a_L\quad\Longleftrightarrow\quad\gamma=\gamma_{\rm bal}:=\frac{2(1+\beta)}{L+\mu}.
\]
At this learning rate, $a_\mu=(1+\beta)\rho_{\rm SGD}^\star$ and $a_L=-(1+\beta)\rho_{\rm SGD}^\star$. Thus, with $\tau_\beta=(1+\beta)\rho_{\rm SGD}^\star$,
\[
    \rho_\star(\beta):=\min_{\gamma>0}\rho_{\rm quad}(\gamma,\beta)=\psi_\beta(\tau_\beta).
\]
This proves the claimed choice of $\gamma_{\rm bal}$; uniqueness is not needed because $\psi_\beta$ has a constant branch.

We next determine which branch of \eqref{eq:psi-beta} applies. Suppose first that $L>\mu$, so $0<\rho_{\rm SGD}^\star<1$, and put $t=\sqrt\beta\in[0,1)$. The boundary condition is
\[
    \tau_\beta=2\sqrt\beta\quad\Longleftrightarrow\quad \rho_{\rm SGD}^\star t^2-2t+\rho_{\rm SGD}^\star=0.
\]
The two roots are $(1\pm\sqrt{1-(\rho_{\rm SGD}^\star)^2})/\rho_{\rm SGD}^\star$. Only the smaller root lies in $[0,1)$, and it equals
\[
    t_\star=\frac{1-\sqrt{1-(\rho_{\rm SGD}^\star)^2}}{\rho_{\rm SGD}^\star}=\frac{\sqrt L-\sqrt\mu}{\sqrt L+\sqrt\mu}=\frac{\sqrt\kappa-1}{\sqrt\kappa+1}.
\]
Hence, the polynomial $\rho_{\rm SGD}^\star t^2-2t+\rho_{\rm SGD}^\star$ is positive for $0\le t<t_\star$ and nonpositive for $t_\star\le t<1$.

Therefore, defining $\beta_\star=t_\star^2$ gives exactly \eqref{eq:fixed-beta-rate}:
\[
    \rho_\star(\beta)=\begin{cases}
        \dfrac{\tau_\beta+\sqrt{\tau_\beta^2-4\beta}}2,&0\le\beta<\beta_\star,\\[2mm]
        \sqrt\beta,&\beta_\star\le\beta<1.
    \end{cases}
\]

To optimize over $\beta$, consider the first branch and write $\rho=\rho_\star(\beta)$. It satisfies $\rho^2-(1+\beta)\rho_{\rm SGD}^\star\rho+\beta=0$. Differentiating with respect to $\beta$ yields
\[
    2\rho\rho'-\rho_{\rm SGD}^\star\rho-(1+\beta)\rho_{\rm SGD}^\star\rho'+1=0,\qquad
    \rho'=\frac{\rho_{\rm SGD}^\star\rho-1}{2\rho-\tau_\beta}.
\]
Here $2\rho-\tau_\beta=\sqrt{\tau_\beta^2-4\beta}>0$. Also, $\gamma_{\rm bal}<2(1+\beta)/L$, so \eqref{eq:exact-quadratic-stability} gives $\rho<1$. Since $\rho_{\rm SGD}^\star<1$, the numerator $\rho_{\rm SGD}^\star\rho-1$ is negative. Thus the first branch decreases with $\beta$, whereas the second branch $\sqrt\beta$ increases. The branches agree at $\beta_\star$, so the joint minimum occurs there.

It remains to evaluate the learning rates at this momentum. From the expression for $\beta_\star$,
\[
    1+\beta_\star=\frac{2(L+\mu)}{(\sqrt L+\sqrt\mu)^2},\qquad 1-\beta_\star=\frac{4\sqrt{\mu L}}{(\sqrt L+\sqrt\mu)^2}.
\]
Substituting into $\gamma_{\rm bal}$ and $\eta=\gamma/(1-\beta)$ proves the remaining identities in \eqref{eq:joint-optimum}:
\[
    \gamma_\star=\frac{2(1+\beta_\star)}{L+\mu}=\frac4{(\sqrt L+\sqrt\mu)^2},\qquad
    \eta_\star=\frac{\gamma_\star}{1-\beta_\star}=\frac1{\sqrt{\mu L}}.
\]
The minimum rate is
\[
    \rho_{\rm SGDM}^\star=\rho_\star(\beta_\star)=\sqrt{\beta_\star}=\frac{\sqrt\kappa-1}{\sqrt\kappa+1}=1-\frac2{\sqrt\kappa+1},
\]
which gives \eqref{eq:accelerated-rate}, including its $1-\Theta(\kappa^{-1/2})$ asymptotic form as $\kappa\to\infty$.

If $L=\mu$, then $\rho_{\rm SGD}^\star=0$ and $\tau_\beta=0$, so $\rho_\star(\beta)=\sqrt\beta$ for every $\beta\in[0,1)$. Its minimum is attained at $\beta_\star=0$, with $\gamma_\star=\eta_\star=1/\mu$ and $\rho_{\rm SGDM}^\star=0$, as asserted by the same formulas.
\end{proof}

\section{Proofs for stochastic curvature}
\label{app:rate-variance-finite-n}

This appendix gives the proofs for Section~\ref{sec:quadratic-acceleration} and a coordinate-sampled least-squares example. We use the commuting-curvature model~\eqref{eq:commuting-curvature-model}, with a fixed iid sampling distribution and only second moments. Curvature and additive noise need not be independent.

Write $e_{i,k}=(Q^\top(\theta_k-\theta^\star))_i$, $h_i=\lambda_i$, and use the variances $V_i$, $\sigma_i^2$, and $\Sigma_i^{\rm av}$ from Proposition~\ref{prop:curvature-variance}.

\subsection{Proof of Proposition~\ref{thm:curvature-rate}}
\label{app:curvature-rate}

\begin{proof}
We first identify the forgetting factor from the coupled second moments, then obtain GMC and stationarity. Lemma~\ref{lem:curvature-rate-criterion} below gives the scalar rate and stability criteria used in the remaining proofs.

\textit{1. Coupled second moments.}
Take two SGDM trajectories driven by the same innovations $(H_k,\varepsilon_k)$. Set $e_{i,-1}=e_{i,0}-(Q^\top y_0)_i$, and use the same convention for the primed trajectory. In coordinate $i$, eliminating the momentum from~\eqref{eq:sgdm} gives, for $k\ge1$,
\[
    e_{i,k}=(1+\beta-\gamma H_{i,k})e_{i,k-1}-\beta e_{i,k-2}-\gamma(Q^\top\varepsilon_k)_i.
\]
The primed trajectory satisfies the same recursion with the same additive noise. Write $\Delta e_{i,k}=e_{i,k}-e_{i,k}'$. Subtracting gives
\[
    \Delta e_{i,k}=(1+\beta-\gamma H_{i,k})\Delta e_{i,k-1}-\beta\Delta e_{i,k-2}.
\]
Thus the additive noise cancels, while the curvature remains random. In lag coordinates,
\[
    \begin{pmatrix}\Delta e_{i,k}\\\Delta e_{i,k-1}\end{pmatrix}
    =B_{i,k}\begin{pmatrix}\Delta e_{i,k-1}\\\Delta e_{i,k-2}\end{pmatrix},
    \qquad B_{i,k}=\begin{pmatrix}1+\beta-\gamma H_{i,k}&-\beta\\1&0\end{pmatrix}.
\]

Fix $i$ and write $h=\lambda_i$, $v=v_i$, $a=1+\beta-\gamma h$, and $w=\gamma^2v$. Let $\mathcal F_{k-1}$ contain the initial states and the innovations through time $k-1$. Independence gives
\[
    \mathbb E(1+\beta-\gamma H_{i,k})=a,\qquad
    \mathbb E((1+\beta-\gamma H_{i,k})^2)=a^2+w.
\]
Consequently,
\begin{align*}
    \mathbb E((\Delta e_{i,k})^2\mid\mathcal F_{k-1})
    &=\mathbb E((1+\beta-\gamma H_{i,k})^2)(\Delta e_{i,k-1})^2\\
    &\quad-2\beta\mathbb E(1+\beta-\gamma H_{i,k})
      \Delta e_{i,k-1}\Delta e_{i,k-2}+\beta^2(\Delta e_{i,k-2})^2\\
    &=(a^2+w)(\Delta e_{i,k-1})^2
      -2a\beta\Delta e_{i,k-1}\Delta e_{i,k-2}+\beta^2(\Delta e_{i,k-2})^2,\\
    \mathbb E(\Delta e_{i,k}\Delta e_{i,k-1}\mid\mathcal F_{k-1})
    &=a(\Delta e_{i,k-1})^2-\beta\Delta e_{i,k-1}\Delta e_{i,k-2}.
\end{align*}
Taking expectations and collecting the three moments gives
\begin{equation}
    \begin{pmatrix}\mathbb E((\Delta e_{i,k})^2)\\\mathbb E(\Delta e_{i,k}\Delta e_{i,k-1})\\\mathbb E((\Delta e_{i,k-1})^2)\end{pmatrix}
    =\underbrace{\begin{pmatrix}a^2+w&-2a\beta&\beta^2\\a&-\beta&0\\1&0&0\end{pmatrix}}_{\mathcal T(h,v)}
    \begin{pmatrix}\mathbb E((\Delta e_{i,k-1})^2)\\\mathbb E(\Delta e_{i,k-1}\Delta e_{i,k-2})\\\mathbb E((\Delta e_{i,k-2})^2)\end{pmatrix}.
    \label{eq:curvature-moment-operator}
\end{equation}
Iterating this recursion,
\[
    \mathbb E((\Delta e_{i,k})^2+(\Delta e_{i,k-1})^2)
    =(1,0,1)\mathcal T(h,v)^k
    \begin{pmatrix}\mathbb E((\Delta e_{i,0})^2)\\\mathbb E(\Delta e_{i,0}\Delta e_{i,-1})\\\mathbb E((\Delta e_{i,-1})^2)\end{pmatrix}.
\]
Thus powers of $\mathcal T(h,v)$ determine the coupled second moments. Write $R_i=\rho(\mathcal T(\lambda_i,v_i))$ and $R_{\max}=\max_iR_i$.

\textit{2. From second moments to the decay rate.}
Let $u=(u_1,u_2)^\top=(\Delta e_{i,0},\Delta e_{i,-1})^\top$ be the deterministic initial difference. By linearity, we may normalize $|u|=1$. We bound $\sup_{|u|=1}\mathbb E|B_{i,k}\cdots B_{i,1}u|^2$ above and below by multiples of $|\mathcal T(h,v)^k|$, with constants independent of $k$, to show that their exponential rates agree. First,
{
\[
    \mathbb E|B_{i,k}\cdots B_{i,1}u|^2
    =(1,0,1)\mathcal T(h,v)^k(u_1^2,u_1u_2,u_2^2)^\top
    \le\sqrt2|\mathcal T(h,v)^k|,
\]
}
because $|(u_1^2,u_1u_2,u_2^2)|\le u_1^2+u_2^2=1$.

For the reverse bound, the second-moment matrix of $B_{i,k}\cdots B_{i,1}u$ is positive semidefinite. For any positive semidefinite $2\times2$ matrix $S$,
\[
    S_{12}^2\le S_{11}S_{22},\qquad
    S_{11}^2+S_{12}^2+S_{22}^2\le(S_{11}+S_{22})^2.
\]
Applying this inequality to the moment recursion gives
\eqs*{
    |\mathcal T(h,v)^k(u_1^2,u_1u_2,u_2^2)^\top|
    &=\left(\sum_{j=1}^2(\mathbb E(B_{i,k}\cdots B_{i,1}u)_j^2)^2+(\mathbb E((B_{i,k}\cdots B_{i,1}u)_1(B_{i,k}\cdots B_{i,1}u)_2))^2\right)^{1/2}\\
    &\le\mathbb E(B_{i,k}\cdots B_{i,1}u)_1^2+\mathbb E(B_{i,k}\cdots B_{i,1}u)_2^2=\mathbb E|B_{i,k}\cdots B_{i,1}u|^2.
}
For $u=(1,0)^\top$ and $u=(0,1)^\top$, the initial moment vectors are $(1,0,0)^\top$ and $(0,0,1)^\top$. Thus the first and third columns satisfy
\[
    \max\left(
      |\mathcal T(h,v)^k(1,0,0)^\top|,
      |\mathcal T(h,v)^k(0,0,1)^\top|
    \right)
    \le\sup_{|u|=1}\mathbb E|B_{i,k}\cdots B_{i,1}u|^2.
\]
To control the second column, take $u=(1,\pm1)^\top/\sqrt2$. Their initial moment vectors satisfy
\[
    (1/2,1/2,1/2)^\top-(1/2,-1/2,1/2)^\top=(0,1,0)^\top.
\]
Hence
\begin{align*}
    |\mathcal T(h,v)^k(0,1,0)^\top|
    &=\left|\mathcal T(h,v)^k(1/2,1/2,1/2)^\top
      -\mathcal T(h,v)^k(1/2,-1/2,1/2)^\top\right|\\
    &\le2\sup_{|u|=1}\mathbb E|B_{i,k}\cdots B_{i,1}u|^2.
\end{align*}
The operator norm is at most the square root of the sum of the squared column norms. Hence
{
\begin{equation}
    \frac1{\sqrt2}\sup_{|u|=1}\mathbb E|B_{i,k}\cdots B_{i,1}u|^2
    \le|\mathcal T(h,v)^k|
    \le\sqrt6\sup_{|u|=1}\mathbb E|B_{i,k}\cdots B_{i,1}u|^2.
    \label{eq:curvature-moment-comparison}
\end{equation}
}
The spectral-radius formula gives
{
\[
    \lim_{k\to\infty}|\mathcal T(h,v)^k|^{1/k}=\rho(\mathcal T(h,v)).
\]
}
Taking $k$th roots in~\eqref{eq:curvature-moment-comparison} therefore gives
\[
    \lim_{k\to\infty}
    \left(\sup_{|u|=1}\mathbb E|B_{i,k}\cdots B_{i,1}u|^2\right)^{1/k}
    =\rho(\mathcal T(h,v))=R_i.
\]
Across coordinates,
\[
    \sup_{\sum_i|u_i|^2=1}\sum_i\mathbb E|B_{i,k}\cdots B_{i,1}u_i|^2
    =\max_i\sup_{|u|=1}\mathbb E|B_{i,k}\cdots B_{i,1}u|^2.
\]
Thus the largest asymptotic second-moment factor is $\max_iR_i=R_{\max}$, without requiring independence across coordinates.
The preceding limits imply that, for every $r>\sqrt{R_{\max}}$, there is a constant $C_r<\infty$ such that
\eqs*{
    \sup_{\sum_i|u_i|^2=1}\sum_i\mathbb E|B_{i,k}\cdots B_{i,1}u_i|^2\le C_r r^{2k},\qquad k\ge1.
}
Conversely, this bound cannot hold for $0<r<\sqrt{R_{\max}}$.

Since $y_k=\theta_k-\theta_{k-1}$, the lag state and $Z_k=(\theta_k,y_k)$ are related by an invertible linear transformation. Their norms differ by fixed factors, which do not affect the exponential rate. Taking square roots therefore identifies the forgetting factor defined in Section~\ref{sec:stochastic-curvature}:
\[
    q=\sqrt{R_{\max}}.
\]

\textit{3. GMC and stationarity.}
We now obtain the one-step $L^2$ contraction needed for Section~\ref{sec:averaging-fclt}. Fix $r\in(q,1)$, so $R_i<r^2$ for every coordinate. The rescaled lag difference satisfies
\eqs*{
    r^{-k}\begin{pmatrix}\Delta e_{i,k}\\\Delta e_{i,k-1}\end{pmatrix}
    =\frac{B_{i,k}}r\,r^{-(k-1)}\begin{pmatrix}\Delta e_{i,k-1}\\\Delta e_{i,k-2}\end{pmatrix}.
}
Replacing $B_{i,k}$ by $B_{i,k}/r$ divides every second-moment term in~\eqref{eq:curvature-moment-operator} by $r^2$. Thus the second-moment transition matrix and its spectral radius are
\eqs*{
    \frac{\mathcal T(\lambda_i,v_i)}{r^2},\qquad
    \rho\left(\frac{\mathcal T(\lambda_i,v_i)}{r^2}\right)=\frac{R_i}{r^2}<1.
}
Hence, $\mathbb E\left|r^{-k}B_{i,k}\cdots B_{i,1}u\right|^2\to 0$ for every $u\in\mathbb R^2$ implies mean-square stability of the rescaled recursion. The stochastic Lyapunov criterion \citep[Section~9.1.1]{boyd1994linear} therefore gives a positive definite matrix $P_i$ satisfying
\begin{equation}
    P_i\succ0,\qquad r^2P_i-\mathbb E(B_{i,1}^\top P_iB_{i,1})=I_2.
    \label{eq:curvature-adapted-metric}
\end{equation}
Thus, in the squared norm $u^\top P_i u$, the expected squared norm after one step is at most $r^2$ times its initial value. Since $y_k=\theta_k-\theta_{k-1}$, a state difference $(\theta-\theta',y-y')$ has previous parameter difference $\theta-\theta'-y+y'$. We measure these two parameter differences in each coordinate using $P_i$ and sum the resulting squared norms.
For $z=(\theta,y)$ and $z'=(\theta',y')$, define
{
\[
    \dist_r(z,z')^2
    =\sum_i
    \begin{pmatrix}(Q^\top(\theta-\theta'))_i\\(Q^\top(\theta-\theta'-y+y'))_i\end{pmatrix}^{\!\top}
    P_i
    \begin{pmatrix}(Q^\top(\theta-\theta'))_i\\(Q^\top(\theta-\theta'-y+y'))_i\end{pmatrix}.
\]
}
The map $(\theta-\theta',y-y')\mapsto(Q^\top(\theta-\theta'),Q^\top(\theta-\theta'-y+y'))$ is invertible, so $\dist_r$ is an equivalent norm distance. Under synchronous coupling, \eqref{eq:curvature-adapted-metric} gives
{
\begin{align*}
    \mathbb E\,\dist_r(\Phi_{x_0}(z),\Phi_{x_0}(z'))^2
    &=r^2\dist_r(z,z')^2
        -\sum_i\left(
      |(Q^\top(\theta-\theta'))_i|^2
      +|(Q^\top(\theta-\theta'-y+y'))_i|^2\right)\\
    &\le r^2\dist_r(z,z')^2.
\end{align*}
}
Taking square roots proves the claimed $L^2$ GMC bound with factor $r$.

To apply the stationary construction in Appendix~\ref{app:stationary-construction}, we also need a finite second moment of the one-step displacement. At $z^\star=(\theta^\star,0)$, the lag displacement is $(-\gamma Q^\top\varepsilon,0)$. Hence
{
\begin{equation}
    D_2:=\|\dist_r(\Phi_{x_0}(z^\star),z^\star)\|_2
    =\gamma\left(\sum_i(P_i)_{11}\mathbb E(Q^\top\varepsilon)_i^2\right)^{1/2}<\infty.
    \label{eq:curvature-displacement}
\end{equation}
}

Appendix~\ref{app:stationary-construction} now applies with $p=2$, distance $\dist_r$, and rate $r$. It gives a unique invariant probability law and a causal stationary solution with
{
\[
    \|\dist_r(Z_0^\circ,z^\star)\|_2\le\frac{D_2}{1-r}.
\]
}
Norm equivalence gives its finite second moment.
\end{proof}

\begin{lemma}[Scalar rate and stability criteria]
\label{lem:curvature-rate-criterion}
Let $h>0$, $v\ge0$, $0\le\beta<1$, and $\gamma>0$. For a prescribed $L^2$ rate $0<r<1$, $\rho(\mathcal T(h,v))<r^2$ if and only if $\beta<r^2$ and
\begin{equation}
    \gamma^2v<\frac{r^2-\beta}{r^2+\beta}\left(\frac{(r^2+\beta)^2}{r^2}-(1+\beta-\gamma h)^2\right).
    \label{eq:curvature-target-rate}
\end{equation}
In particular, $\rho(\mathcal T(h,v))<1$ if and only if
\begin{equation}
    0<\gamma<\gamma_{\rm rc}(\beta;h,v):=\frac{2h(1-\beta^2)}{(1-\beta)h^2+(1+\beta)v}.
    \label{eq:curvature-stability}
\end{equation}
\end{lemma}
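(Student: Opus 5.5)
We will analyze the characteristic polynomial of the $3\times3$ moment operator $\mathcal T(h,v)$ from \eqref{eq:curvature-moment-operator} directly. Writing $a=1+\beta-\gamma h$ and $w=\gamma^2v$, expanding $\det(zI-\mathcal T)$ along the last column gives
\[
p_w(z)=z^3-(a^2+w-\beta)z^2+\beta(a^2-w-\beta)z-\beta^3 .
\]
As a check, at $w=0$ this should factor as $(z-\beta)(z^2-(a^2-2\beta)z+\beta^2)$. The roots are then $\beta$ and the squares of the roots of $x^2-ax+\beta$, which is consistent with the deterministic heavy-ball picture of Appendix~\ref{app:additive-benchmark}. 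We will use the substitution $z=r^2\zeta$ and ask when all roots of the rescaled cubic lie in the open unit disk. This equivalence is exactly what Proposition~\ref{thm:curvature-rate} needs. Because $\mathcal T$ is a nonnegative-type moment operator, it is also natural to expect its Perron root to be real and to dominate in modulus.

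The first key step is to show that $\rho(\mathcal T)<r^2$ holds if and only if $p_w(z)>0$ for all real $z\ge r^2$, together with a condition excluding complex roots of modulus at least $r^2$. For the complex roots, I would use the fact that the product of the roots equals $\beta^3$. At $w=0$ the roots have moduli $\beta$, $\beta$, $\beta$ in the underdamped case, or are real otherwise. I would then argue by continuity and monotonicity in $w$ that the largest-modulus root is real and increasing in $w$, since $\partial_w p_w(z)=-z(z+\beta)<0$ for $z>0$. This gives
\[
\rho(\mathcal T)<r^2 \iff \beta<r^2 \text{ and } p_w(r^2)>0,
\]
where the condition $\beta<r^2$ is needed because $\beta$ is the root at $w=0$, and $\beta<r^2$ is clearly necessary.

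The second step computes $p_w(r^2)>0$. Since $p_w$ is linear in $w$, the inequality becomes
\[
w\,r^2(r^2+\beta)<r^6-(a^2-\beta)r^4+\beta(a^2-\beta)r^2-\beta^3 .
\]
I expect the right side to factor as $(r^2-\beta)\bigl((r^2+\beta)^2-a^2r^2\bigr)$. This can be checked by expansion: it yields $r^6+\beta r^4-\beta^2r^2-\beta^3-a^2r^4+a^2\beta r^2$, which matches. Dividing by $r^2(r^2+\beta)$ gives \eqref{eq:curvature-target-rate}. For the stability statement, we set $r=1$, so the condition reads $w(1+\beta)<(1-\beta)\bigl((1+\beta)^2-a^2\bigr)$. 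Expanding $(1+\beta)^2-a^2=\gamma h\bigl(2(1+\beta)-\gamma h\bigr)$ and dividing by $\gamma>0$ yields a linear inequality in $\gamma$, namely $\gamma\bigl((1+\beta)v+(1-\beta)h^2\bigr)<2h(1-\beta^2)$. This is \eqref{eq:curvature-stability}.

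The main obstacle is rigorously excluding complex root pairs of modulus at least $r^2$ when $p_w(r^2)>0$. For this I would use a Jury/Schur–Cohn test on the rescaled cubic, or the following argument. The matrix $\mathcal T$ represents the map $S\mapsto \mathbb E[BSB^\top]$ on symmetric $2\times2$ matrices, which preserves the PSD cone. By Krein–Rutman, its spectral radius is therefore an eigenvalue with a PSD eigenvector. It is then enough to check that $p_w(z)>0$ for all $z\ge r^2$. This holds because $p_w$ is positive at $r^2$ and has no real root beyond $r^2$: by the monotonicity in $w$, the largest real root exceeds $r^2$ only after $p_w(r^2)$ crosses zero. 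Conversely, if $p_w(r^2)\le0$, then since $p_w(z)\to\infty$, there is a real root at least $r^2$.
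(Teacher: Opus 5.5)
Your proposal is correct, and it reaches the lemma by a different route from the paper.

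\textbf{The paper's route.} The paper never forms the characteristic polynomial. It imports the scalar mean-square stability criterion of Andreyev et al.\ for the case $r=1$, namely $w<\frac{1-\beta}{1+\beta}\bigl((1+\beta)^2-a^2\bigr)$. It then reaches general $r$ by the rescaling $\widetilde e_k=r^{-k}\Delta e_{i,k}$, which maps $(\beta,a,w)$ to $(\beta/r^2,a/r,w/r^2)$ and divides the spectral radius by $r^2$. Necessity of $\beta<r^2$ comes from $\det\mathcal T(h,v)=\beta^3$.

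\textbf{Your route.} You work at every $r$ directly, and the $r=1$ statement comes out as a special case rather than serving as the input. Your cubic $p_w$ and the identity $p_w(r^2)=(r^2-\beta)\bigl((r^2+\beta)^2-a^2r^2\bigr)-w\,r^2(r^2+\beta)$ are both correct.

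\textbf{What each buys.} Your argument is self-contained and explains structurally where the two conditions come from. The price is that you must yourself rule out a pair of real roots above $r^2$, which can coexist with $p_w(r^2)>0$. For example, at $w=0$ with $\beta>r^2$ and $x_2^2<r^2<\beta<x_1^2$, where $x_1,x_2$ are the roots of $x^2-ax+\beta$, you get $p_0(r^2)>0$ but $\rho>r^2$. The paper outsources this exclusion to the cited criterion.

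\textbf{Two points to tighten.} First, rely only on the Krein--Rutman step. $\mathcal T$ represents $S\mapsto\mathbb E[BSB^\top]$, which preserves the PSD cone, so $\rho(\mathcal T)$ is itself a real root of $p_w$. Drop the heuristic that ``the largest-modulus root is real by continuity from $w=0$.'' In the underdamped case all three roots at $w=0$ have modulus $\beta$, so that heuristic proves nothing. Also, $\mathcal T$ is not entrywise nonnegative, so ordinary Perron--Frobenius does not apply.

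Second, make the base case explicit. Suppose $\beta<r^2$ and $p_w(r^2)>0$.
\begin{enumerate}
\item Since $p_{w'}(r^2)\ge p_w(r^2)>0$ for every $w'\in[0,w]$, the continuous function $w'\mapsto\rho(\mathcal T_{w'})$ never equals $r^2$.
\item It therefore stays on the same side of $r^2$ as $\rho(\mathcal T_0)$, so it suffices to show $\rho(\mathcal T_0)<r^2$.
\item In the underdamped case, $\rho(\mathcal T_0)=\beta<r^2$.
\item In the overdamped case, $p_0(r^2)=(r^2-\beta)(r^2-x_1^2)(r^2-x_2^2)>0$ puts both $x_i^2$ on the same side of $r^2$. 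They cannot both exceed $r^2$, because $x_1^2x_2^2=\beta^2<r^4$.
\end{enumerate}
This is where $\beta<r^2$ does real work beyond being necessary; its role is not merely that ``$\beta$ is the root at $w=0$.''
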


\begin{proof}
We obtain the condition for a prescribed $L^2$ rate by applying a known stability criterion to a rescaled recursion. Recall $a=1+\beta-\gamma h$ and $w=\gamma^2v$.
The scalar mean-square stability criterion of \citet[Appendix~B, equation~(19)]{andreyev2026momentum}, written in terms of the coefficient mean $a$ and variance $w$, is equivalent to
\[
    w<\frac{1-\beta}{1+\beta}((1+\beta)^2-a^2).
\]
This inequality includes stability of the mean recursion, since it implies $|a|<1+\beta$.

Fix $0<r\le1$. We test $\rho(\mathcal T(h,v))<r^2$ by checking mean-square stability after rescaling the difference by $r^{-k}$. Since $\det\mathcal T(h,v)=\beta^3$, we have $\rho(\mathcal T(h,v))\ge\beta$; hence $\beta<r^2$ is necessary. Assume $\beta<r^2$ and set $\widetilde e_k=r^{-k}\Delta e_{i,k}$. The difference recursion gives
\begin{align*}
    \widetilde e_k
    &=r^{-k}\left((1+\beta-\gamma H_{i,k})\Delta e_{i,k-1}-\beta\Delta e_{i,k-2}\right)\\
    &=\frac{1+\beta-\gamma H_{i,k}}r\,\widetilde e_{k-1}
      -\frac\beta{r^2}\,\widetilde e_{k-2}.
\end{align*}
To compare its second moments with those of the original difference, note that
\[
    \mathbb E(\widetilde e_k^2+\widetilde e_{k-1}^2)
    =r^{-2k}\mathbb E((\Delta e_{i,k})^2+r^2(\Delta e_{i,k-1})^2).
\]
The fixed weight $r^2$ does not change the exponential factor. Thus the rescaled recursion has second-moment factor $\rho(\mathcal T(h,v))/r^2$ and is mean-square stable exactly when $\rho(\mathcal T(h,v))<r^2$.

Its momentum coefficient, coefficient mean, and coefficient variance are $\beta/r^2$, $a/r$, and $w/r^2$. Applying the scalar stability criterion to these coefficients gives
\begin{align*}
    \frac w{r^2}
    <\frac{1-\beta/r^2}{1+\beta/r^2}
      \left((1+\beta/r^2)^2-\frac{a^2}{r^2}\right)
    \Longleftrightarrow\quad
    w<\frac{r^2-\beta}{r^2+\beta}
      \left(\frac{(r^2+\beta)^2}{r^2}-a^2\right).
\end{align*}
Together with $\beta<r^2$, this is equivalent to $\rho(\mathcal T(h,v))<r^2$.
Substituting $a=1+\beta-\gamma h$ and $w=\gamma^2v$ gives~\eqref{eq:curvature-target-rate}. For $r=1$,
\begin{align*}
    \frac{1-\beta}{1+\beta}((1+\beta)^2-a^2)-w
    &=\frac{1-\beta}{1+\beta}((1+\beta)^2-(1+\beta-\gamma h)^2)-\gamma^2v\\
    &=\frac{\gamma}{1+\beta}
      \left(2h(1-\beta^2)-\gamma((1-\beta)h^2+(1+\beta)v)\right).
\end{align*}
Its positivity is equivalent to~\eqref{eq:curvature-stability}.
\end{proof}

\subsection{Proof of Proposition~\ref{prop:curvature-variance}}
\label{app:curvature-variance}

\begin{proof}
\textit{1. Stationary variance.} Proposition~\ref{thm:curvature-rate} gives a causal stationary state with finite second moment. 
Fix a coordinate \(i\) and write $e_k^\circ=(Q^\top(\theta_k^\circ-\theta^\star))_i$.
We suppress the coordinate index below.
Let $a=1+\beta-\gamma h$ and $\zeta_k=(H_k-h)e_{k-1}^\circ+\varepsilon_k$, such that the recursion is $e_k^\circ=ae_{k-1}^\circ-\beta e_{k-2}^\circ-\gamma\zeta_k$.
We analyze variance and covariance $V=\mathbb E(e_k^\circ)^2$, and $C=\mathbb E(e_k^\circ e_{k-1}^\circ)$.

The equations for $C$ and $V$ require the conditional mean and second moment of $\zeta_k$. Since $(H_k,\varepsilon_k)$ is independent of $\mathcal F_{k-1}$,
\eqs*{
    \mathbb E(\zeta_k\mid\mathcal F_{k-1})
    =(\mathbb EH_k-h)e_{k-1}^\circ+\mathbb E\varepsilon_k=0.
}
Taking expectations in the recursion and using stationarity gives
\eqs*{
    \mathbb Ee_k^\circ=(a-\beta)\mathbb Ee_k^\circ=(1-\gamma h)\mathbb Ee_k^\circ.
}
Since $\gamma h>0$, we obtain $\mathbb Ee_k^\circ=0$.

We now expand the second moment. Independence from the past and the zero stationary mean give
\eqs*{
    \mathbb E\zeta_k^2
    &=\mathbb E((H_k-h)^2(e_{k-1}^\circ)^2)
      +2\mathbb E((H_k-h)\varepsilon_ke_{k-1}^\circ)+\sigma^2\\
    &=vV+2\mathbb E((H_k-h)\varepsilon_k)\,\mathbb Ee_{k-1}^\circ+\sigma^2\\
    &=vV+\sigma^2.
}
This calculation allows $H_k$ and $\varepsilon_k$ to be dependent.

For the covariance equation, multiply the recursion by $e_{k-1}^\circ$. For the variance equation, square $e_k^\circ$. The noise cross terms vanish since
$\mathbb E(\zeta_ke_{k-j}^\circ)
    =\mathbb E(e_{k-j}^\circ\mathbb E(\zeta_k\mid\mathcal F_{k-1}))=0$ for $j \ge 1$. 
Hence we obtain
\begin{align*}
    C&=aV-\beta C,\\
    V&=(a^2+\beta^2)V-2a\beta C+\gamma^2(vV+\sigma^2).
\end{align*}
Substitution of $C=aV/(1+\beta)$ yields
\begin{align*}
    \gamma^2\sigma^2
    &=\left(1-\beta^2-a^2+\frac{2a^2\beta}{1+\beta}-\gamma^2v\right)V\\
    &=\left(\frac{1-\beta}{1+\beta}((1+\beta)^2-a^2)-\gamma^2v\right)V\\
    &=\frac{\gamma}{1+\beta}
      \left(2h(1-\beta^2)-\gamma((1-\beta)h^2+(1+\beta)v)\right)V.
\end{align*}
The denominator below is positive by~\eqref{eq:curvature-stability}, and hence
\[
    V=\frac{\gamma\sigma^2(1+\beta)}
    {2h(1-\beta^2)-\gamma((1-\beta)h^2+(1+\beta)v)}.
\]
This proves the stationary-variance formula in~\eqref{eq:curvature-variances}.

\textit{2. Averaged limiting variance.} We express the partial sum as a martingale sum and boundary terms. Rewrite the recursion as
$\gamma he_k^\circ=-\gamma\zeta_{k+1}+\beta(e_k^\circ-e_{k-1}^\circ)-(e_{k+1}^\circ-e_k^\circ)$.
Summing over $k=1,\ldots,n$ gives
\begin{equation}
    h\sum_{k=1}^ne_k^\circ
    =-\sum_{k=1}^n\zeta_{k+1}
    +\frac{\beta(e_n^\circ-e_0^\circ)-(e_{n+1}^\circ-e_1^\circ)}{\gamma}.
    \label{eq:curvature-average-identity}
\end{equation}
After dividing~\eqref{eq:curvature-average-identity} by $\sqrt n$, stationarity gives the boundary bound
\[
    \left\|\frac{\beta(e_n^\circ-e_0^\circ)-(e_{n+1}^\circ-e_1^\circ)}{\gamma\sqrt n}\right\|_2
    \le\frac{2(1+\beta)\sqrt V}{\gamma\sqrt n}\to0.
\]
The martingale differences are orthogonal, so
\[
    \frac1n\mathbb E\left(\sum_{k=1}^n\zeta_{k+1}\right)^2
    =\frac1n\sum_{k=1}^n\mathbb E\zeta_{k+1}^2=vV+\sigma^2.
\]
Combining these two bounds gives
\[
    \lim_{n\to\infty}\frac1n\operatorname{Var}\left(\sum_{k=1}^ne_k^\circ\right)
    =\frac{vV+\sigma^2}{h^2}.
\]

By Proposition~\ref{prop:physical-dependence}, the stationary covariance series is absolutely summable, so the preceding variance limit equals $\Sigma_i^{\rm av}$, giving the second formula in~\eqref{eq:curvature-variances}. Theorem~\ref{thm:fclt} then yields the CLT from every deterministic initialization.
\end{proof}

\subsection{Proof of Theorem~\ref{thm:curvature-acceleration}}
\label{app:rate-variance-proof}

Here $\kappa=L/\mu\ge4$, as in the theorem.

\begin{proof}
For part~\textup{(i)}, the same tuning must give fast forgetting in the $\mu$ direction and mean-square stability in the $L$ direction. The former requires a sufficiently large normalized step $\eta=\gamma/(1-\beta)$, while random curvature in the latter gives an upper bound on this step. Combining these requirements bounds $1-q$ as in~\eqref{eq:curvature-noise-necessity}. The same lower bound on $\eta$ also gives the averaged limiting variance bound~\eqref{eq:rate-variance-lower}. The mean lag matrix is
\[
    \bar B_\mu=\begin{pmatrix}1+\beta-\gamma\mu&-\beta\\1&0\end{pmatrix}.
\]
Independence of the innovations and Jensen's inequality give
\[
    |\bar B_\mu^{\,k}u|^2
    =|\mathbb E(B_{\mu,k}\cdots B_{\mu,1}u)|^2
    \le\mathbb E|B_{\mu,k}\cdots B_{\mu,1}u|^2.
\]
Taking the supremum over $|u|=1$, then $k$th roots, and letting $k\to\infty$ gives
\[
    \rho(\bar B_\mu)^2\le R_\mu\le R_{\max}=q^2,\qquad
    \beta=(\det\mathcal T(\mu,v_\mu))^{1/3}\le R_\mu\le q^2.
\]
Thus $\rho(\bar B_\mu) \le q$ and $\beta \le q^2$.
Here $q>0$. Otherwise $\beta=0$ and $R_\mu=R_L=0$ would imply $\gamma\mu=\gamma L=1$, contrary to $\mu<L$.

Since \(\rho(\bar B_\mu)\le q\), every eigenvalue of \(\bar B_\mu\) has modulus at most \(q\). Hence the roots of its characteristic polynomial
\[
p(z)=z^2-(1+\beta-\gamma\mu)z+\beta
\]
have modulus at most \(q\).
For real roots, both factors in $p(q)$ are nonnegative; for nonreal roots, $p(q)$ is a squared modulus. In either case $p(q)\ge0$, which gives $\gamma\mu\ge(1-q)(1-\beta/q)$.
Also, note that $\frac{1-\beta/q}{1-\beta}-\frac1{1+q} =\frac{q^2-\beta}{q(1-\beta)(1+q)}\ge0$ by $\beta \le q^2$.
Combining these gives the required lower bound on $\eta L$:
\[
    \eta L=\frac{\kappa\gamma\mu}{1-\beta}
    \ge\kappa(1-q)\frac{1-\beta/q}{1-\beta}
    \ge\frac{\kappa(1-q)}{1+q}.
\]
For the upper bound, apply~\eqref{eq:curvature-stability} to the $L$ coordinate, where $v_L=\nu L^2$:
\[
    \eta L<\frac{2}{\nu+(1-\beta)/(1+\beta)}<\frac2\nu.
\]
Combining the lower and upper bounds gives
\[
    \nu\kappa(1-q)<2(1+q)=4-2(1-q),\qquad
    1-q\le\frac4{\nu\kappa+2},
\]
which proves~\eqref{eq:curvature-noise-necessity}.

To prove~\eqref{eq:rate-variance-lower}, use the same lower bound on $\eta L$. If $\sigma_L^2>0$, the variance formula gives
\[
    V_L=\frac{\gamma\sigma_L^2}
    {2L(1-\beta)-\gamma L^2((1-\beta)/(1+\beta)+\nu)}
    \ge\frac{\gamma\sigma_L^2}{2L(1-\beta)}.
\]
Hence
\[
    \frac{\Sigma_L^{\rm av}}{\sigma_L^2/L^2}-1=\frac{\nu L^2V_L}{\sigma_L^2}
    \ge\frac{\nu\eta L}{2}
    \ge\frac{\nu\kappa}{2}\frac{1-q}{1+q}.
\]
Adding one proves~\eqref{eq:rate-variance-lower}. If $q\le1-a/\sqrt\kappa$, then
\[
    \frac{\Sigma_L^{\rm av}}{\sigma_L^2/L^2}-1\ge\frac{\nu\kappa}{2}\frac{a/\sqrt\kappa}{2}
    =\frac a4\nu\sqrt\kappa.
\]

For part~\textup{(ii)}, let $s=\kappa^{-1/2}$ and $\beta=(1-s)^2$. We first explain the learning-rate choice using the fixed-curvature model. Since $\beta>\beta_\star=((1-s)/(1+s))^2$, Corollary~\ref{cor:fixed-beta-optimum} gives $\rho_\star(\beta)=\sqrt\beta=1-s$. By Theorem~\ref{thm:exact-quadratic-rate}, this rate is attained when $|1+\beta-\gamma\lambda|\le2\sqrt\beta$ for every $\lambda\in[\mu,L]$. Using $\mu=s^2L$, this condition becomes
\eqs*{
    \frac{(1-\sqrt\beta)^2}{\mu}\le\gamma\le\frac{(1+\sqrt\beta)^2}{L}
    \quad\Longleftrightarrow\quad
    \frac1L\le\gamma\le\frac{(2-s)^2}{L}.
}
We choose the smallest learning rate in this interval, $\gamma_{\rm SGDM}=1/L$. This also minimizes the curvature-noise term $\gamma^2v_i$ over this interval.

We next show that this tuning gives $q<1-s/2$ under the stated curvature-variance bound. Let $r=1-s/2$ and $t=\lambda_i/L\in[s^2,1]$. Since $q^2=\max_iR_i$, it suffices to prove $R_i<r^2$ for every $i$. By Lemma~\ref{lem:curvature-rate-criterion}, we need to check
\[
    \beta<r^2,\qquad
    \gamma_{\rm SGDM}^2v_i<
    \frac{r^2-\beta}{r^2+\beta}
    \left(\frac{(r^2+\beta)^2}{r^2}-(1+\beta-t)^2\right).
\]

We bound the left side from above and the right side from below. Write $D(t)=(r^2+\beta)^2/r^2-(1+\beta-t)^2$ for the last factor. Since $0<s\le1/2$,
\[
    r^2-\beta=s-\frac{3s^2}{4}\ge\frac{5s}{8},\qquad
    r^2+\beta\le2,\qquad \frac{r^2-\beta}{r^2+\beta}\ge\frac{5s}{16}.
\]
At $t=s^2$, we have $1+\beta-s^2=2(1-s)=2\sqrt\beta$, and therefore
\[
    D(s^2)=\frac{(r^2+\beta)^2-4\beta r^2}{r^2}
    =\frac{(r^2-\beta)^2}{r^2}\ge\frac{25s^2}{64}.
\]
For $s^2\le t\le1$,
\begin{align*}
    D(t)-D(s^2)&=(t-s^2)(2(1+\beta)-t-s^2),\\
    2(1+\beta)-t-s^2&\ge3-4s+s^2\ge1,\\
    D(t)&\ge\frac{25s^2}{64}+t-s^2
    =\frac{25t}{64}+\frac{39}{64}(t-s^2)\ge\frac{25t}{64}.
\end{align*}
Since $\gamma_{\rm SGDM}=1/L$ and $v_i\le\nu\lambda_i^2$,
\begin{align*}
    \gamma_{\rm SGDM}^2v_i \le\nu t^2\le\frac{s}{16}t^2\le\frac{st}{16},\qquad
    \frac{r^2-\beta}{r^2+\beta}D(t)\ge\frac{5s}{16}\frac{25t}{64}>\frac{st}{16}.
\end{align*}
Lemma~\ref{lem:curvature-rate-criterion} gives $R_i<r^2$ for all $i$. Also $\det\mathcal T(\lambda_i,v_i)=\beta^3$ implies $R_i\ge \beta$. Thus
\[
    \beta\le R_{\max}<r^2,\qquad 1-s\le q<1-s/2=\bar q.
\]

For $\sigma_i^2>0$, it remains to bound the averaged limiting variance. By~\eqref{eq:curvature-variances},
\begin{align*}
    \frac{\Sigma_i^{\rm av}}{\sigma_i^2/h_i^2}-1
    &=\frac{v_iV_i}{\sigma_i^2}\\
    &=\frac{v_i\gamma(1+\beta)}
      {2h_i(1-\beta^2)-\gamma((1-\beta)h_i^2+(1+\beta)v_i)}\\
    &=\frac{(v_i/h_i^2)t}
      {(1-\beta)(2-t/(1+\beta))-(v_i/h_i^2)t}.
\end{align*}
Since $t=h_i/L\le1$ and $v_i/h_i^2\le\nu\le s/16$, the denominator satisfies
\[
    (1-\beta)\left(2-\frac{t}{1+\beta}\right)-\frac{v_i}{h_i^2}t
    \ge s(2-s)-\frac{s}{16}
    \ge\frac{3s}{2}-\frac{s}{16}\ge s.
\]
Therefore,
\[
    0\le\frac{\Sigma_i^{\rm av}}{\sigma_i^2/h_i^2}-1
    \le\frac{v_i t}{h_i^2s}\le\frac\nu s=\nu\sqrt\kappa.
\]
This proves~\eqref{eq:rate-variance-upper}.
\end{proof}

\subsection{Proof of Theorem~\ref{thm:rv-finite-horizon}}
\label{app:rv-finite-horizon}

Use $\bar\theta_{b,n}$, $\mathcal S$, and $\mathcal D_0$ from Section~\ref{sec:quadratic-finite-averaging}. Couple the given trajectory to a stationary trajectory using the same future innovations. We bound the two terms in
\[
    \|\bar\theta_{b,n}-\theta^\star\|_2
    \le\|\bar\theta_{b,n}^\circ-\theta^\star\|_2
      +\|\bar\theta_{b,n}-\bar\theta_{b,n}^\circ\|_2.
\]
The first term comes from the stationary covariances. The second is the effect of initialization. The GMC bound above gives the decay rate but does not control how its constant depends on $\kappa$. To track the $\kappa$-dependence of the finite-horizon error, we first prove a coupling bound with a constant independent of $\kappa$. We use this bound for both the stationary covariance tail and the initialization effect. Throughout this subsection, use the tuning~\eqref{eq:curvature-retuning} and write $s=\kappa^{-1/2}$, $\bar q=1-s/2$, and $r=\bar q$.

\begin{lemma}[Uniform coupling bound for the parameter iterates]
\label{lem:rv-transient}
Under~\eqref{eq:curvature-retuning} and $\nu\le s/16$, two synchronously coupled trajectories satisfy, coordinatewise,
\begin{equation}
    \mathbb E|\Delta e_{i,k}|^2
    \le66\bar q^{\,2k}\mathbb E\left(|\Delta e_{i,0}|^2+
    \frac{L}{h_i}|\Delta y_{i,0}|^2\right).
    \label{eq:rv-transient}
\end{equation}
Here $\Delta y_{i,0}$ is the initial momentum difference in the $Q$ basis. Random initial states are allowed if they are square-integrable and independent of the future innovations. The constant is uniform in $\kappa$; the weight $L/h_i$ on the initial momentum is explicit.
\end{lemma}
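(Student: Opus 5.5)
The plan is to prove \eqref{eq:rv-transient} one coordinate at a time with an explicit quadratic Lyapunov function whose constants can be tracked in $s=\kappa^{-1/2}$. I do not intend to reuse the abstract matrix $P_i$ from \eqref{eq:curvature-adapted-metric}, since its conditioning is not controlled in $\kappa$. Fix $i$ and write $t=h_i/L\in[s^2,1]$ and $x_k=\Delta e_{i,k}$. At the tuning \eqref{eq:curvature-retuning} the coupled difference satisfies
\[
x_k=(a-\xi_k)x_{k-1}-\beta x_{k-2},
\]
with $a=1+\beta-t$, $\xi_k=\gamma(H_{i,k}-h_i)$ centered and independent of the past, and $\mathbb E\xi_k^2=w\le\nu t^2\le st^2/16$. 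For the mean recursion, the proof of Theorem~\ref{thm:curvature-acceleration}(ii) gives $|a|\le2\sqrt\beta$ on $[s^2,1]$. This suggests the classical heavy-ball invariant
\[
Q(x_k,x_{k-1})=x_k^2-ax_kx_{k-1}+\beta x_{k-1}^2.
\]
A two-line substitution shows $Q$ is multiplied exactly by $\beta$ under the deterministic recursion.

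\emph{Step 1: one-step drift.} With the random coefficient, the cross terms vanish because $\mathbb E\xi_k=0$. The only extra contribution is
\[
\mathbb E[\xi_k^2x_{k-1}^2]\ (1+\text{bounded cross factor}),
\]
so that $\mathbb E[Q_k\mid\mathcal F_{k-1}]\le\beta Q_{k-1}+Cwx_{k-1}^2$. Since $\bar q^{\,2}-\beta\ge5s/8$ and $w\le st^2/16$, it suffices to have a form $W$ satisfying three properties:
\[
W(x_k,x_{k-1})\ \ge\ c\,t\,x_{k-1}^2,\qquad W\ \ge\ c'x_k^2,
\]
and growing under the mean dynamics by at most $\beta+O(s^2)$.

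\emph{Step 2: the main obstacle.} The form $Q$ degenerates at the endpoint $t=s^2$, where $a=2\sqrt\beta$. There,
\[
Q=(x_k-\sqrt\beta x_{k-1})^2,
\]
which gives no control of $x_{k-1}^2$. Away from the endpoint, $\beta-a^2/4\asymp t-s^2$, which again is not $\gtrsim t$. I would first try the regularized form
\[
W=Q+c_0\,s\,t\,x_{k-1}^2
\]
with a small absolute constant $c_0$. The added term grows under one step at a rate governed by $x_k^2$, and I would bound that by $Q$ plus $x_{k-1}^2$ terms. The available margin $\bar q^{\,2}-\beta\approx s$ should absorb both this growth and the noise term $Cwx_{k-1}^2\le C(st^2/16)x_{k-1}^2$. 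If this fails near $t\approx s^2$, where the mean matrix is a nearly defective Jordan block, the fallback is to split into two cases. For $t\ge4s^2$, use $Q$ directly. For $t<4s^2$, work in the Jordan basis $(x_k-\sqrt\beta x_{k-1},\,x_{k-1})$ with a weight of order $s$ on the nilpotent part. That rescaling costs only $O(1)$ constants, because the rate $\sqrt\beta=1-s$ leaves a margin of order $s$ against $\bar q=1-s/2$.

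\emph{Step 3: iteration and initial conversion.} Once $\mathbb E W_k\le\bar q^{\,2}\mathbb E W_{k-1}$ is in hand, conditioning and iteration give $\mathbb E W_k\le\bar q^{\,2k}\mathbb E W_0$, also for square-integrable random initial states independent of the future innovations. It then remains to check $x_k^2\lesssim W_k$ with an absolute constant, from the completed-square form of $Q$ and $\beta\ge1/4$. For the initial state, substitute $x_{-1}=x_0-\Delta y_{i,0}$ and write
\[
W_0=(1-a+\beta)x_0^2+(a-2\beta)x_0\Delta y_{i,0}+\beta\Delta y_{i,0}^2+\text{(regularizer)}.
\]
Here $1-a+\beta=t$ and $a-2\beta=1-\beta-t=O(s)$. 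Young's inequality then bounds $W_0$ by an absolute multiple of
\[
x_0^2+\Delta y_{i,0}^2+\frac{s^2}{t}\Delta y_{i,0}^2\ \le\ C\Bigl(x_0^2+\frac{L}{h_i}\Delta y_{i,0}^2\Bigr).
\]
This bound is where the explicit weight $L/h_i$ enters. Tracking the numerical constants through Steps 1--3 should produce the stated $66$.
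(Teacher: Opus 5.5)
Your Lyapunov strategy is close in spirit to the paper's: its matrix $P_r$ reduces to your invariant $Q$ when $r^2=\beta$. However, Step~3 contains a genuine error, and it sits exactly where uniformity in $\kappa$ is decided. The comparison $x_k^2\lesssim W_k$ with an absolute constant is false. Completing the square the other way gives $Q=\beta\bigl(x_{k-1}-\tfrac{a}{2\beta}x_k\bigr)^2+\tfrac{\delta}{\beta}x_k^2$, with $\delta=\beta-a^2/4=\tfrac{t-s^2}{2}\bigl(2\sqrt\beta-\tfrac{t-s^2}{2}\bigr)\le t$. So $Q$ controls $x_k^2$ only with the sharp constant $\delta/\beta=O(t)$, which vanishes at $t=s^2$; adding $\rho\,x_{k-1}^2$ changes this constant to $(\delta+\rho)/(\beta+\rho)$. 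Your bound on $W_0$ also discards the factor $t$ in $Q_0=t\,x_0^2+(1-\beta-t)x_0\Delta y_{i,0}+\beta\Delta y_{i,0}^2$. (Note that $1-\beta-t=-\beta$ at $t=1$, so this coefficient is $O(s+t)$, not $O(s)$.) Since $s^2/t\le1$, your displayed bound is really $W_0\lesssim x_0^2+\Delta y_{i,0}^2$. If both comparisons held, you would conclude $\mathbb E x_k^2\lesssim\bar q^{\,2k}(x_0^2+\Delta y_{i,0}^2)$, and this is false. Take $t=s^2$, $v_i=0$ and $x_0=0$: the recursion is critically damped and $x_k=\Delta y_{i,0}\,k(1-s)^{k+1}$. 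At $k\approx1/s$ this gives $x_k^2/\bar q^{\,2k}\gtrsim\Delta y_{i,0}^2/s^2=(L/h_i)\Delta y_{i,0}^2$. So the weight $L/h_i$ is genuinely needed, and it cannot come from the Young term.

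The size of your regularizer is a second, independent problem. Correct the bookkeeping to $W_k\ge\frac{\delta+\rho}{\beta+\rho}x_k^2$ and $W_0\lesssim t\,x_0^2+\Delta y_{i,0}^2$. Even then, the choice $\rho=c_0st$ gives $\delta+\rho=c_0s^3$ at $t=s^2$. Hence you only get $\mathbb E x_k^2\lesssim\bar q^{\,2k}\bigl(x_0^2/(c_0s)+\Delta y_{i,0}^2/(c_0st)\bigr)$, which is off by a factor $\sqrt\kappa$. What you need is a form whose determinant, with unit $(1,1)$ entry, is $\gtrsim t$ uniformly on $[s^2,1]$. Near the endpoint this means adding a term of order $s^2x_{k-1}^2$, which is the scale your Jordan-basis fallback points to. For $t\ge4s^2$, plain $Q$ already has $\delta\ge3t/16$. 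The correct chain is then $W_k\ge c\,t\,x_k^2$ and $W_0\le C(t\,x_0^2+\Delta y_{i,0}^2)$, followed by division by $t$; this division is what produces $L/h_i$.

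The paper achieves this in one stroke. It sets $(P_r)_{11}=1$, $(P_r)_{12}=-a\beta/(r^2+\beta)$ and $(P_r)_{22}=\beta^2/r^2$, so that $r^2P_r-\mathbb E(B_{i,k}^\top P_rB_{i,k})$ has only a $(1,1)$ entry. That entry is nonnegative by the scalar inequality from the proof of Theorem~\ref{thm:curvature-acceleration}(ii). Then $\det P_r=\beta^2D(t)/(r^2+\beta)^2$ with $D(t)\ge25t/64$ yields $W\ge\tfrac{25}{256}t\,x_k^2$ and $W_0\le6t\,x_0^2+2\Delta y_{i,0}^2$. The constant $66\ge\tfrac{256}{25}\cdot6$ comes from these specific numbers, so a different form will not automatically reproduce it.
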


\begin{proof}
Fix $i$ and put $t=h_i/L$, $a=1+\beta-t$, and $w=\gamma^2v_i$. Use the lag update $B_{i,k}$ from Appendix~\ref{app:curvature-rate}. We choose a squared norm $u^\top P_r u$ that contracts by $r^2$, then compare it with the parameter difference.

Set $(P_r)_{11}=1$. We reduce contraction to one scalar inequality by setting the off-diagonal and lower-right entries of $r^2P_r-\mathbb E(B_{i,k}^\top P_rB_{i,k})$ to zero:
\[
    (r^2+\beta)(P_r)_{12}+a\beta=0,\qquad r^2(P_r)_{22}-\beta^2=0.
\]
This gives
\[
    P_r=\begin{pmatrix}1&-\dfrac{a\beta}{r^2+\beta}\\[2mm]-\dfrac{a\beta}{r^2+\beta}&\dfrac{\beta^2}{r^2}\end{pmatrix}.
\]
Recall $D(t)=(r^2+\beta)^2/r^2-a^2$ from Appendix~\ref{app:rate-variance-proof}. Direct multiplication gives
\begin{align*}
    r^2P_r-\mathbb E(B_{i,k}^\top P_rB_{i,k})
    &=\left(r^2-a^2-w+\frac{2a^2\beta}{r^2+\beta}-\frac{\beta^2}{r^2}\right)
      \begin{pmatrix}1&0\\0&0\end{pmatrix}\\
    &=\left(\frac{r^2-\beta}{r^2+\beta}D(t)-w\right)
      \begin{pmatrix}1&0\\0&0\end{pmatrix}\succeq0.
\end{align*}
The last step uses the bound on $w$ proved in Appendix~\ref{app:rate-variance-proof}. That proof also gives $\beta<r^2$ and $D(t)\ge25t/64$, so
\[
    \det P_r=\frac{\beta^2D(t)}{(r^2+\beta)^2}>0,\qquad P_r\succ0.
\]
Iterating the conditional contraction yields
\begin{align*}
    \mathbb E\left((\Delta e_{i,k},\Delta e_{i,k-1})P_r
       \begin{pmatrix}\Delta e_{i,k}\\\Delta e_{i,k-1}\end{pmatrix}\right)
       \le r^{2k}\mathbb E\left((\Delta e_{i,0},\Delta e_{i,-1})P_r
       \begin{pmatrix}\Delta e_{i,0}\\\Delta e_{i,-1}\end{pmatrix}\right).
\end{align*}

For parameter difference $x$ and momentum difference $y$, the lag difference is $(x,x-y)^\top$. We need a lower bound in $t x^2$ at time $k$ and an upper bound in $t x^2+y^2$ at time zero. Both constants must be independent of $\kappa$.

For the lower bound, complete the square:
\begin{align*}
    (x,x-y)P_r\begin{pmatrix}x\\x-y\end{pmatrix}
    &=\frac{\beta^2}{r^2}\left(x-y-\frac{ar^2}{\beta(r^2+\beta)}x\right)^2
      +\frac{r^2D(t)}{(r^2+\beta)^2}x^2
      \ge\frac{25}{256}t x^2,
\end{align*}
where $r^2/(r^2+\beta)^2\ge1/4$ and $D(t)\ge25t/64$.

For the upper bound, first write
\[
    \begin{pmatrix}x\\x-y\end{pmatrix}
    =x\begin{pmatrix}1\\1\end{pmatrix}
      -y\begin{pmatrix}0\\1\end{pmatrix}.
\]
Since $P_r\succ0$,
\[
    (x,x-y)P_r\begin{pmatrix}x\\x-y\end{pmatrix}
    \le2x^2(1,1)P_r(1,1)^\top+2y^2(P_r)_{22}.
\]
It is enough to bound $(1,1)P_r(1,1)^\top$ by $3t$ and $(P_r)_{22}$ by $1$. The second bound follows from $(P_r)_{22}=\beta^2/r^2\le1$. For the first, use $\beta=(1-s)^2$ and
\[
    1-\frac{\beta}{\bar q}=\frac{s(3/2-s)}{1-s/2}\le\frac32s,\qquad
    \bar q-\sqrt\beta=\frac s2,\qquad \bar q\ge\frac34.
\]
Since $a=2\sqrt\beta-(t-s^2)$,
\begin{align*}
    (1,1)P_r(1,1)^\top
    &=1-\frac{4\beta^{3/2}}{r^2+\beta}+\frac{\beta^2}{r^2}
      +\frac{2\beta}{r^2+\beta}(t-s^2)\\
    &=\left(1-\frac{\beta}{\bar q}\right)^2
      +\frac{2\beta}{r^2+\beta}\frac{(\bar q-\sqrt\beta)^2}{\bar q}
      +\frac{2\beta}{r^2+\beta}(t-s^2)\\
    &\le\left(\frac94+\frac13\right)s^2+(t-s^2)\le3t.
\end{align*}
Substituting these two bounds gives
\[
    (x,x-y)P_r\begin{pmatrix}x\\x-y\end{pmatrix}\le6t x^2+2y^2.
\]
Now combine the norm comparisons with the iterated contraction. Since $\Delta e_{i,-1}=\Delta e_{i,0}-\Delta y_{i,0}$,
\begin{align*}
    \mathbb E|\Delta e_{i,k}|^2
    &\le\frac{256}{25t}r^{2k}
      \mathbb E\left(6t|\Delta e_{i,0}|^2+2|\Delta y_{i,0}|^2\right)\\
    &\le\frac{1536}{25}r^{2k}
      \mathbb E\left(|\Delta e_{i,0}|^2+\frac{|\Delta y_{i,0}|^2}{t}\right)\\
    &\le66\bar q^{\,2k}
      \mathbb E\left(|\Delta e_{i,0}|^2+\frac{L}{h_i}|\Delta y_{i,0}|^2\right).
\end{align*}
This proves~\eqref{eq:rv-transient}.
\end{proof}

\begin{proof}[Proof of Theorem~\ref{thm:rv-finite-horizon}]
\textit{1. Stationary averaging error.}
We first compare the variance of the stationary average with $\mathcal S/n$. Fix a coordinate and write $t=h/L$, $a=1+\beta-t$, and $C_j=\mathbb E(e_0^\circ e_j^\circ)$. Conditioning on the past gives
\[
    C_0=V,\qquad C_1=\frac{aV}{1+\beta},\qquad C_{j+1}=aC_j-\beta C_{j-1}\quad(j\ge1).
\]
GMC gives geometric covariance decay, so the weighted sums below converge absolutely. Summing the recurrence gives
\begin{align*}
    \sum_{j\ge1}C_j-C_1
    &=a\sum_{j\ge1}C_j-\beta\left(V+\sum_{j\ge1}C_j\right),\\
    t\sum_{j\ge1}C_j&=C_1-\beta V.
\end{align*}
Hence
\begin{align*}
    \Sigma^{\rm av}
    &=V+2\sum_{j\ge1}C_j
      =V+\frac2t\left(\frac{aV}{1+\beta}-\beta V\right)\\
    &=V(1-\beta)\left(\frac2t-\frac1{1+\beta}\right)
      \ge\frac{sV}{t}.
\end{align*}
The last inequality uses $1-\beta\ge s$ and $2-t/(1+\beta)\ge1$.

The finite-average error depends on a weighted covariance tail. Write $J_n=\sum_{\ell\ge1}\ell C_{n+\ell}$. Then
\begin{align*}
    \mathbb E\left(\frac1n\sum_{k=1}^ne_k^\circ\right)^2
    &=\frac Vn+\frac2{n^2}\sum_{j=1}^{n-1}(n-j)C_j\\
    &=\frac{\Sigma^{\rm av}}n
      -\frac2{n^2}\left(\sum_{j=1}^{n-1}jC_j+n\sum_{j=n}^\infty C_j\right)\\
    &=\frac{\Sigma^{\rm av}}n-\frac{2(J_0-J_n)}{n^2}.
\end{align*}
We therefore need a bound on $J_0-J_n$. Multiplying the covariance recurrence by $j$ and summing gives
\begin{align*}
    J_0-\sum_{j\ge1}C_j
    &=aJ_0-\beta\left(V+J_0+\sum_{j\ge1}C_j\right),\\
    tJ_0&=(1-\beta)\sum_{j\ge1}C_j-\beta V
      =\frac{(1-\beta)\Sigma^{\rm av}-(1+\beta)V}{2}.
\end{align*}
Also,
\[
    J_1-J_0=-\frac{\Sigma^{\rm av}-V}{2},\qquad
    J_{n+1}=aJ_n-\beta J_{n-1}\quad(n\ge1).
\]
This is the same deterministic recursion covered by Lemma~\ref{lem:rv-transient}. We first bound its initial values. Since $s^2\le t\le1$, $1-\beta\le2s$, and $V\le t\Sigma^{\rm av}/s$,
\begin{align*}
    |J_0|
    &\le\frac{\max((1-\beta)\Sigma^{\rm av},(1+\beta)V)}{2t}
      \le\max\left(\frac{s\Sigma^{\rm av}}t,\frac Vt\right)
      \le\frac{\Sigma^{\rm av}}s,\\
    \frac{|J_1-J_0|}{\sqrt t}
    &\le\frac{\max(\Sigma^{\rm av},V)}{2\sqrt t}
      \le\frac{\Sigma^{\rm av}}{2s},\\
    |J_1|&\le|J_0|+|J_1-J_0|
      \le\frac{3\Sigma^{\rm av}}{2s}.
\end{align*}
Apply Lemma~\ref{lem:rv-transient} with $v=0$, starting at time $1$, with parameter $J_1$ and momentum $J_1-J_0$. For $n\ge1$,
\[
    |J_n|^2\le66r^{2(n-1)}\left(J_1^2+\frac{(J_1-J_0)^2}{t}\right)
    \le165\left(\frac{\Sigma^{\rm av}}s\right)^2.
\]
Thus
\[
    2|J_0-J_n|\le2(|J_0|+|J_n|)
    \le2(1+\sqrt{165})\frac{\Sigma^{\rm av}}s
    \le\frac{28\Sigma^{\rm av}}s.
\]
Insert this into the finite-average identity and sum over coordinates:
\begin{equation}
    \left|\mathbb E|\bar\theta_{b,n}^\circ-\theta^\star|^2-\frac{\mathcal S}{n}\right|
    \le\sum_i\frac{28\Sigma_i^{\rm av}}{sn^2}
    =\frac{28\sqrt\kappa\,\mathcal S}{n^2}.
    \label{eq:rv-stationary-remainder-explicit}
\end{equation}

\textit{2. Initialization error.}
We now bound the difference between the two averages. First compute the initial difference needed in Lemma~\ref{lem:rv-transient}. With $t_i=h_i/L$, the stationary momentum satisfies
\[
    \mathbb E(y_{i,0}^\circ)^2
    =2V_i\left(1-\frac{1+\beta-t_i}{1+\beta}\right)
    =\frac{2t_iV_i}{1+\beta}.
\]
Here $y_{i,0}^\circ=e_{i,0}^\circ-e_{i,-1}^\circ$ is the momentum in the $Q$ basis. The stationary error and momentum have mean zero. Therefore,
{
\begin{align*}
    \sum_i\mathbb E\left(|\Delta e_{i,0}|^2+\frac{|\Delta y_{i,0}|^2}{t_i}\right)
    &=|\theta_0-\theta^\star|^2+\sum_iV_i
      +\sum_i\frac{\bigl((Q^\top y_0)_i\bigr)^2}{t_i}+\sum_i\frac{\mathbb E(y_{i,0}^\circ)^2}{t_i}\\
    &=|\theta_0-\theta^\star|^2+\sum_i\frac{L}{h_i}\bigl((Q^\top y_0)_i\bigr)^2
      +\left(1+\frac2{1+\beta}\right)\sum_iV_i
      =\mathcal D_0.
\end{align*}
}
Lemma~\ref{lem:rv-transient} and Minkowski's inequality give
\begin{align*}
    \|\bar\theta_{b,n}-\bar\theta_{b,n}^\circ\|_2
    &\le\frac1n\sum_{k=b+1}^{b+n}\|\theta_k-\theta_k^\circ\|_2\\
    &\le\frac{\sqrt{66\mathcal D_0}}n\sum_{k=b+1}^{b+n}\bar q^k
      =\sqrt{66\mathcal D_0}\,\bar q^{b+1}
        \frac{1-\bar q^n}{n(1-\bar q)}.
\end{align*}
Since $1-\bar q=s/2$ and $\bar q\le e^{-s/2}$,
\[
    \|\bar\theta_{b,n}-\bar\theta_{b,n}^\circ\|_2^2
    \le\frac{66\mathcal D_0}{n^2(s/2)^2}e^{-s(b+1)}
    =\frac{264\kappa\mathcal D_0}{n^2}
      \exp\left(-\frac{b+1}{\sqrt\kappa}\right).
\]

\textit{3. Combine the two bounds.}
Combining the two bounds by the triangle inequality gives
\[
    \|\bar\theta_{b,n}-\theta^\star\|_2
    \le\left(\frac{\mathcal S}{n}\left(1+\frac{28\sqrt\kappa}{n}\right)\right)^{1/2}
      +\frac{\sqrt{264}\sqrt{\kappa\mathcal D_0}}n
       \exp\left(-\frac{b+1}{2\sqrt\kappa}\right).
\]
Since $\sqrt{264}<28$, the constant $C=28$ suffices for both bounds in Theorem~\ref{thm:rv-finite-horizon}.

When $y_0=0$, the stationary variance bound above gives
\[
    \Sigma_i^{\rm av}\ge\frac{sV_i}{t_i}\ge sV_i,\qquad
    \sum_iV_i\le\frac{\mathcal S}s.
\]
Therefore,
\[
    \mathcal D_0=|\theta_0-\theta^\star|^2
      +\left(1+\frac2{1+\beta}\right)\sum_iV_i
    \le|\theta_0-\theta^\star|^2+3\sqrt\kappa\,\mathcal S.
\]

For $\mathcal S>0$ and $0<\epsilon<1$, a sufficient choice for the relative mean-square error bound below is
\begin{equation}
    n\ge\frac{28\sqrt\kappa}{\epsilon},\qquad
    b+1\ge\sqrt\kappa\log_+\left(\frac{264\kappa\mathcal D_0}{\epsilon^2 n\mathcal S}\right),
    \label{eq:rv-variance-dominance}
\end{equation}
where $\log_+x=\max(0,\log x)$. These conditions give
\[
    \frac{28\sqrt\kappa\,\mathcal S}{n^2}\le\epsilon\frac{\mathcal S}{n},\qquad
    \frac{264\kappa\mathcal D_0}{n^2}e^{-(b+1)/\sqrt\kappa}
    \le\epsilon^2\frac{\mathcal S}{n}.
\]
Hence the same triangle inequality yields
\[
    \mathbb E|\bar\theta_{b,n}-\theta^\star|^2
    \le(\sqrt{1+\epsilon}+\epsilon)^2\frac{\mathcal S}{n}.
\]
\end{proof}

\subsection{Coordinate-sampled least squares and iid minibatches}
\label{app:coordinate-batching}
Let $d\ge2$ and let $Q$ be orthogonal. Fix $\lambda_i\in[\mu,L]$ with both endpoints attained. Independently at each sample, draw $I$ uniformly from $\{1,\ldots,d\}$ and set
\[
    a=\sqrt{d\lambda_I}Qe_I,\qquad Y=a^\top\theta^\star+\xi,\qquad \mathbb E\xi=0,\quad \mathbb E\xi^2=\tau^2<\infty,
\]
where $\xi$ is independent of $I$. The sample loss has gradient
\[
    \nabla_\theta\frac{(a^\top\theta-Y)^2}{2}
    =a(a^\top\theta-Y)
    =aa^\top(\theta-\theta^\star)-a\xi.
\]
At each iteration average $B$ iid samples, drawn with replacement, independently of the past. Then
\[
    H=\frac1B\sum_{b=1}^Ba_ba_b^\top,\qquad \varepsilon=-\frac1B\sum_{b=1}^Ba_b\xi_b.
\]
Let $N_i=\sum_{b=1}^B\mathbf1_{\{I_b=i\}}$. The multinomial count moments are
\[
    \mathbb EN_i=\frac Bd,\qquad
    \operatorname{Var}(N_i)=\frac Bd\left(1-\frac1d\right),\qquad
    \operatorname{Cov}(N_i,N_j)=-\frac B{d^2}\quad(i\ne j).
\]
Since $aa^\top=d\lambda_IQe_Ie_I^\top Q^\top$,
\[
    H=Q\operatorname{diag}(H_1,\ldots,H_d)Q^\top,\qquad H_i=\frac{d\lambda_iN_i}{B}.
\]
Consequently,
\begin{align*}
    \mathbb EH_i&=\frac{d\lambda_i}{B}\frac Bd=\lambda_i,\\
    \operatorname{Var}(H_i)&=\frac{d^2\lambda_i^2}{B^2}\frac Bd\left(1-\frac1d\right)
    =\frac{d-1}{B}\lambda_i^2,\\
    \operatorname{Cov}(H_i,H_j)&=\frac{d^2\lambda_i\lambda_j}{B^2}\left(-\frac B{d^2}\right)
    =-\frac{\lambda_i\lambda_j}{B}\quad(i\ne j).
\end{align*}
For the additive noise,
\[
    (Q^\top\varepsilon)_i=-\frac{\sqrt{d\lambda_i}}{B}
    \sum_{b=1}^B\mathbf1_{\{I_b=i\}}\xi_b.
\]
Independence of the samples and $\mathbb E\xi_b=0$ give
\begin{align*}
    \mathbb E(Q^\top\varepsilon)_i&=0,\\
    \mathbb E((Q^\top\varepsilon)_i^2)
    &=\frac{d\lambda_i}{B^2}\sum_{b=1}^B
      \mathbb E(\mathbf1_{\{I_b=i\}}\xi_b^2)
    =\frac{d\lambda_i}{B^2}\frac{B\tau^2}{d}
    =\frac{\tau^2\lambda_i}{B}.
\end{align*}
Thus this example satisfies~\eqref{eq:commuting-curvature-model}, with $v_i=(d-1)\lambda_i^2/B$ and $\sigma_i^2=\tau^2\lambda_i/B$. Curvature and additive noise may be dependent through the selected coordinates. Their independence is not needed in Proposition~\ref{prop:curvature-variance}.

\section{Supplementary numerical material}
\label{app:numerical}
\label{app:temporary-numerical}

Figure~\ref{fig:numerical-appendix} and Tables~\ref{tab:gmc}--\ref{tab:curvature} provide supplementary material for Figure~\ref{fig:numerical-report}.

\begingroup
\renewcommand{\thefigure}{S\arabic{figure}}
\renewcommand{\theHfigure}{S\arabic{figure}}
\setcounter{figure}{0}
\begin{figure}[!htbp]
    \centering
    \includegraphics[width= 0.8\textwidth]{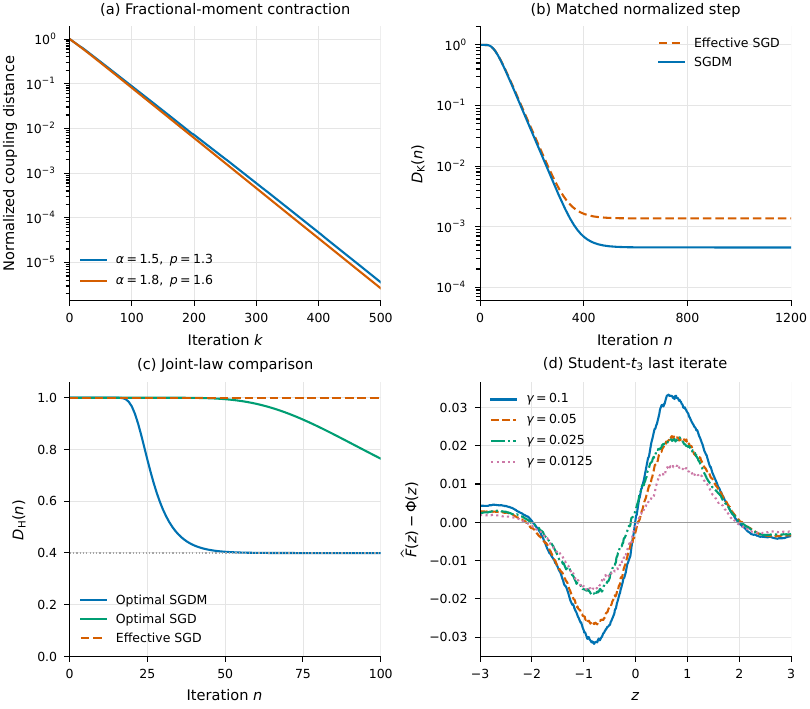}
    \caption{Supplementary diagnostics. (a) Normalized fractional-moment coupling distances for two heavy-tailed nonlinear regimes. (b) First-coordinate Kolmogorov distance to the Gaussian reference at matched normalized steps, with both methods stable. (c) Joint-law half-space distance in the quadratic accelerated setting, including unstable effective SGD. (d) Empirical CDF discrepancy for the scaled last iterate in the linear Student-$t_3$ model.}
    \label{fig:numerical-appendix}
\end{figure}
\endgroup

\subsection[Nonlinear stationarity and GMC]{Nonlinear stationarity and GMC}
\label{app:numerical-gmc}

Figure~\ref{fig:numerical-appendix}(a) and Table~\ref{tab:gmc} record the coupling diagnostic and verification of the sufficient GMC conditions corresponding to the stationary illustration in Figure~\ref{fig:numerical-report}(a).
We use the nonlinear model of Section~\ref{sec:numerical-stationarity-averaging}, with $R=0.15T_\alpha$, $\varepsilon\sim N(0,1)$, $(\beta,\gamma)=(0.5,0.01)$, and $y_0=0$. The two regimes are $(\alpha,p)=(1.5,1.3)$ and $(1.8,1.6)$, each with 50,000 paths and initial values $\theta_0=0,1,2,4$. In both regimes, curvature has infinite variance and a finite $p$th moment. Since $G'(\theta)=1+\cos\theta/4\ge3/4$, we use $\mu=3/4$ and the Lipschitz bound
\[
 L_p=\left(\mathbb E\left(|1+R|+\frac14\right)^p\right)^{1/p}.
\]

The coupling distance uses $\dist_{\mathrm{sf},p}$ in~\eqref{eq:slow-fast-metric}, with the weight $w=w_p$ reported in Table~\ref{tab:gmc}. Write $\widehat D_{k,p}$ for its empirical $L^p$ norm between paths driven by the same observations, and $D_{0,p}$ for the initial distance. Within each regime, all starts share the same observations; the two regimes are independent. For the coupling calculation, $Z_k^{(j)}$ and $Z_k^{\prime(j)}$ start at $(\theta_0,y_0)=(0,0)$ and $(2,0)$, respectively, and
\[
 \widehat D_{k,p}=\left(\frac1M\sum_{j=1}^{M}\dist_{\mathrm{sf},p}\left(Z_k^{(j)},Z_k^{\prime(j)}\right)^p\right)^{1/p},\qquad M=50000.
\]
Thus $D_{0,p}=2$. Panel (a) plots $\widehat D_{k,p}/D_{0,p}$ for $0\le k\le500$. Both rows satisfy the sufficient condition~\eqref{eq:perron-condition} and have $r<1$. The curves illustrate the corresponding coupling decay.
\begin{table}[!htbp]
\caption{Verification of the sufficient conditions in Theorem~\ref{thm:perron-gmc} for $g=(1+R)\theta+\tfrac14\sin\theta+\varepsilon$, $R=0.15\,T_\alpha$, $(\beta,\gamma)=(0.5,0.01)$, and the observed coupling distance at $k=500$.}
\label{tab:gmc}
\begin{center}
\begin{tabular}{lrrrr}
\toprule
$(\alpha,p)$ & $L_p$ & $r$ & $w_p$ & $\widehat D_{500,p}/D_{0,p}$ \\
\midrule
$(1.5,1.3)$ & 1.4636 & 0.99857 & 0.1728 & $3.61\times10^{-6}$ \\
$(1.8,1.6)$ & 1.3953 & 0.98846 & 0.4880 & $2.65\times10^{-6}$ \\
\bottomrule
\end{tabular}
\end{center}
\end{table}

\subsection[Averaged SGDM invariance principle]{Averaged SGDM invariance principle}
\label{app:numerical-invariance}

Table~\ref{tab:asgdm} reports the maximum-norm distances in Figure~\ref{fig:numerical-report}(b).
Here $R=0.5T_3$, $(\beta,\gamma)=(0.5,0.05)$, and the stationary mean is zero by symmetry. We use 20,000 paths from each of $\theta_0=-3,1,3$, with $y_0=0$, at $n=1000,3000,10000,30000,100000$. The recorded bound $L_2=1.5365$ gives $\gamma_{\rm MS}=0.1059$ in~\eqref{eq:all-beta-step}, so the chosen step satisfies the explicit second-moment condition.

An independent reference simulation uses 20,000 chains after a burn-in of 1000 steps. The reference chains start at $(\theta_0,y_0)=(0,0)$. Using the known stationary mean zero, we compute
\[
 \widehat\Sigma_T=\frac1M\sum_{j=1}^{M}\frac{S_{T,j}^2}{T},\qquad S_{T,j}=\sum_{k=1}^{T}\theta_{B+k}^{(j)},\qquad M=20000,\quad B=1000.
\]
No sample mean is subtracted. Reference lengths 25,000, 50,000, and 100,000 give estimates of the averaged limiting variance $0.6605$, $0.6688$, and $0.6599$, respectively. These estimates use different lengths of the same reference trajectories after burn-in. We use the last estimate, $\widehat\Sigma^{\rm av}=0.6599$, for all starts and horizons in the normalization of $\mathbb S_n$ defined in Section~\ref{sec:numerical-stationarity-averaging}. The test paths have no burn-in. Different starts use independent paths, while different horizons for a given start use the same paths at different horizons. For each path, its maximum is taken over the partial sums through time $n$, and the empirical CDF is compared with that of $\sup_{0\le s\le1}|B(s)|$. The Brownian CDF uses a fixed truncation to 401 terms, $-200\le k\le200$, in its normal-CDF series. A comparison with 10,001 terms changes the CDF by at most $2\times10^{-16}$ over the range of the observed maxima. Empirical Kolmogorov distances use both sides of the empirical CDF at every sorted sample value.

The simultaneous 95\% empirical-CDF error bound from the Dvoretzky--Kiefer--Wolfowitz (DKW) inequality and a union bound is $0.0126$ over the fifteen choices of initial value and horizon. It controls CDF sampling error and excludes uncertainty in $\widehat\Sigma^{\rm av}$. A recalculation with the same seed gives a sample standard error of $0.0066$ for the reference estimate at $T=100000$, computed as the sample standard deviation of $S_{T,j}^2/T$ divided by $\sqrt M$. Fourth moments are infinite, so this sample value does not give a confidence bound. A 1\% change in the reference variance has a first-order effect of at most about $0.005$ on the Brownian comparison, common to all fifteen settings. The final distances are below the DKW bound, so the calculation does not resolve their ordering. This comparison illustrates a maximum-norm consequence of the invariance principle.
\begin{table}[!htbp]
\caption{$d_K(\sup_{s\le1}|\mathbb S_n(s)|,\sup_{s\le1}|B(s)|)$ for averaged SGDM, $R=0.5\,T_3$, $(\beta,\gamma)=(0.5,0.05)$, 20,000 paths per start, $\widehat\Sigma^{\rm av}=0.6599$. Simultaneous 95\% DKW bound 0.0126.}
\label{tab:asgdm}
\begin{center}
\begin{tabular}{lrrrrr}
\toprule
$\theta_0$ & $n=1{,}000$ & $n=3{,}000$ & $n=10{,}000$ & $n=30{,}000$ & $n=100{,}000$ \\
\midrule
$-3$ & 0.284 & 0.094 & 0.023 & 0.010 & 0.006 \\
$1$ & 0.047 & 0.031 & 0.016 & 0.009 & 0.009 \\
$3$ & 0.281 & 0.099 & 0.030 & 0.014 & 0.005 \\
\bottomrule
\end{tabular}
\end{center}
\end{table}

\subsection[Quadratic transient comparisons]{Quadratic transient comparisons}
\label{app:numerical-quadratic}
\label{app:gaussian-illustration}

Figure~\ref{fig:numerical-appendix}(b)--(c) supplements the slow-coordinate comparison in Figure~\ref{fig:numerical-report}(c).
We use $g(\theta,x_k)=A\theta+\varepsilon_k$, with $A=\diag(\mu,L)$, iid $\varepsilon_k\sim N(0,I_2)$, and $y_0=0$. Each method uses its own normalized learning rate $\eta$ and reference covariance $\eta\Gamma$, where $\Gamma=A^{-1}/2$. Effective SGD means SGD at step $\eta=\gamma/(1-\beta)$ of the SGDM method being compared.

For Figure~\ref{fig:numerical-report}(c), $(\mu,L)=(1,100)$ and $\theta_0=(1,1)^\top/\sqrt2$. We compare optimal SGDM at $(\gamma,\beta)=(4/121,81/121)$ with optimal SGD at step $2/101$; optimal refers to the quadratic contraction factor. The normalized learning rate is $\gamma/(1-\beta)$ for SGDM and the step itself for SGD. With $Z_{n,1}=\theta_{n,1}/\sqrt{\eta\Gamma_{11}}$, define
\[
 D_{\rm K}(n)=\sup_z\left|\mathbb P(Z_{n,1}\le z)-\Phi(z)\right|,
\]
where $\Phi$ is the standard normal CDF. The curves are computed by propagating Gaussian means and covariances.

Panel (b) sets $(\mu,L)=(1,80)$, $\beta=0.5$, $\gamma=0.01125$, and $\theta_0=(1,0)^\top$. SGDM and effective SGD both have $\eta=0.0225$ and are stable by~\eqref{eq:exact-quadratic-stability}. Their slow-coordinate distances $D_{\rm K}(n)$ are close over $0\le n\le1200$.

Panel (c) uses the optimal tunings and initialization of Figure~\ref{fig:numerical-report}(c), and adds effective SGD at step $0.1$. To compare the joint laws, define
{
\[
 D_{\rm H}(n)=\sup_{|u|=1,\,t\in\mathbb R}\left|\mathbb P\left(u^\top(\eta\Gamma)^{-1/2}\theta_n\le t\right)-\Phi(t)\right|.
\]
}
We propagate Gaussian means and covariances and approximate the directional supremum using 721 equally spaced angles in $[0,\pi]$. These calculations use no Monte Carlo sampling. At $n=100$, the distances are $0.3995$ for optimal SGDM and $0.7644$ for optimal SGD. Effective SGD remains near one because its $L$-coordinate multiplier is $1-0.1L=-9$. The SGDM plateau reflects its fixed-step stationary covariance: after scaling by $(\eta\Gamma)^{-1/2}$, this covariance is $\diag(1.01,101)$. Increasing $n$ at these steps does not remove the covariance difference. The small-step limit instead sends $\gamma$ to zero after burn-in.

\subsection[Small-step last-iterate limit]{Small-step last-iterate limit}
\label{app:numerical-last-iterate}
\label{app:non-gaussian-illustration}

Figure~\ref{fig:numerical-appendix}(d) and Table~\ref{tab:student_t} give the linear Student-$t_3$ illustration of the small-step last-iterate limit.
Take $g(\theta,\varepsilon)=\theta+\varepsilon$, with iid $\varepsilon=T_3/\sqrt3$, $\theta_0=1$, $y_0=0$, and $\beta=0.5$. The noise has mean zero and variance one. Here $\theta^\star=0$, $\mu=L_2=S=1$, and $\Gamma=1/2$, so Theorem~\ref{thm:last-iterate-clt} gives a standard normal limit for $Z=\theta_n/\sqrt\gamma$. We use 50,000 independent paths at each of $\gamma=0.1,0.05,0.025,0.0125$, with
\[
 n=\left\lceil\frac{4\log(1/\gamma)}{\gamma}\right\rceil.
\]
This choice satisfies the burn-in condition~\eqref{eq:last-iterate-burnin} as $\gamma\downarrow0$.

Panel (d) plots $\widehat F(z)-\Phi(z)$, where $\widehat F$ is the empirical CDF of $Z$. The four step sizes use independent runs. The plotted curves use 601 points in $[-3,3]$, while the table distances use both sides of the empirical CDF at every sorted sample value. In Table~\ref{tab:student_t}, $\widehat D_0$, $\widehat D_F$, and $\widehat D_{\rm mom}$ are the Kolmogorov distances to $N(0,1)$, $N(0,F)$, and $N(m,v)$, respectively. Here
\[
 F=\frac{3}{3-\gamma},\qquad m=\mathbb EZ,\qquad v=\operatorname{Var}(Z),
\]
with $F$ the exact scaled stationary variance and $(m,v)$ computed from the linear recursion. At these settings, $|m|<1.1\times10^{-12}$ and $v$ agrees with $F$ to fourteen decimal places. The distances to $N(0,F)$ and $N(m,v)$ agree at the displayed precision. They decrease as $\gamma$ decreases; the smallest remains $0.0184$, above the simultaneous 95\% empirical-CDF error bound $0.0071$ for the four settings.
\begin{table}[!htbp]
\caption{Student-$t_3$ last iterate $\theta_n/\sqrt\gamma$, $g=\theta+\varepsilon$, $\varepsilon=T_3/\sqrt3$, $\beta=0.5$, 50,000 paths. Var is the exact finite-$n$ variance. Simultaneous 95\% DKW bound 0.0071.}
\label{tab:student_t}
\begin{center}
\begin{tabular}{rrrrrr}
\toprule
$\gamma$ & $n$ & Var & $\widehat D_0$ & $\widehat D_F$ & $\widehat D_{\rm mom}$ \\
\midrule
0.1 & 93 & 1.0345 & 0.0335 & 0.0370 & 0.0370 \\
0.05 & 240 & 1.0169 & 0.0269 & 0.0289 & 0.0289 \\
0.025 & 591 & 1.0084 & 0.0224 & 0.0234 & 0.0234 \\
0.0125 & 1403 & 1.0042 & 0.0179 & 0.0184 & 0.0184 \\
\bottomrule
\end{tabular}
\end{center}
\end{table}

\subsection[Stochastic-curvature calculations]{Stochastic-curvature calculations}
\label{app:numerical-curvature}

Table~\ref{tab:curvature} compares the exact stationary and averaged limiting variances with Monte Carlo calculations for the model in Figure~\ref{fig:numerical-report}(d).
The model and parameters are those of Section~\ref{sec:numerical-contraction}.
We use $H_{i,k}=\lambda_i(1+\sqrt\nu\,\xi_{i,k})$, with independent Rademacher $\xi_{i,k}$ and additive noise $N(0,I_2)$. Here $(\mu,L)=(0.01,1)$ and $\nu=0.005$, so $\kappa=100$.
\begin{table}[!htbp]
\caption{Exact stationary and averaged limiting variances in the $L$ direction against Monte Carlo (400,000 chains, $n=20000$; standard error in parentheses).}
\label{tab:curvature}
\begin{center}
\begin{tabular}{lrrrrr}
\toprule
$(\gamma,\beta)$ & $1-q$ & $V_L$ & $\widehat V_L$ & $\Sigma^{\rm av}_L$ & $\widehat\Sigma^{\rm av}_L$ \\
\midrule
$(1,0.00)$ & 0.0100 & 1.0050 & 1.0050 & 1.0050 & 1.0077 (0.0023) \\
$(1,0.50)$ & 0.0204 & 1.5113 & 1.5113 & 1.0076 & 1.0085 (0.0023) \\
$(1,0.81)$ & 0.0952 & 3.7033 & 3.7034 & 1.0185 & 1.0169 (0.0023) \\
\bottomrule
\end{tabular}
\end{center}
\end{table}

The grid has 200 logarithmically spaced $\gamma$ values in $[10^{-3},4]$ and 200 linearly spaced $\beta$ values in $[0,0.99]$, including the endpoints. Adding $\gamma=1$ and $\beta=0.81$ gives a $201\times201$ grid. The grid contains 38,618 stable tunings, with forgetting factors computed from~\eqref{eq:curvature-moment-operator} and variances from~\eqref{eq:curvature-variances}. Stability means $q<1$, with no numerical tolerance. Among stable grid points, the numerical lower envelope retains a point only when its variance ratio is strictly smaller than that of every faster point. It contains 4,643 points; no optimization outside the grid is used. The lower boundary in Figure~\ref{fig:numerical-report}(d) interpolates the numerical lower envelope and reference point $(0,1)$. The filled area does not certify that every interior point is attained by a stable tuning. The star marks the explicit tuning~\eqref{eq:curvature-retuning}. At $(\gamma,\beta)=(1,0.81)$, it gives $1-q\simeq0.0952$ and a variance ratio of about $1.0185$.

We use 400,000 chains, a burn-in of 1000 steps, and averaging length $n=20000$ at $(\gamma,\beta)=(1,0),\allowbreak(1,0.5),\allowbreak(1,0.81)$. Each tuning uses an independent run of the $L$ coordinate, starting at $e_0=e_{-1}=0$. With $B=1000$, $M=400000$, and $S_{n,j}=\sum_{k=B+1}^{B+n}e_k^{(j)}$, the estimates are
\[
 \widehat V_L=\frac1{nM}\sum_{k=B+1}^{B+n}\sum_{j=1}^{M}\left(e_k^{(j)}\right)^2,\qquad
 \widehat\Sigma_L^{\rm av}=\frac1M\sum_{j=1}^{M}\frac{S_{n,j}^2}{n}.
\]
Both use the known stationary mean zero without subtracting a sample mean. The Monte Carlo columns estimate $V_L$ and $\Sigma_L^{\rm av}$, with standard errors for the latter in parentheses. These standard errors are the sample standard deviations of $S_{n,j}^2/n$, using divisor $M-1$, divided by $\sqrt M$. The estimates of the averaged limiting variance agree with the formulas at this error scale. The table checks the variance formulas at three tunings; it does not establish optimality of the numerical lower envelope or the finite-averaging error rate.

\end{document}